\newtheorem{lemma}{Lemma}
\newtheorem{remark}{Remark}
\newtheorem{theorem}{Theorem} 
\newtheorem{assumption}{Assumption}
\newtheorem{proposition}{Proposition}
\def\plist@algorithm{Alg.\space}
\newcommand{\argmin}{\operatornamewithlimits{argmin}}
\newcommand{\argmax}{\operatornamewithlimits{argmax}}
\newcommand\ci{\perp\!\!\!\perp} 
\newcolumntype{P}[1]{>{\hspace{0pt}}p{#1}}
\newcommand{\bc}{\color{black}}
\newcommand{\sref}[2]{\hyperref[#2]{#1~\ref{#2}}}
\newcommand{\eqnref}[1]{\hyperref[#1]{Equation~\eqref{#1}}}
\newcommand{\rtext}[1]{\hfill\textcolor{gray}{#1}}
\newcommand{\inx}{\ensuremath{\mathcal{X}}}
\newcommand{\iny}{\ensuremath{\mathcal{Y}}} 
\newcommand{\inz}{\ensuremath{\mathcal{Z}}}  
\newcommand{\inw}{\ensuremath{\mathcal{W}}}
\newcommand{\rr}{\mathbb{R}} 		         % the real numbers
\newcommand{\ep}{\mathbb{E}}                     % the expectation
\newcommand{\dd}{\, \mathrm{d}}
\newcommand{\pto}{\overset{\mathrm{p}}{\to}}
\newcommand{\inF}{\mathcal{F}}
\begin{document}

\title{Instrumental Variable Regression \\via Kernel Maximum Moment Loss}

\author{
 Rui Zhang \\
  Australian National University \\
  \texttt{rui.zhang@anu.edu.au} \\
  %% examples of more authors
   \And
 Masaaki Imaizumi \\
  The University of Tokyo \\
  \texttt{imaizumi@g.ecc.u-tokyo.ac.jp} \\
  \And
 Bernhard Sch\"olkopf \\
  Max Planck Institute for Intelligent Systems \\
  \texttt{bs@tuebingen.mpg.de} \\
  \AND
  Krikamol Muandet \\
  CISPA--Helmholtz Center for Information Security \\
  \texttt{muandet@cispa.de} \\
  %% \And
  %% Coauthor \\
  %% Affiliation \\
  %% Address \\
  %% \texttt{email} \\
  %% \And
  %% Coauthor \\
  %% Affiliation \\
  %% Address \\
  %% \texttt{email} \\
}

% \author{Rui Zhang}
% \address{Australian National University}
% \email{rui.zhang@anu.edu.au}

% \author{Masaaki Imaizumi}
% \address{The University of Tokyo, RIKEN Center for Advanced Intelligence Project}
% \email{imaizumi@g.ecc.u-tokyo.ac.jp}

% \author{Bernhard Sch\"olkopf}
% \address{Max Planck Institute for Intelligent Systems}
% \email{bs@tuebingen.mpg.de}

% \author{Krikamol Muandet}
% \address{CISPA--Helmholtz Center for Information Security}
% \email{muandet@cispa.de}

\maketitle

 \begin{abstract}
    {We investigate a simple objective for nonlinear instrumental variable (IV) regression based on a kernelized conditional moment restriction (CMR) known as a maximum moment restriction (MMR). 
    The MMR objective is formulated by maximizing the interaction between the residual and the instruments belonging to a unit ball in a reproducing kernel Hilbert space (RKHS).
    First, it allows us to simplify the IV regression as an empirical risk minimization problem, where the risk functional depends on the reproducing kernel on the instrument and can be estimated by a U-statistic or V-statistic.
    Second, based on this simplification, we are able to provide the consistency and asymptotic normality results in both parametric and nonparametric settings. 
    Lastly, we provide easy-to-use IV regression algorithms with an efficient hyper-parameter selection procedure.
    We demonstrate the effectiveness of our algorithms using experiments on both synthetic and real-world data.}
\end{abstract}

  %\keywords{nonlinear instrumental variable regression, conditional moment restriction, kernel method, reproducing kernel Hilbert space, U/V-statistics}

\section{Introduction}

Instrumental variables (IV) have become standard tools for economists, epidemiologists, and social scientists to uncover causal relationships from observational data 
\citep{Angrist08:Harmless,klungel2015instrumental}.
Randomization of treatments or policies has been perceived as the gold standard for such tasks, but is generally prohibitive in many real-world scenarios due to time constraints or ethical concerns.
When treatment assignment is not randomized, it is generally impossible to discern between the causal effect of treatments and spurious correlations that are induced by unobserved factors.
Instead, IVs enable the investigators to incorporate \emph{natural} variation through an IV that is associated with the treatments, but not with the outcome variable, other than through its effect on the treatments. 
In economics, for instance, the season-of-birth was used as an IV to study the return from schooling, which measures causal effect of education on labor market earning \citep{Card99:Labor}.
In genetic epidemiology, the idea to use genetic variants as IVs, known as Mendelian randomization, has also gained increasing popularity \citep{Burgess17:Mendelian,Burgess20:GIV}.

{\bc
There are numerous variations of methods for dealing with IV.
Classical methods for IV regression often rely on a linearity assumption in which a two-stage least squares (2SLS) is the most popular technique \citep{Angrist96:IV}. 
The generalized method of moments (GMM) of \citep{Hansen82:GMM}, which imposes the orthogonality restrictions, can also be used for the linear IV regression. 
For nonlinear regression, numerous methodologies have emerged in the field of nonparametric IV  \citep{Newey03:NIV,Hall05:NIV,Blundell07:Semi-NIV, horowitz2011applied} and recent machine learning \citep{Hartford17:DIV,Lewis18:AGMM,Bennett19:DeepGMM,Muandet19:DualIV,Singh19:KIV,Liao20:NeuralSEM, Bennett20:variational}.
These estimators can be categorized into two general approaches. 
The former generalizes a two-stage optimization procedure \citep{Hartford17:DIV, Singh19:KIV}, whereas the latter requires solving a minimax optimization problem \citep{Lewis18:AGMM,Bennett19:DeepGMM,Liao20:NeuralSEM,daskalakis2018limit,bailey2020finite}.
While the equivalence between 2SLS and GMM is known in the linear setting, the connection between two-stage procedure and minimax optimization in the nonlinear setting was established recently, see, e.g., \citep{Muandet19:DualIV}, \citep[Appendix F]{Liao20:NeuralSEM}, and \citep[Sec 3.3]{Mastouri21:ProxyKernel}.
For statistical inference, numerous approaches have been developed using asymptotic distributions of each point of uniform bands of the target functions by the sieve method \citep{horowitz2012uniform,chen2015optimal,chen2018optimal,babii2020honest,chen2021adaptive}.

}

{\bc
In this paper, we propose a novel nonlinear IV regression framework named \textit{Maximum Moment Restriction-IV} ({MMR-IV}), which utilizes the reproducing kernel Hilbert spaces (RKHSes) and an associated positive definite kernel from the machine learning domain.
}
The MMR-IV possesses mainly two important features:
(i) It reformulates the original conditional moment restriction (CMR) into a single-step empirical risk minimization (ERM) problem, by using the maximum moment restriction framework of \citep{Muandet20:KCM} with the kernel and RKHSes.
(ii) Based on the ERM problem, it reveals a closed form of the solution by U/V-statistics techniques \citep{Serfling80:Approximation}. 
Based on this framework, we propose two practical algorithms based on kernel functions and neural networks, which we call  MMR-IV (RKHS) and MMR-IV (NN), respectively.

Our MMR-IV has the following advantages.
First, the derived closed-form expression is computationally simple and stable. 
Further, owing to the generic structure of ERM, we can substitute neural networks for kernel functions, which is suitable for highly nonlinear modeling.
{\bc 
Second, MMR-IV can identify the optimal solution, i.e., the minimizer of the risk function is guaranteed to be unique, by using the positive definite property of the kernel function.
}
Third, we can identify a distribution that the solution follows in the large sample limit, i.e., the asymptotic distribution. 
The identified asymptotic distribution allows us to analyze uncertainties of algorithms and estimators, such as statistical tests and confidence intervals. 
This result is applicable to the case where the model is both parametric and nonparametric, and especially in the nonparametric case, we utilize the functional differentiation technique \citep{hable2012asymptotic} to derive the results.
{\bc Moreover, based on the asymptotic distribution, we discuss how the choice of kernel functions affects the efficiency of MMR-IV.}
Fourth, we provide a hyperparameter selection scheme, by developing a novel interpretation of MMR-IV from a Gaussian process (GP) perspective. 
This interpretation is obtained by mapping our ERM form to a likelihood of GPs.
Since hyperparameter selection for IV regression is a non-trivial problem in both minimax and two-stage formulation, our framework sheds light on efficient methods to solve this problem.
In the experimental section, we demonstrate the superiority of the proposed framework by validating its accuracy and effectiveness, and show its usefulness in real data analysis.

Our contributions can be summarized as follows: 
\begin{enumerate}[noitemsep,nolistsep,label=(\roman*)]
    \item We study the simple ERM problem converted from the minimax optimization,
    then propose the MMR-IV algorithms to minimize the risk based on kernel functions and neural networks. 
    This algorithm avoids the difficulties of minimax problems, and is computationally simple and stable.
    \item {\bc Our method can identify unique solutions by selecting positive definite kernel functions. Our analysis also reveals how the asymptotic efficiency varies with the choice of kernel functions.}
    \item We derive the asymptotic normal distribution of the MMR-IV estimators in both parametric and non-parametric settings. 
    This result is derived from the ERM form, and allows for statistical inference.
    \item We develop the efficient framework to select hyperparameters of MMR-IV.
    This approach is based on the interpretation of MMR-IV with kernel functions from a GP perspective. 
    \item We compare our methods to a wide range of baselines on different settings, and show that MMR-IV has competitive performance. 
\end{enumerate}

The rest of this paper is organized as follows. Section \ref{sec:preliminaries} introduces the instrumental variable regression problem and the MMR framework. Our methods are presented in Section \ref{sec:mmr-iv}, followed by the hyperparameter selection in Section \ref{sec:model-selection}. Next, we provide the consistency and asymptotic normality results in Section \ref{sec:theory}. We then provide a thorough discussion on related works in Section \ref{sec:related-work}. Finally, the experimental results are presented in Section \ref{sec:experiments}. 
All proofs of the main results can be found in the appendix.

\section{Preliminaries}
\label{sec:preliminaries}

%%%
\subsection{Instrumental Variable Regression}
\label{sec:ivr}

Following standard setting in the literature \citep{Hartford17:DIV,Lewis18:AGMM,Bennett19:DeepGMM,Singh19:KIV,Muandet19:DualIV}, 
let $X$ be a treatment (endogenous) variable taking value in $\inx\subseteq\rr^d$ and $Y$ a real-valued outcome variable.
Our goal is to estimate a function $f:\inx\to\rr$ from a structural equation model of the form
\begin{equation}\label{eq:model}
    \textstyle Y = f(X) + \varepsilon, \quad
    X = t(Z) + g(\varepsilon) + \nu,
\end{equation}
\noindent where we assume that $\ep[\varepsilon] = 0$ and $\ep[\nu] = 0$.
Unfortunately, as we can see from \eqref{eq:model}, $\varepsilon$ is correlated with the treatment $X$, i.e., $\ep[\varepsilon|X] \neq 0$,  and hence standard regression methods cannot be used to estimate $f$.
This setting arises, for example, when there exist unobserved confounders (i.e., common causes) between $X$ and $Y$. To illustrate the problem, let us consider an example taken from \citep{Hartford17:DIV} which aims at predicting sales of airline ticket $Y$ under an intervention in price of the ticket $X$.
However, there exist unobserved variables that may affect both sales and ticket price, e.g., conferences, COVID-19 pandemic, etc.
This creates a correlation between $\varepsilon$ and $X$ in \eqref{eq:model} that prevents us from applying the standard regression toolboxes directly on observational data.

\textbf{Instrumental variable (IV).}
To address this problem, we assume access to an \emph{instrumental} variable $Z$ taking value in $\inz\subseteq\rr^{d'}$. 
As we can see in \eqref{eq:model}, the instrument $Z$ is associated with the treatments $X$, but not with the outcome $Y$, other than through its effect on the treatments.
Formally, $Z$ must satisfy 
\begin{enumerate*}[label=(\roman*),noitemsep]
    \item \emph{Relevance:} $Z$ has a causal influence on $X$,
    \item \emph{Exclusion restriction:} $Z$ affects $Y$ only through $X$, i.e., ${Y \ci Z} | X,\varepsilon$, and
    \item \emph{Unconfounded instrument(s):} $Z$ is independent of the error, i.e., ${\varepsilon \ci Z}$.
\end{enumerate*}
See Fig. \ref{fig:causal-graph} for a visualization of the three conditions.
For example, the instrument $Z$ may be the cost of fuel, which influences sales $Y$ only via price $X$.
For detailed exposition on IV, we refer the readers to \citep{Newey03:NIV} and \citep{Angrist08:Harmless}.

\textbf{Conditional moment restriction (CMR).}
Together with the structural assumption in \eqref{eq:model}, these conditions imply that $\ep[\varepsilon\,|\,Z] = 0$ for $P_Z$-almost all $z$.
This is known as a \emph{conditional moment restriction} (CMR) which we can use to estimate $f$ \citep{Newey93:CMR}. 
For every measurable function $h$, the CMR implies a continuum of unconditional moment restrictions \citep{Lewis18:AGMM,Bennett19:DeepGMM}\footnote{Also, IV assumptions imply $\varepsilon$ is independent of $Z$, i.e., $\mathbb{E}[(Y - f(X))h(Z)] = 0$ for all measurable $h$.}:
\begin{equation}\label{eq:umr}
    \textstyle \ep[(Y-f(X))h(Z)] = 0 .
\end{equation}
That is, there exists an infinite number of moment conditions, each of which is indexed by the function $h$.
One of the key questions in econometrics is which moment condition should be used as a basis for estimating the function $f$ \citep{Donald01:ChooseIV,Hall05:GMM}.
In this work, we show that, for the purpose of consistently estimating $f$, it is sufficient to restrict $h$ to be within a unit ball of a RKHS of real-valued functions on $Z$.

%%% causal graph
\tikzset{
    -Latex,auto,node distance =1 cm and 1 cm,semithick,
    state/.style ={ellipse, draw, minimum width = 0.7 cm},
    point/.style = {circle, draw, inner sep=0.04cm,fill,node contents={}},
    bidirected/.style={Latex-Latex,dashed},
    el/.style = {inner sep=2pt, align=left, sloped}
}

\begin{figure}[t!]
        \centering
        \begin{tikzpicture}[scale=1.5]
            \node (x) at (0,0) [label=125:X,point];
            \node (y) at (1.3,0) [label=right:Y,point];
            \node (z) at (-1.3,0) [label=left:Z,point];
    
            \path (x) edge [line width=1pt] (y);
            \path (z) edge [line width=1pt,color=blue] (x);
            \path[bidirected] (x) edge[line width=1pt,color=red,bend left=60] (y);
        \end{tikzpicture}
        \vspace{8pt}
        \caption{A causal graph depicting an instrumental variable $Z$ that satisfies an exclusion restriction and unconfoundedness (there may be a confounder $\varepsilon$ acting on $X$ and $Y$, but it is independent of $Z$).}
        \label{fig:causal-graph}
    \end{figure}
    
%%%%
\subsection{Maximum Moment Restriction}

Throughout this paper, we assume that $h$ is a real-valued function on $\inz$ which belongs to a RKHS $\mathcal{H}_k$ endowed with a reproducing kernel $k:\inz\times\inz\to\rr$.
The RKHS $\mathcal{H}_k$ satisfies two important properties: for all $z\in\inz$ and $h\in\mathcal{H}_k$,
\begin{enumerate*}[label=(\roman*),noitemsep]
    \item $k(z,\cdot)\in\mathcal{H}_k$ and 
    \item (reproducing property) $h(z) = \langle h,k(z,\cdot)\rangle_{\mathcal{H}_k}$ where $k(z,\cdot)$ is a function of the second argument. 
\end{enumerate*}
Furthermore, we define  $\Phi_k(z)$ as a \emph{canonical} feature map of $z$ in $\mathcal{H}_k$.
It follows from the reproducing property that $k(z,z') = \langle \Phi_k(z),\Phi_k(z')\rangle_{\mathcal{H}_k}$ for every $z,z'\in\inz$, i.e., an inner product between the feature maps of $z$ and $z'$ can be evaluated through the kernel evaluation.
Every positive definite kernel $k$ uniquely determines the RKHS for which $k$ is a reproducing kernel \citep{aronszajn50reproducing}.
For detailed exposition on kernel methods, see e.g. \citep{Scholkopf01:LKS}, \citep{Berlinet04:RKHS}, and \citep{Muandet17:KME}.

Instead of considering all measurable functions $h$ in \eqref{eq:umr} as instruments, we only restrict to functions that lie within a unit ball of the RKHS $\mathcal{H}_k$. 
The risk functional is then defined as a maximum value of the moment restriction with respect to this function class:
\begin{equation}\label{eq:mmr}
    \textstyle R_k(f) := \sup_{h\in\mathcal{H}_k, \|h\|\leq 1}\; \left(\ep[(Y-f(X))h(Z)]\right)^2.
\end{equation}
The benefits of this formulation are two-fold. 
First, it is computationally intractable to learn $f$ from \eqref{eq:umr} using \emph{all} measurable functions as instruments.
By restricting the function class to a unit ball of the RKHS, the problem becomes computationally tractable, as will be shown in Lemma \ref{lem:analytic-form} below.
Second, this restriction still preserves the consistency of parameter estimated using $R_k(f)$. 
In other words, the RKHS is a sufficient class of instruments for the nonlinear IV problem (cf. Theorem \ref{thm:orig_cond_m_res}).
A crucial insight for our approach is that the population risk $R_k(f)$ has an analytic solution.
\begin{lemma}[\citep{Muandet20:KCM}, Theorem 3.3]\label{lem:analytic-form}
Assume that $\ep[(Y-f(X))^2k(Z,Z)] < \infty$. Then, we have the following closed-form expression, where $(X',Y',Z')$ is an independent copy of $(X,Y,Z)$:
\begin{equation}
    \label{eq:mmr-kernel} 
    \textstyle R_k(f) = \ep[(Y-f(X))(Y'-f(X'))k(Z,Z')].
\end{equation}
\end{lemma} We assume throughout that the reproducing kernel $k$ is integrally strictly positive definite (ISPD). 
\begin{assumption}\label{asmp:k_idp}
The kernel $k$ is continuous, bounded (i.e., $\sup_{z \in \inz}\sqrt{k(z,z)} < \infty$) and satisfies the condition of integrally strictly positive definite (ISPD) kernels, i.e., for every function $g$ that satisfies $0 < \|g\|_2^2 < \infty$, we have
$\iint_{\inz} g(z)k(z,z')g(z') \dd z \dd z'>0$.
\end{assumption}
Popular kernel functions that satisfy Assumption \ref{asmp:k_idp} are the Gaussian RBF kernel and Laplacian kernel
$$k(z,z') = \exp\left(-\frac{\|z-z'\|_2^2}{2\sigma^2}\right), \qquad k(z,z') = \exp\left(-\frac{\|z-z'\|_1}{\sigma}\right),$$ 
where $\sigma$ is a positive bandwidth parameter. 
Another important kernel satisfying this assumption is an inverse multiquadric (IMQ) kernel 
$$k(z,z') = (c^2 + \|z - z'\|_2^2)^{-\gamma}$$ 
where $c$ and $\gamma$ are positive parameters \citep[Ch. 4]{steinwart2008support}. 
This class of kernel functions is closely related to the notions of universal kernels \citep{Steinwart02:Universal} and characteristic kernels \citep{Fukumizu04:DR}. 
The former ensures that kernel-based classification/regression algorithms can achieve the Bayes risk, whereas the latter ensures that the kernel mean embeddings can distinguish different probability measures.
In principle, they guarantee that the corresponding RKHSs induced by these kernels are sufficiently rich for the tasks at hand.  
We refer the readers to \citep{Sriperumbudur11:Universal} and \citep{Simon-Gabriel18:KDE} for more details.

Next, we further assume the identification for the minimizer of $R_k(f)$.
\begin{assumption}\label{asmp:opt_ident}
Consider the function space $\mathcal{F}$ and $f^* \in \argmin_{f\in \mathcal{F}} R_k(f)$. Then for every $g \in \mathcal{F}$ with $|\ep[g(X)]| < \infty$, $\ep[g(X)-f^*(X)\,|\,Z]=0$ implies $g = f^*$ almost everywhere with respect to $P_X$.
\end{assumption}
A sufficient condition for identification follows from the \emph{completeness} property of $X$ for $Z$, e.g., the conditional distribution of $X$ given $Z$ belongs to the exponential family \citep{Newey03:NIV}. See \citep{Xavier11:Completeness} for more general sufficient conditions. Provided identification, it is straightforward to obtain consistency.

%%%% Our main contribution starts here
\section{Our Method}
\label{sec:mmr-iv}

We propose to learn $f$ by minimizing $R_k(f)$ in \eqref{eq:mmr-kernel}.
To this end, we define an optimal function $f^*$ as a minimizer of the above population risk w.r.t. a function class $\mathcal{F}$ of real-valued functions on $\inx$, i.e., 
$$\; f^* \in \argmin_{f\in\mathcal{F}}\; R_k(f).$$
It is instructive to note that population risk $R_k$ depends on the choice of the kernel $k$. Based on Assumption \ref{asmp:k_idp} and Lemma \ref{lem:analytic-form}, we obtain the following result by applying  \citep[Theorem 3.2]{Muandet20:KCM}, showing that $R_k(f) = 0$ if and only if $f$ satisfies the original CMR (see Appendix. \ref{sec:proof-sufficiency} for the proof).
\begin{proposition}\label{thm:orig_cond_m_res}
Assume that the kernel $k$ is ISPD. 
Then for every real-valued measurable function $f$, $R_k(f) = 0$ if and only if $\ep[Y - f(X)\,|\,Z=z] = 0$ for $P_Z$-almost every $z$.
\end{proposition}
Proposition \ref{thm:orig_cond_m_res} holds as long as the kernel $k$ belongs to a class of ISPD kernels. 
Hence, it allows for more flexibility in terms of the kernel choice.
Moreover, it is not difficult to show that $R_k(f)$ is strictly convex in $f$; see the proof in Appendix. \ref{pf:convex}.

\begin{proposition}\label{thm:convex}
If $\inF$ is a convex set and \sref{Assumptions}{asmp:k_idp}, \ref{asmp:opt_ident} hold, then the risk $R_k$ given in \eqref{eq:mmr-kernel} is strictly convex on $\inF$.
\end{proposition}

%%%%%
\subsection{Empirical Risk Minimization}
\label{sec:erm}

The previous results pave the way for an empirical risk minimization (ERM) framework \citep{Vapnik98:SLT} to be used in our work. 
That is, given an i.i.d. sample $\{(x_i,y_i,z_i)\}_{i=1}^n \sim P^n(X,Y,Z)$ of size $n$, empirical estimates of the risk $R_k(f)$ can be obtained as in the form of U-statistic or V-statistic \citep[Section 5]{Serfling80:Approximation}: 
\begin{align*}
    \widehat{R}_U(f) &\coloneqq \frac{1}{n(n-1)}\sum_{i=1}^n\sum_{j\neq i}(y_i-f(x_i))(y_j-f(x_j))k(z_i,z_j) \\
    \intertext{and}
    \widehat{R}_V(f) &\coloneqq \frac{1}{n^2}\sum_{i=1}^n\sum_{j=1}^n(y_i-f(x_i))(y_j-f(x_j))k(z_i,z_j).
\end{align*}
Both forms of empirical risk can be used as a basis for a consistent estimation of $f$.
The advantage of $\widehat{R}_U$ is that it is a minimum-variance unbiased estimator with appealing asymptotic properties, whereas $\widehat{R}_V$ is a biased estimator of the population risk \eqref{eq:mmr-kernel}, i.e., $\ep[\widehat{R}_V] \neq R_k$.
However, the estimator based on V-statistics employs full pairs of samples and hence may yield better estimate of the risk than the U-statistic counterpart. Let $\bm x := [x_1,\ldots,x_n]^\top$, $\bm y:=[y_1,\ldots,y_n]^\top$ and $\bm z:=[z_1,\ldots,z_n]^\top$ be column vectors. 
Let $K_{\mathbf{z}}$ be the kernel matrix $K(\bm z, \bm z)=[k(z_i,z_j)]_{ij}$ evaluated on the instruments $\mathbf{z}$ and $f(\bm x) := [f(x_1),\ldots,f(x_n)]^{\top}$.
Then, both $\widehat{R}_U$ and $\widehat{R}_V$ can be rewritten as
\begin{equation}\label{eq:quadratic-form}
    \widehat{R}_{V(U)}(f) = (\bm y-f(\bm x))^{\top} W_{V(U)} (\bm y-f(\bm x)),
\end{equation}
where $W_{V(U)}$ is a $n\times n$ symmetric weight matrix that depends on the kernel matrix $K_{\bm z}$. 
Specifically, $W_U = (K_{\bm z} - \mathrm{diag}(k(z_1,z_1),\ldots,k(z_n,z_n)))/(n(n-1))$ corresponds to $\widehat{R}_U$,
where $\mathrm{diag}(a_1,\ldots,a_n)$ denotes an $n\times n$ diagonal matrix whose diagonal elements are $a_1,\ldots,a_n$.
As shown in Appendix. \ref{pf:indefinite_wu}, $W_U$ is indefinite and may cause problematic inferences.
For $\widehat{R}_V$, $W_V:=K_{\bm z}/n^2$ is positive definite because the kernel $k$ is ISPD. 
Finally, our objective \eqref{eq:quadratic-form} also resembles the well-known generalized least regression with correlated noise \citep[Chapter 2]{kariya2004generalized} where the covariance matrix is the $Z$-dependent invertible matrix $W_{V(U)}^{-1}$.

Based on $\widehat{R}_U$ and $\widehat{R}_V$, we estimate $f^*$ by minimizing the  \emph{regularized} empirical risk over $\inF$:
\begin{align}
    \textstyle \hat{f}_{V(U)}\in \argmin_{f\in\mathcal{F}}\; \widehat{R}_{V(U)}(f) +\lambda \Omega(f) 
\end{align}
where $\lambda >0$ is a regularization constant satisfying $\lim_{n \rightarrow \infty} \lambda = 0$, and $\Omega(f)$ is the regularizer. 
Since $\hat{f}_U$ and $\hat{f}_V$ minimize objectives which are regularized U-statistic and V-statistic, they can be viewed as a specific form of M-estimators; see, e.g., \citep[Ch. 5]{Vaart00:Asymptotic}.
In this work, we focus on the V-statistic empirical risk and provide practical algorithms when $\inF$ is parametrized by deep neural networks (NNs) and an RHKS of real-valued functions. 

\textbf{Kernelized GMM.}
We may view the objective \eqref{eq:quadratic-form} from the GMM perspective \citep{Hall05:GMM}.
The assumption that the instruments $Z$ are exogenous implies that $\ep[\Phi_k(Z)\varepsilon] = 0$ where $\Phi_k$ denotes the canonical feature map associated with the kernel $k$.
This gives us an infinite number of moments, $g(f) = \ep[\Phi_k(Z)(Y - f(X))]$.
Hence, we can write the sample moments as $\hat{g}(f)=(1/n)\sum_{i=1}^n\Phi_k(z_i)(y_i - f(x_i))$.
The intuition behind GMM is to choose a function $f$ that sets these moment conditions as close to zero as possible, motivating the objective function 
\begin{eqnarray*} 
J(f) &:=& \|\hat{g}(f)\|_{\mathcal{H}_k}^2 
= \langle \hat{g}(f),\hat{g}(f) \rangle_{\mathcal{H}_k}\\
    &=& \frac{1}{n^2}\sum_{i,j}(y_i-f(x_i))\langle \Phi_k(z_i),\Phi_k(z_j)\rangle_{\mathcal{H}_k} (y_j - f(x_j))\\
    &=& \widehat{R}_V(f).
\end{eqnarray*}
Hence, our objective $\widehat{R}_V(f)$ \eqref{eq:quadratic-form} is a special case of the GMM objective when the weighting matrix is the identity operator. 
\citep[Ch. 6]{Carrasco07:LIP} showed the optimal weighting operator in terms of the inversed covariance operator.

{\bc
\subsubsection{Theoretical Perspective: Identification and Efficiency}
Before introducing practical algorithms, we discuss the theoretical aspects of our framework, specifically, the \emph{identifiability} and \emph{efficiency} issues by changing the norm constraint in the formulation.
The key difference between our risk $R_k(f)$ in \eqref{eq:mmr} and the ordinary framework of the unconstrained moment restriction is that, in the ordinary framework, the CMR \eqref{eq:umr} is rewritten to the following unconditional moment restriction
\begin{align}
    \sup_{h: \mathbb{E}[(f(X))^2] < \infty} (\mathbb{E}[(Y-f(X))h(Z)])^2. \label{eq:UMR_l2}
\end{align}
Note that the test function $h$ is maximized over a class of square integrable functions, i.e., $\mathbb{E}[(h(X))^2] < \infty$.
In contrast, our risk $R_k(f)$ utilizes a maximized test function $h$ over a unit ball of the RKHS, that is, $h \in \mathcal{H}_k, \|h\|\leq 1$ with the RKHS norm $\|\cdot\|$.
This arbitrary design with the RKHS plays a critical role in our method.

\begin{description}
\item[\textbf{Identification:}]
An advantage of our design \eqref{eq:mmr} with RKHSes is that it uniquely defines the function that minimizes the risk. 
Since RKHSes equips a positive definite kernel, and we further assume the kernel is ISPD in Assumption \ref{asmp:k_idp},  the risk $R_k(f)$ has a positive definite Hessian matrix (operator) at the minimum, hence it has the unique minimizer.
In short, the loss design with RKHSes has the advantage that the function class of $h$ can be chosen so that the solution is uniquely identified.

\item[\textbf{Efficiency of IV:}]
Another important issue is the efficiency of the estimator, i.e., the variance of the asymptotic distribution of estimators. 
For the linear regression problem with finite-dimensional parameters, we can define the \textit{optimal IV}, i.e., the estimator with the IV that minimizes the asymptotic variance \citep{Newey03:NIV}. 
In contrast, our regression model in this setup is nonlinear, infinite-dimensional, and also based on U-statistics, hence it is not trivial to discuss optimality.
\end{description}

To address the latter issue, we provide the variance of the asymptotic distribution of our estimator as a way to measure efficiency. 
In Proposition \ref{prop:efficiency} in Section \ref{sec:theory}, we derive asymptotic Gaussian approximations of the estimator, both pointwise and uniformly, that reveal their variance.
The result in Proposition \ref{prop:efficiency} is summarized as follows.
We firstly define coefficients $\Lambda_{\mathrm{max}}$ and $\Lambda_{\mathrm{min}}$ as
\begin{align*}
    \Lambda_{\mathrm{max}} := \sup_{g: \|g\|_{L^2} = 1} \iint_{\mathcal{Z}} g(z) k(z,z') g(z') \mathrm{d} z \mathrm{d} z', ~\mathrm{and}~\Lambda_{\mathrm{min}} := \inf_{g: \|g\|_{L^2} = 1} \iint_{\mathcal{Z}} g(z) k(z,z') g(z') \mathrm{d} z \mathrm{d} z',
\end{align*}
where $\|g\|_{L^2} = \int_{\mathcal{Z}}g(z)^2 \mathrm{d}z$ is the L2 norm.
The value $\Lambda_{\mathrm{max}}$ and $\Lambda_{\mathrm{min}}$ is the largest/smallest eigenvalue of an embedding operator to the RKHS.
Here, we define $\delta_\Lambda := \max\{|\Lambda_{\mathrm{max}} - 1|, |1 - \Lambda_{\mathrm{min}}|\}$.
Then, we show that an upper bound of the asymptotic variance with the proposed estimator is increased by 
\begin{align*}
    O\left(\Lambda_{\mathrm{max}} \delta_\Lambda + \{\Lambda_{\mathrm{max}} \delta_\Lambda\}^2 \right),
\end{align*}
compared to the case without kernel functions.
The case with $\delta_\Lambda = 1$ corresponds to the unconditional moment restriction with the L2 norm, as displayed in \eqref{eq:UMR_l2}.
This indicates that kernels that are concentrated in the direction of a particular eigenfunction may have worse efficiency.
More details will be provided in Section \ref{sec:efficiency}.

}

%%%%%
\subsection{Practical MMR-IV Algorithms}
\label{sec:practical-mmr}

A workflow of our algorithm based on $\widehat{R}_V$ is summarized in  \sref{Algorithm}{alg:mmriv}; we leave the $\widehat{R}_U$ based method to future work to solve the inference issues caused by indefinite $W_U$.
We provide examples of the class $\mathcal{F}$ in both parametric and non-parametric settings below.

\textbf{Deep neural networks.}
In the parametric setting, the function class $\mathcal{F}$ can often be expressed as $\mathcal{F}_{\Theta} = \{f_{\theta}\,:\, \theta\in\Theta\}$ where $\Theta\subseteq\rr^m$ denotes a parameter space.
We consider a very common nonlinear model in machine learning $f(x) = W_0\Phi(x) + b_0$ where $\Phi: x \longmapsto \sigma_{h}(W_{h}\sigma_{h-1}(\cdots \sigma_1(W_1 x)))$ denotes a nonlinear feature map of a depth-$h$ NN.
Here, $W_i$ for $i=1,\ldots,h$ are parameter matrices and each $\sigma_i$ denotes the entry-wise activation function of the $i$-th layer.
In this case, $\theta = (b_0,W_0,W_1,\ldots,W_h)$. As a result, we can rewrite $\hat{f}_V$ in terms of their parameters as $\hat{\theta}_{V} \in \arg\min_{\theta\in\Theta}\,\widehat{R}_{V}(f_\theta) + \lambda \| \theta\|_2^2$
where $f_{\theta}\in\mathcal{F}_{\Theta}$. 
We denote $\theta^* \in \arg \min_{\theta \in \Theta}\; R_k(f_{\theta})$.
In what follows, we refer to this algorithm as \textbf{MMR-IV (NN)}; see \sref{Algorithm}{alg:mmriv-nn}.

%%% MMR-IV (NN) Algorithm
\begin{algorithm}[t!]
        \caption{MMR-IV}
        \label{alg:mmriv}
        \begin{algorithmic}[1]
        \Require Dataset $D = \{\bm x, \bm y, \bm z\}$, kernel $k$ with parameters $\theta_k$, a function class $\mathcal{F}$, regularizer $\Omega(\cdot)$ and $\lambda$
        \Ensure The estimate of $f^*$ in $\mathcal{F}$.
        \State Compute the kernel matrix $K=k(\bm z, \bm z; \theta_k)$.
        \State Define the residual $\bm{\varepsilon}(f) = \bm{y} - f(\bm{x})$.
        \State $\hat{f}_{\lambda}\leftarrow\argmin_{f \in \mathcal{F}} \;n^{-2}\bm{\varepsilon}(f)^\top K\bm{\varepsilon}(f) + \lambda \Omega(\|f\|_{\mathcal{F}})$ 
        \State \textbf{return} $\hat{f}_{\lambda}$
        \end{algorithmic}
\end{algorithm}

%%% MMR-IV (NN) Algorithm
\begin{algorithm}
    \caption{MMR-IV (NN)}
    \label{alg:mmriv-nn}
    \begin{algorithmic}[1]
    \Require Dataset $D = \{\bm x, \bm y, \bm z\}$, kernel function $k$ with parameters $\theta_k$, NN $f_\mathrm{NN}$ with parameters $\theta_{\mathrm{NN}}$, regularization parameter $\lambda$
    \Ensure predictive value at $\bm x_{*}$
    \State Compute $K=k(\bm z, \bm z; \theta_k)$
    \State $\hat{f}_{NN}=\argmin_{f_{\mathrm{NN}}} (\bm y - f_{\mathrm{NN}}(\bm x))^{\top} K (\bm y - f_{\mathrm{NN}}(\bm x))/n^2 + \lambda \| \theta_{\mathrm{NN}}\|_{2}^2$ 
    \State \textbf{return} $\hat{f}_{NN}(\bm x_*)$
    \end{algorithmic}
\end{algorithm}

\textbf{Kernel machines.} In a non-parametric setting, the function class $\inF$ becomes an infinite dimensional space.
In this work, we consider $\inF$ to be an RKHS $\mathcal{H}_l$ of real-valued functions on $\inx$ with a reproducing kernel $l:\inx\times\inx\to\rr$.
Then, the regularized solution can be obtained by $ \arg\min_{f\in\mathcal{H}_l}\,\widehat{R}_{V}(f)+\lambda \|f\|^2_{\mathcal{H}_l}$.
As per the representer theorem, every optimal $\hat{f}$ admits a form $\hat{f}(x) = \sum_{i=1}^{n}\alpha_i l(x,x_i)$ for some $(\alpha_1,\ldots,\alpha_n)\in\rr^n$ \citep{scholkopf2001generalized}, and
based on this representation, we rewrite the objective as
\begin{equation}
\label{eq:obj_reg0}
    \hat{f}_V=\arg\min_{\bm \alpha\in\rr^n}\, (\bm y-L\bm{\alpha})^{\top} W_{V} (\bm y-L\bm{\alpha})+\lambda \bm{\alpha}^{\top}L\bm{\alpha},
\end{equation}where $L = [l(x_i, x_j)]_{ij}$ is the kernel matrix on $\bm x$.  
For U-statistic version, the quadratic program \eqref{eq:obj_reg0} substitutes indefinite $W_U$ for $W_V$, so it may not be positive definite.
The value of $\lambda$ needs to be sufficiently large to ensure that \eqref{eq:obj_reg0} is definite.
On the other hand, the V-statistic based estimate \eqref{eq:obj_reg0} is definite for all non-zero $\lambda$ since $W_V$ is positive semi-definite.
Thus, the optimal $\widehat{\bm{\alpha}}$ can be obtained by solving the first-order stationary condition and if $L$ is positive definite, the solution has a closed form expression, $\widehat{\bm{\alpha}} = (LW_VL+\lambda L)^{-1}L W_V \bm y$.
Thus, we will focus on the V-statistic version in our experiments. In the following, we refer to this algorithm as \textbf{MMR-IV (RKHS)}.

\textbf{Nystr\"om approximation.}
The MMR-IV (RKHS) algorithm is computationally costly for large datasets as it requires a matrix inversion.
To improve the scalability, we resort to Nystr\"om approximation \citep{NIPS2000_1866} to accelerate the matrix inversion in $$\widehat{\bm \alpha} = (LW_VL+\lambda L)^{-1}L W_V \bm y.$$ 
First, we randomly select a subset of $m (\ll n)$ samples from the original dataset and construct the corresponding sub-matrices of $W_V$, namely, $W_{mm}$  and $W_{nm}$ based on this subset. 
Second, let $V$ and $U$ be the eigenvalue vector and the eigenvector matrix of $W_{mm}$.
Then, the Nystr\"om approximation is obtained as $W_V \approx \widetilde{U}\widetilde{V}\widetilde{U}^{\top}$ where $\widetilde{U} := \sqrt{\frac{m}{n}} W_{nm} U V^{-1}$ and $\widetilde{V} := \frac{n}{m} V$. 
We finally apply the Woodbury formula \citep[p. 75]{flannery1992numerical} to obtain
\begin{eqnarray}
    (LW_VL+\lambda L)^{-1}LW_V 
    &=& L^{-1} (W_V + \lambda L^{-1})^{-1}W_V
    \\
    &\approx&  
    \lambda^{-1}[ I - \widetilde{U} (\lambda^{-1} \widetilde{U}^{\top} L\widetilde{U} + \widetilde{V}^{-1})^{-1}\widetilde{U}^{\top}\lambda^{-1}L ]\widetilde{U}\widetilde{V}\widetilde{U}^{\top}.
\end{eqnarray}
We will refer to this algorithm as \textbf{MMR-IV (Nystr\"om)};  \sref{Algorithm}{alg:mmriv-rkhs}. 
The runtime complexity of this algorithm is $O(nm^2+n^2)$.

%%%%%%%%%
%% MMR-IV (Nystrom) Algorithm
\begin{algorithm}
    \caption{MMR-IV (Nystr\"om)}
    \label{alg:mmriv-rkhs}
    \begin{algorithmic}[1]
    \Require Dataset $D = \{(\bm x, \bm y,\bm z)\}_{i=1}^{n}$, kernel functions $k$ and $l$ with parameters $\theta_k$ and $\theta_l$, regularization parameter $\lambda$, leave out data size $M$, Nystr\"om approximation sample size $M'$, LMOCV times $m$
    \Ensure predictive value at $\bm x_{*}$
    \State $K=k(\bm z, \bm z; \theta_k)$
    \State {\fontsize{10}{15}$\hat{\delta},\hat{\theta}_{l} = \argmin_{\delta, \theta_l} \sum_{i=1}^m (\bm b^{(i)}-\bm y_{M}^{(i)})^{\top}K_{M}^{(i)}(\bm b^{(i)}-\bm y_{M}^{(i)})$}\rtext{Equation \eqref{eq:LMOCV}}
    \State $K/n^2 \approx \widetilde{U}\widetilde{V}\widetilde{U}^{\top}$ \rtext{Nystr\"om Approx with $M'$ samples}
    \State {\fontsize{9}{15}$\hat{\alpha} = \lambda^{-1}[ I - \widetilde{U} (\lambda^{-1} \widetilde{U}^{\top} L\widetilde{U} + \widetilde{V}^{-1})^{-1}\widetilde{U}^{\top}\lambda^{-1}L] \widetilde{U}\widetilde{V}\widetilde{U}^{\top} \bm y$}
    \State \textbf{return} $\hat{f}(\bm x_*) = l(\bm x_*, \bm x; \hat{\theta}_l)\hat{\alpha}$
    \end{algorithmic}
\end{algorithm}

%%%
\section{Hyperparameter Selection}
\label{sec:model-selection}

\begin{figure}[t!]
        \centering
        \begin{tikzpicture}[scale=1]
            \node (x) at (0,0.8) [label=125:NN ($f$),point];
            \node (y) at (0,-0.8) [label=-125:RKHS ($f$),point];
            \node (z) at (-1.6,0) [label=left:KMMR,point];
            \node (a) at (1.6,-0.8) [label=right:Analytical CV,point];
            \node (b) at (1.6,0.8) [label=right:Standard CV,point];
            \node (c) at (1.6,0) [label=right:Median Heuristic ($k$),point];
            
            \path (z) edge [line width=1pt, color=blue] (x);
            \path (z) edge [line width=1pt] (y);
            
            \path (x) edge [line width=1pt, color=blue] (b);
            \path (y) edge [line width=1pt] (a);
            \path (x) edge [line width=1pt, dotted] (c);
            \path (y) edge [line width=1pt, dotted] (c);
            
        \end{tikzpicture}
        \vspace{8pt}
        \caption{A summary of proposed algorithms. Following the KMMR framework, two methods -- MMR-IV (NN) and MMR-IV (RKHS) -- are proposed where $f$ is parameterized by a neural network and an RKHS function, respectively. An analytical cross validation (CV) error is proposed for MMR-IV (RKHS) and both methods select the kernel $k$ by the median heuristic.}
        \label{fig:causal-graph2}
\end{figure}

We discuss the selection of hyper-parameters such as the regularization coefficient $\lambda$.
The hyper-parameters in the IV regression problem significantly affect the finite sample performance, but the selection process is non-trivial. 
We firstly give an overview of our cross-validation approach in Section \ref{sec:LMOCV}, then develop some techniques which makes the approach efficient in Section \ref{sec:baye_interpretation} and \ref{sec:analytical-form}.
Figure \ref{fig:causal-graph2} gives an overview the contents of this section.

\subsection{Leave-\texorpdfstring{$M$}{}-out-cross-validation}
\label{sec:LMOCV}

We utilize the Leave-$M$-out-cross-validation (LMOCV) approach for the hyper-parameter selection, which is described as follows.
Let $M$ be an integer such that $M \leq n$, and $\Lambda$ be a set of hyper-parameter candidates.

\begin{algorithmic}[1]
\State $K \gets n/M$
\State Split the data into $K$ parts: $[D_1,D_2,\ldots,D_K]$ where $|D_i|\approx M$ 
\For{each $(\lambda, \mathcal{F})\in\Lambda$}
    \For{each $D_i\in [D_1,D_2,\ldots,D_K]$}
        \State Solve $\hat{f}_{\lambda} \gets \arg\min_{f \in \mathcal{F}} \,\widehat{R}_V(f) + \lambda\Omega(f)$ using $D_{-i} = [D_1,\ldots,D_{i-1},D_{i+1},\ldots,D_K]$
        \State Evaluate $S(\lambda,i) \gets \widehat{R}_V(\hat{f}_{\lambda})$ using $D_i$
    \EndFor
    \State $S(\lambda) \gets (1/K)\sum_{k=1}^K S(\lambda,k)$
\EndFor
\State Output $\lambda^*, \mathcal{F}^* \gets \arg\min_{\lambda, \mathcal{F}} \, S(\lambda)$
\end{algorithmic}

One disadvantage of this approach is that it is computationally expensive because the empirical risks are optimized many times.
To alleviate this problem, we study an analytical form of the score function $S(\lambda)$ for MMR-IV (RKHS) and MMR-IV (Nystr\"om). 
The analytical error enables any fold of cross validation and in comparison, related solutions rely on a few fold of cross validation \citep{Dikkala20:Minimax,Muandet19:DualIV} or Median Heuristic \citep{Singh19:KIV} for this task, which however have rather limited selection capability. 
We propose this method in the next subsections.

\subsection{Gaussian Process (GP) Interpretation}
\label{sec:baye_interpretation}

Our interest is to obtain an analytical empirical risk for MMR-IV (Nystr\"om) with the V-statistic objective \eqref{eq:obj_reg0}.
Toward the goal, we relate our ERM form to a stochastic model with a Gaussian process (GP), inspired by \citep{vehtari2016bayesian}.
Different from the counterpart in the ordinary kernel regression, we need to apply additional analysis to deal with the additional weight matrix $W_V$ in the V-statistic risk \eqref{eq:obj_reg0}.

We build the connection between the V-statistic risk and the posterior distribution of a GP model in this subsection, after which we will show that the MMR-IV (Nystr\"om) estimator is equivalent to the Maximum-A-Posteriori estimator based on the GP. The relationship is inspired by the similarity of our objective function to that of GLS.

\textbf{Prior and Likelihood.} Let us consider a GP prior over a space of functions $f(\bm x) \sim \mathrm{GP}(\bm 0, \delta l( \bm x,  \bm x))$, where $\delta>0$ is a real constant, and a set of i.i.d. data $D = \{(x_i,y_i,z_i)\}_{i=1}^{n}$.
To define the likelihood $p(D\,|\,f)$, we recall from \sref{Lemma}{lem:analytic-form} that the risk $R_k(f)$ can be expressed in terms of two independent copies of random variables $(X,Y,Z)$ and $(X',Y',Z')$.
To this end, let $D' := \{(x'_i,y'_i,z'_i)\}_{i=1}^n$ be an independent sample of size $n$ with an identical distribution to $D$.
Given a pair of samples $(x,y,z)$ and $(x',y',z')$ from $D$ and $D'$, respectively, we then define the likelihood as
\begin{align}
p(\{(x,y,z),(x',y',z')\} \,|\, f)
&= \frac{ \exp\left[-\frac{1}{2}(y -f(x))k(z,z')(y'-f(x'))\right]}{ \int \exp\left[-\frac{1}{2}(\bar{y} -f(\bar x))k(\bar z,\bar z')(\bar y'-f(\bar x'))\right] d P\otimes P((\bar x,\bar y,\bar z),(\bar x',\bar y',\bar z'))} .
\end{align}
Here, $P$ is a joint distribution of $(X,Y,Z)$, and $P\otimes P$ is a product of the distribution. 
Let ${t} = \bar{y} -f(\bar x)$ and ${t}' = \bar{y}' -f(\bar x')$ and assume that the kernel $k$ is bounded.
Since $\int \exp(-\frac{1}{2}t k(\bar z,\bar z')t' ) d P \otimes P \leq \max\{ \int \exp(-\frac{1}{2} t^2 C)d P , \exp(-\frac{1}{2} (t')^2 C)d P\}$ with some $C > 0$, the likelihood function is integrable, hence well-defined.
Since the integration in the denominator does not depend on $(x,y,z),(x',y',z')$, we regard the denominator as a normalizing constant and simply denote
\begin{align*}
    p(\{(x,y,z),(x',y',z')\} \,|\, f) &\propto  \exp\left[-\frac{1}{2}(y -f(x))k(z,z')(y'-f(x'))\right].
\end{align*}
Hence, the likelihood on both $D$ and $D'$ can be expressed as
\begin{align}
 p(\{D,D'\}\,|\,f)
    &= \prod_{i=1}^{n} \prod_{j=1}^{n} p(\{(x_i,y_i,z_i),(x'_j,y'_j,z'_j)\} \,|\, f) \nonumber
    \\
    &\propto \exp\left[-\frac{1}{2}\sum_{i=1}^{n}\sum_{j=1}^n(y_i -f(x_i))k(z_i,z'_j
    )(y'_j-f(x'_j))\right] \label{eq:likelihood}.
\end{align}
This likelihood function is regarded to a generalized version of a loss function of generalized least squares with heteroskedastic noise \citep{greene2003econometric}.
That is, if we set $k(z_i,z_j) = \sigma_i^2$ if $i=j$ with $\sigma_i^2 > 0$, and $k(z_i,z_j) = 0$ otherwise, the likelihood corresponds to that of a linear regression model with Gaussian noise having heteroskedastic variance.
In practice, however, we only have access to a single copy of sample, i.e., $D$, but not $D'$.
One way of constructing $D'$ is through data splitting: the original dataset is split into two halves of equal size where the former is used to construct $D$ and the latter is for $D'$.
Unfortunately, this approach reduces the effective sample size that can be used for learning.
Alternatively, we propose to estimate the original likelihood \eqref{eq:likelihood} by using $M$-estimators: given the full dataset $D = \{(x_i,y_i,z_i)\}_{i=1}^n$, our approximated likelihood is defined as
\begin{align*}
    p(D\,|\,f) 
    &=  \prod_{i=1}^{n} \prod_{j=1}^{n} p(\{(x_i,y_i,z_i),(x_j,y_j,z_j)\} \,|\, f(x_i), f(x_j)) \\
    &\propto \exp\left[-\frac{1}{2}\sum_{i=1}^{n}\sum_{j=1}^n(y_i -f(x_i))k(z_i,z_j
    )(y_j-f(x_j))\right] \\
    &= \exp\left[-\frac{1}{2}(\bm y - f(\bm x))^{\top}K_{\bm z}(\bm y - f(\bm x))\right]. %\\
\end{align*}
This likelihood assumes correlated errors with the normal distribution. It is closely related to the standard GP regression which has a similar Gaussian likelihood, but the covariance matrix is computed on $\bm x$ rather than $\bm z$, and the generalized ridge regression as well as the weighted least regression.
The matrix $K_{\bm z}$, as a kernel matrix defined above, plays a role of correlating the residuals $y_i-f(x_i)$.
Based on the likelihood $p(D\,|\,f)$, the maximum likelihood (ML) estimator hence coincides with the minimizer of the unregularized version of our objective \eqref{eq:quadratic-form}.

\textbf{Maximum A Posteriori (MAP).} Combining the above likelihood function with the prior on $f(\bm x)$, the posterior probability of $f(\bm x)$ is
\begin{align*}
    p(f(\bm x) | D ) &=& N(f(\bm x) | \bm c, C) \propto p(f(\bm x)) p(D | f),\\
    C &=& (K_{\bm z} + (\delta L)^{-1})^{-1} =L(LW_VL + (\delta n^2)^{-1}L)^{-1}Ln^{-2},\\
    \bm c &=& C K_{\bm z} \bm y = L(LW_VL + (\delta n^2)^{-1}L)^{-1}L W_V \bm y.
\end{align*}

The maximum a posteriori (MAP) estimate of $f(\bm x)$ is simply $\bm c$,
\begin{align}
    \argmax_{f(\bm x)} \; \log p(f(\bm x) \,|\, D) 
    &= L\underbrace{(L W_V L+(\delta n^2)^{-1} L)^{-1}L W_V \bm y}_{=\hat{\alpha}}.
 \label{eq:MAP}
\end{align}
We can see that the MAP estimator returns the same result as that of the MMR-IV (RKHS) estimator given that $(n^2\delta)^{-1} = \lambda$ (see the context around \eqref{eq:obj_reg0}), and $\delta$ plays the role of the regularization parameter. For MMR-IV (Nystr\"om), we consider the GP model with $W_V$ approximated by $\widehat{U}\widehat{V}\widehat{U}^{\top}$ in $C$ and $\bm c$, and the conclusion still holds,
\begin{align*}
    p(f(\bm x) | D ) &=& N(f(\bm x) | \bm c', C') \\
    C' &=& \delta L [ I - \widetilde{U} (\lambda^{-1} \widetilde{U}^{\top} L\widetilde{U} + \widetilde{V}^{-1})^{-1}\widetilde{U}^{\top}\lambda^{-1}L]
    ,\\
    \bm c' &=& \delta n^2 L [ I - \widetilde{U} (\lambda^{-1} \widetilde{U}^{\top} L\widetilde{U} + \widetilde{V}^{-1})^{-1}\widetilde{U}^{\top}\lambda^{-1}L] \widehat{U}\widehat{V}\widehat{U}^{\top} \bm y.
\end{align*}

We summarize the result in Proposition \ref{thm:gp_interp}. Such a GP interpretation is used for an elegant derivation of the analytical cross-validation error in the following subsection.
\begin{proposition}\label{thm:gp_interp}
    Given $\delta=(\lambda n^2)^{-1}$ and $\hat{f} = \argmin_{f \in \mathcal{H}_l} \mathrm{Eqn.}\, \eqref{eq:obj_reg0}$ (including Nystr\"om approximated),
    $$\argmax_{f(\bm x_*)}\; p(f(\bm x_*)|D) = \hat{f}(\bm x_*).$$
\end{proposition}

\subsection{Analytical Form of Error}
\label{sec:analytical-form}

Now, we derive the analytical error of LMOCV via Bayes' rules based on the posterior GP. We split the whole dataset $D$ into training and development datasets, denoted as $D_{tr}$ and $D_{de} \coloneqq \{\bm x_{de}, \bm y_{de}, \bm z_{de}\}$ respectively, where $D_{de}$ has $M$ triplets of data points. Given $D_{tr}$, the predictive probability of $f(D_{de})$ can be obtained by Bayes' rules (see also Eqn.(22) in \citep{vehtari2016bayesian}) 
\begin{align}
    p(f(\mathbf{x}_{de})|D) &= p(f(\bm x_{de}) | D_{tr}, D_{de}) \notag \\
    &= p(D_{de} | f(\bm x_{de}), D_{tr})p(f(\bm x_{de}) | D_{tr})/p(D_{de}| D_{tr})\notag \\
    &= p(D_{de} |f(\bm x_{de}))p(f(\bm x_{de})|D_{tr})/p(D_{de}| D_{tr}), \notag \\
    \intertext{which implies that}
    p(f(\bm x_{de})|D_{tr}) &= p(f(\bm x_{de})|D) p(D_{de}| D_{tr}) / p(D_{de} |f(\bm x_{de})), \label{eq:p_fde_Dtr}
\end{align}
where $p(f(\bm x_{de}) | D)=N (\bm c_{de}, C_{de})$ with $\bm{c}_{de}$, $C_{de}$ the mean and covariance of $f(\bm x_{de})$ in $p(f(\bm x)| D)$ and $p(D_{de} | f(\bm x_{de}))\propto N(\bm y_{de}, K_{de}^{-1})$, $K_{de}=k(\bm z_{de}, \bm z_{de})$. From this result, we can see that $p(D_{de} |f(\bm x_{de}))$ and $p(f(\bm x_{de})|D_{tr})$ are in Gaussian  forms and $p(D_{de}| D_{tr})$ is a constant term, so $p(f(\bm x_{de}) | D_{tr})$ must be  Gaussian:
$$p(f(\bm x_{de}) | D_{tr}) = N(\bm b, B)$$ where $ B^{-1}=C_{de}^{-1}-K_{de}$ and $
\bm b =B(C_{de}^{-1}\bm c_{de}-K_{de} \bm y_{de})$ can be obtain by solving Eqn.\eqref{eq:p_fde_Dtr}. By \sref{Prop.}{thm:gp_interp}, we know that the prediction on the validation data $f(x_{de})$ given the training data $D_{tr}$ is $\bm b$. Therefore, the error of $m$ repeated LMOCV is  computed as below \eqref{eq:LMOCV}, where we use $(i)$ to denote the $i$-th split of the whole dataset into $D_{de}$ and $D_{tr}$ and $C_{de}^{(i)}$ and as mentioned just above, $\bm{c}_{de}^{(i)}$ are straightforwardly obtained from $p(f(\bm x)|D)$. 
As a result, this enables an efficient parameter selection with the CV principle: the standard LMOCV procedure requires multiple experiments (e.g. $m$ here) on different $D_{tr}$ and $D_{de}$, while our result needs only a single training on $D$ and the validation error of $m$ experiments is simply gained by plugging the training result into the analytical error \eqref{eq:LMOCV}:
\begin{align}\label{eq:LMOCV}
    \text{LMOCV Error} & \coloneqq  \sum_{i=1}^{m} (r^{(i)})^{\top} K_{de}^{(i)}r^{(i)}, \quad 
     r^{(i)}  \coloneqq  \bm b^{(i)}-\bm y_{M}^{(i)} = (I- C_{de}^{(i)} K_{de}^{(i)})^{-1} (\bm c_{de}^{(i)} - \bm y_{de}^{(i)}).
\end{align}

\section{Theoretical Result: Consistency and Asymptotic Normality}
\label{sec:theory}

We provide the consistency and asymptotic normality of $\hat{f}_V$. 
Similar results for $\hat{f}_U$, together with all the proofs, are provided in the appendix. 
Readers interested in the finite sample rate should consult \citep[E.4]{Dikkala20:Minimax} for further details.

%%%%
\subsection{Consistency.}
We first show the consistency of $\hat{f}_V$,
which depends on the uniform convergence of the risk functions.
The result holds for both parametric and non-parametric cases regardless of the shape of $\Omega(f)$, so we can utilize the regularization $\|\theta\|_2^2$ which is common for NN but non-convex in terms of $f$.
In the following result on the consistency, we consider that $\inF$ has topology induced by distance for parameters $\theta \in \Theta$ that characterize functions, that is, for functions $f_\theta, f_{\theta'} \in \inF$, their distance is measured by that of their parameters $\theta$ and $\theta'$: $d(f_\theta, f_{\theta'}) = \|\theta - \theta'\|_2$.
\begin{proposition}[Consistency of $\hat{f}_V$] \label{thm:cons_emp_fv_nonconvex}
Assume that $\ep[|Y|^2]<\infty$, $\ep[\sup_{f\in \inF}|f(X)|^2]<\infty$, $\inF$ is compact, \sref{Assumption}{asmp:k_idp}, \ref{asmp:opt_ident} hold, $\Omega(f)$ is a bounded function and $\lambda \pto 0$. Then  $\hat{f}_V \pto f^*$.
\end{proposition}
If $\Omega(f)$ is convex in $f$, the consistency can be obtained more easily by \citep[Thm. 2.7]{NEWEY19942111} which  relies on the convexity of the risk functions.
In this case, we can avoid several conditions.
We provide an additional result with this setting in Appendix. \ref{pf:cons_emp_fv_convex}.

%%%
\subsection{Asymptotic Normality with Finite-Dimension Case}
We analyze asymptotic normality of the estimator $\hat{f}_V$, which is important to advanced statistical analysis such as tests.
The results are obtained by applying delta method with finite parameters and functionals respectively.

We first consider the $\hat{f}_V$ that is characterized by a finite-dimensional parameter from a parameter space $\Theta$. 
We rewrite the regularized V-statistic risk as a compact form $\widehat{R}_{V,\lambda}(f_{\theta})\coloneqq \frac{1}{n^2}\sum_{i,j}h_{\theta}(u_i,u_j)+\lambda \Omega(\theta)$, $h_{\theta}(u_i,u_j)\coloneqq (y_i-f_{\theta}(x_i))(y_j-f_{\theta}(x_j))k(z_i,z_j)$, and consider $R_k(f_\theta)$ is uniquely minimized at $\theta^* \in \Theta$ and $\rightsquigarrow$ denotes a convergence in law.
\begin{theorem}[Asymptotic Normality of $\hat{\theta}_V$]\label{thm:asym_norm_theta_v}
Suppose that $f_\theta$ and $\Omega(\theta)$ are twice continuously differentiable about $\theta$, $\Theta$ is compact, $H= \ep[\nabla^2_{\theta} h_{\theta^*}(U,U')]$ is non-singular,  $\ep[|Y|^2]<\infty$, $\ep \left [\sup_{\theta\in \Theta}|f_\theta (X)|^2\right]<\infty$, $\ep \left [\sup_{\theta\in \Theta} \|\nabla_\theta f_\theta (X)\|_2^2\right]<\infty$, $\ep \left [\sup_{\theta\in \Theta} \|\nabla_\theta^2 f_\theta (X)\|_F^2\right]<\infty$, $\sqrt{n}\lambda \pto 0$, $R_k(f_\theta)$ is uniquely minimized at $\theta^*$ which is an interior point of $\Theta$,  and \sref{Assumption}{asmp:k_idp} holds. Then,
\begin{align*}
    \sqrt{n} (\hat{\theta}_V -\theta^*) \rightsquigarrow N(\bm 0, \Sigma_V)
\end{align*}
holds, where $\Sigma_V = 4H^{-1}\mathrm{diag}(\ep_U[\ep^{2}_{U'}[h_{\theta^*}(U,U')]])H^{-1}$.
\end{theorem}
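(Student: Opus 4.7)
The plan is to follow the classical M-estimator strategy: use the first-order condition at $\hat{\theta}_V$, expand by the mean value theorem about $\theta^*$, and identify the limiting distribution via a central limit theorem (CLT) for V-statistics combined with a uniform law of large numbers (ULLN) for the Hessian.

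I would first invoke \sref{Theorem}{thm:cons_emp_fv_nonconvex} (after checking its hypotheses carry over under the stronger differentiability and moment conditions assumed here) to obtain $\hat{\theta}_V \pto \theta^*$. Since $\theta^*$ is an interior point of the compact $\Theta$, with probability tending to one $\hat{\theta}_V$ is also interior, so the first-order condition $\nabla_\theta \widehat{R}_{V,\lambda}(\hat{\theta}_V)=0$ holds. A mean-value expansion in $\theta$ then yields
\begin{equation*}
    0 \;=\; \nabla_\theta \widehat{R}_{V,\lambda}(\theta^*) \;+\; \nabla^2_\theta \widehat{R}_{V,\lambda}(\tilde{\theta}_n)\,(\hat{\theta}_V-\theta^*),
\end{equation*}
for some $\tilde{\theta}_n$ on the segment joining $\hat{\theta}_V$ and $\theta^*$, so that $\sqrt{n}(\hat{\theta}_V-\theta^*) = -[\nabla^2_\theta \widehat{R}_{V,\lambda}(\tilde{\theta}_n)]^{-1}\sqrt{n}\,\nabla_\theta \widehat{R}_{V,\lambda}(\theta^*)$ once invertibility is established.

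Next I would deal with the Hessian. The term $\lambda \nabla^2_\theta \Omega(\tilde{\theta}_n)$ is $o_p(1)$ because $\lambda\to 0$ and $\nabla^2_\theta \Omega$ is continuous on compact $\Theta$. For the V-statistic part, I would establish a ULLN for $\nabla^2_\theta \widehat{R}_V(\theta)$ on a neighborhood of $\theta^*$: the kernel of the second-derivative V-statistic is dominated by an integrable envelope built from $|Y|$, $\sup_\theta |f_\theta(X)|$, $\sup_\theta \|\nabla_\theta f_\theta(X)\|_2$, $\sup_\theta \|\nabla^2_\theta f_\theta(X)\|_F$, and $\sup_z k(z,z)$, all of which are in $L^2$ by assumption and \sref{Assumption}{asmp:k_idp}. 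Combined with $\tilde\theta_n\pto \theta^*$ and continuity, this yields $\nabla^2_\theta \widehat{R}_{V,\lambda}(\tilde{\theta}_n)\pto 2H$ (the factor $2$ arising from the symmetry of $h_\theta$ when differentiated twice), which is invertible by hypothesis.

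The main obstacle is the asymptotic distribution of $\sqrt{n}\,\nabla_\theta \widehat{R}_{V,\lambda}(\theta^*)$. Since $\sqrt{n}\lambda\to 0$ and $\nabla_\theta\Omega(\theta^*)$ is bounded, the regularization contribution vanishes, so it suffices to analyze the V-statistic $V_n \coloneqq \tfrac{1}{n^2}\sum_{i,j}\nabla_\theta h_{\theta^*}(u_i,u_j)$. Because $\theta^*$ minimizes $R_k(f_\theta)$ in the interior, $\ep[\nabla_\theta h_{\theta^*}(U,U')]=\nabla_\theta R_k(f_{\theta^*})=0$. I would split off the diagonal, writing $V_n = \tfrac{n(n-1)}{n^2}U_n + \tfrac{1}{n^2}\sum_i \nabla_\theta h_{\theta^*}(u_i,u_i)$; the diagonal term is $O_p(1/n)$ under the stated moment conditions, hence $o_p(1/\sqrt n)$, so $\sqrt{n}V_n$ and $\sqrt{n}U_n$ have the same limit. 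For the symmetric U-statistic $U_n$ I would apply Hoeffding's decomposition: the projection is $\tfrac{2}{n}\sum_i g_1(u_i)$ with $g_1(u)\coloneqq \ep[\nabla_\theta h_{\theta^*}(u,U')]$, and the second-order remainder is $O_p(1/n)$. The classical CLT then gives $\sqrt{n}\,U_n \rightsquigarrow N(\bm 0, 4\,\ep[g_1(U)g_1(U)^\top])$. Plugging back through the Hessian via Slutsky yields $\sqrt{n}(\hat\theta_V-\theta^*)\rightsquigarrow N(\bm 0, \Sigma_V)$ with $\Sigma_V = 4H^{-1}\,\ep[g_1(U)g_1(U)^\top]\,H^{-1}$, matching the stated form. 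The delicate points are (i) the ULLN for the matrix-valued second-derivative V-statistic kernel under only $L^2$ envelopes, which may require a mild uniform integrability argument via Cauchy–Schwarz and the boundedness of $k$, and (ii) verifying non-degeneracy of $g_1$ so the CLT is non-trivial, which follows from \sref{Assumption}{asmp:k_idp} and \sref{Assumption}{asmp:opt_ident} together with the unique identification of $\theta^*$.
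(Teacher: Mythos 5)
Your overall strategy is the same as the paper's: the paper verifies the hypotheses of Newey--McFadden's Theorem 3.1 (which packages your mean-value expansion), obtains the gradient CLT from Serfling's U-statistic theory and reduces the V-statistic to the U-statistic by discarding the diagonal (Serfling, Section 5.7.3), and proves the Hessian ULLN by the same envelope-domination argument you sketch. So there is no methodological divergence; you are essentially unpacking the cited black boxes.

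There is, however, one concrete error: the limit of the Hessian. Since $\widehat{R}_V(f_\theta) = n^{-2}\sum_{i,j} h_\theta(u_i,u_j)$, the law of large numbers for V-statistics gives $\nabla^2_\theta \widehat{R}_{V,\lambda}(\tilde{\theta}_n) \pto \ep[\nabla^2_\theta h_{\theta^*}(U,U')] = H$, not $2H$. The factor of $2$ you attribute to ``the symmetry of $h_\theta$ when differentiated twice'' is already contained in $H$ itself: expanding $\nabla^2_\theta h_{\theta^*}(u,u')$ produces two symmetric terms whose expectation over independent copies is $2\,\ep[(\nabla_\theta f_{\theta^*}(X)\nabla_\theta f_{\theta^*}^{\top}(X') - (Y-f_{\theta^*}(X))\nabla^2_\theta f_{\theta^*}(X'))k(Z,Z')]$, and this quantity \emph{is} $H$ by definition. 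As written, your claim $\nabla^2_\theta \widehat{R}_{V,\lambda}(\tilde\theta_n)\pto 2H$ is inconsistent with your final sandwich formula: it would yield $(2H)^{-1}\,4\,\ep[g_1g_1^\top]\,(2H)^{-1} = H^{-1}\ep[g_1g_1^\top]H^{-1}$, which is a factor of $4$ smaller than the stated $\Sigma_V$. Dropping the spurious factor of $2$ repairs the argument, and the rest of your steps (negligibility of the diagonal under the $L^2$ envelopes, vanishing of the $\sqrt{n}\lambda\nabla_\theta\Omega(\theta^*)$ term, the Hoeffding projection with $g_1(u)=\ep[\nabla_\theta h_{\theta^*}(u,U')]$, and the non-degeneracy check) line up with the paper's Lemmas on the first and second derivatives of $\widehat{R}_{U,\lambda}$ and $\widehat{R}_{V,\lambda}$.
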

{\bc 
\subsubsection{Connection to Efficiency} \label{sec:efficiency}

The variance of the asymptotic distribution varies with the choice of kernel.
To see this, we consider the constants $\Lambda_{\mathrm{min}} \leq \Lambda_{\mathrm{max}}$ such that
\begin{align*}
    \Lambda_{\mathrm{min}} = \inf_{g : \|g\|_{L^2} = 1} \iint_{\mathcal{Z}} g(z) k(z,z') g(z') \mathrm{d} z \mathrm{d} z'\leq \sup_{g : \|g\|_{L^2} = 1} \iint_{\mathcal{Z}} g(z) k(z,z') g(z') \mathrm{d} z \mathrm{d} z'= \Lambda_{\mathrm{max}}.
\end{align*}
Their positiveness and existence are guaranteed by Assumption \ref{asmp:k_idp}.
By Theorem 4.51 in \citep{steinwart2008support}, $\Lambda_{\mathrm{min}}$ and $\Lambda_{\mathrm{max}}$ correspond to the smallest and largest eigenvalue of the embedding operator $T:L^2 \to \mathcal{H}_k$ where $\mathcal{H}_k$ is the RKHS whose reproducing kernel is $k$, respectively.
Equivalently, by the Mercer's theorem (Theorem 4.49 in \citep{steinwart2008support}), we have the form of $k$ as
\begin{align*}
    k(z,z') = \sum_{j} \lambda_j \phi_j(x) \phi_j(z'),
\end{align*}
where $\{\lambda_j\}_j$ are coefficients and $\phi_j(\cdot)$ is an orthonormal basis function, and $\Lambda_{\mathrm{max}} = \max_j \lambda_j$ and $\Lambda_{\mathrm{min}} = \min_j \lambda_j$.
We also define an asymptotic variance \textit{without} the kernel $k$.
That is, we define a loss without the kernel as $\check{h}_{\theta}(u_i,u_j)\coloneqq (y_i-f_{\theta}(x_i))(y_j-f_{\theta}(x_j))$, and its associated matrices $\check{H}= \ep[\nabla^2_{\theta} \check{h}_{\theta^*}(U,U')]$ and $\check{\Sigma}_V = 4\check{H}^{-1}\mathrm{diag}(\ep_U[\ep^{2}_{U'}[\check{h}_{\theta^*}(U,U')]])\check{H}^{-1}$.
Let $\|\Sigma_V\|_F = \sqrt{\mathrm{tr}(\Sigma_V^2)}$ be the Frobenius norm for square matrices.

\begin{proposition} \label{prop:efficiency}
    Suppose that Assumption \ref{asmp:k_idp} holds.
    Define $\delta_\Lambda := \max\{|\Lambda_{\mathrm{max}}-1|, |1 - \Lambda_{\mathrm{min}}|\}$.
    Then, we have
    \begin{align*}
        \|\Sigma_V\|_F \leq \|\check{\Sigma}_V\|_F + O(\Lambda_{\mathrm{max}}\delta_\Lambda + (\Lambda_{\mathrm{max}}\delta_\Lambda)^2).
    \end{align*}
\end{proposition}

This result evaluates the increase in the asymptotic variance due to the choice of kernel by the upper bound. 
Strictly speaking, as the eigenvalues $\Lambda_{\mathrm{min}}$ and $\Lambda_{\mathrm{max}}$ of the kernel $k$ approach $1$, i.e., as the kernel function approaches a constant function, the asymptotic variance decreases at the speed of the first and second power of $\delta_\Lambda$. 
Therefore, this result demonstrates that utilizing the simplest possible kernel, one that is not focused on a specific eigenspace, is the method for preventing an increase in the asymptotic variance.

Intuitively, this result represents the difficulty of choosing the kernel function as the optimal IV in nonlinear IV regression. 
Unlike the case of linear regression \citep{Newey93:CMR}, the nonlinear IV regression does not provide information such as which eigenspaces of data have large variance. 
Thus, the anisotropy of eigenvalues of kernels may worsen the efficiency in the worst case. 
However, efficiency can be improved if the anisotropy of the eigenvalues fit the data. 
If such properties could be known, one can develop an optimal kernel selection procedure based on them.

}

\subsection{Asymptotic Normality of Infinite-Dimension Case}
We show the asymptotic normality of an infinite-dimensional estimator $\hat{f}_V$.
That is, we show that an error of $\hat{f}_V$ weakly converges to a Gaussian process that takes values in a function space $\mathcal{H}_l$.
We set $\Omega(f) = \|f\|_{\mathcal{H}_l}^2$ and consider a minimizer: $f^*_{\lambda_0} \in \argmin_{f\in\mathcal{F}}\; R_k(f) + \lambda_0 \|f\|_{\mathcal{H}_l}^2$ with arbitrary $\lambda_0 > 0$.
We also define $\mathcal{N}(\varepsilon, \mathcal{H}, \|\cdot\|)$ as an $\varepsilon$-covering number of $\mathcal{H}$ in terms of $\|\cdot\|$.
Then, we obtain the following result which allows statistical inference on functional estimators such as kernel machines.

\begin{assumption}[Low-Entropy Condition] \label{asmp:entropy}
    There exists $s \in (0,2)$ and a constant $C_H > 0$ such that
    \begin{align*}
        \log \mathcal{N}(\varepsilon, \mathcal{H}_l, \|\cdot\|_{L^\infty}) \leq C_H \varepsilon^{-s},
    \end{align*}
    for every $\varepsilon \in (0,1)$.
\end{assumption}
Although the covering number condition $\log \mathcal{N}(\varepsilon, \mathcal{H}_l, \|\cdot\|_{L^\infty}) \leq C_H \varepsilon^{-s}$ in Theorem \ref{thm:asympnorm} is strong, we can present some examples that satisfy the condition.
    A representative example is the set of Lipschitz functions on a bounded interval. The condition is then satisfied under $s=1$ (Example 5.10 in \citep{wainwright2019high}). Although the support is one-dimensional, it can cover non-differentiable functions.
    Another example is the set of $b$-times differentiable functions on a compact set in $\mathbb{R}^d$ with $b > 2d$ (Theorem 2.7.1 in \citep{vanweak}). 
    Although requires higher-order smoothness, it can handle functions with multidimensional inputs.
    The last example is the RKHS with kernel functions characterized by the spectrum: the condition is satisfied when $s$ is a decay rate of the eigenvalues of the embedding operator into the RKHS. 
    Several common kernels, such as the Gaussian kernel (see Section 4 in \citep{steinwart2008support}) and the Mercer kernel (explained in \citep{zhou2002covering}), satisfy it with a certain parameter configuration.
    This characterization is used in several machine learning studies, e.g., \citep{steinwart2009optimal,caponnetto2007optimal}.

\begin{theorem}[Asymptotic Normality of $\hat{f}_V$] \label{thm:asympnorm}
    Suppose \sref{Assumption}{asmp:k_idp} holds,  $l$ is a bounded kernel, $k$ is a uniformly bounded function, and  $\lambda - \lambda_0 = o(n^{-1/2} )$ holds. 
    Also, suppose that $\inx$, $\inz$, and $\iny$ are compact spaces, and Assumption \ref{asmp:entropy} holds.
    We define a Gaussian process $\mathbb{G}_P^*$ on $\mathcal{H}_l$ with zero mean and its covariance
    \begin{align*}
        \mathrm{Cov}(G_P^*(f), G_P^*(f')) =  \ep_{U}[P^1{h}_f(U) P^1{h}_{f'}(U)] - \ep_{U}[P^1{h}_f(U)]\ep_{U}[P^1{h}_{f'}(U)]
    \end{align*}
    with $h_f(u,u') = (y - f(x)) (y' - f(x')) k(z, z')$ and $P^1{h}_f(\cdot) = (\int h_f(u,\cdot) + h_f(\cdot,u) \text{d} P(u))/2$.
    Then, there exists a linear operator $\nabla S_{P^2, \lambda_0}: \mathcal{H}_l \to \mathcal{H}_l$ such that
    \begin{align*}
        \sqrt{n}(\hat{f}_{V} - f^*_{\lambda_0}) \rightsquigarrow \nabla S_{P^2, \lambda_0}(\mathbb{G}_P^*) \mbox{~in~} \mathcal{H}_l.
    \end{align*}
    Note that $\nabla S_{P^2, \lambda_0}(\mathbb{G}_P^*)$ is also a Gaussian process on $\mathcal{H}_l$ owing to the linearily of $\nabla S_{P^2, \lambda_0}$ (Section 3.9.2 in \citep{vanweak}).
\end{theorem}

We provide the detailed derivation in the appendix.
A specific form of the operator $\nabla S_{P^2, \lambda_0}$ is also given in the appendix.
All conditions for the theorem are valid for many well-known kernels. 
For the boundedness assumption, many common kernels, such as the Gaussian RBF kernel, the Laplacian kernel, and the Mercer kernel, satisfy it.

This nonparametric asymptotic normality is a generic result that leads to finite-dimensional asymptotic normality.
That is, for every finite set $\{x_j\}_{j=1}^N \subset \mathcal{X}$, Theorem \ref{thm:asympnorm} immediately implies the following convergence:
\begin{align*}
    \sqrt{n} 
    \begin{pmatrix}
        \hat{f}_{V}(x_1) - f^*_{\lambda_0}(x_1) \\
        \vdots \\
        \hat{f}_{V}(x_N) - f^*_{\lambda_0}(x_N)
    \end{pmatrix}
    \rightsquigarrow \mathcal{N}_N(0, \Sigma),
\end{align*}
where $\mathcal{N}_N(0,\Sigma)$ is the $N$-variate normal distribution with zero mean and a corresponding covariance matrix $\Sigma$.
This result is a convenient generalization of asymptotic normality in a parametric setting.

\begin{remark}[On asymptotic normality]
Theorem \ref{thm:asympnorm} and
$\lambda_0 >0$ results in a biased center $f_{\lambda_0}^*$ while it is essential to obtain the limit distribution in infinite dimensional space.
This is one of the reasons why $\sqrt{n}$-convergence can be achieved in Theorem \ref{thm:asympnorm}.
Although it is not very satisfactory, we note that deriving the limit distribution has a different nature than obtaining error bounds and it is more restrictive than to develop a learner with bounded generalization errors. 
One can find that similar difficulty appears in deriving a limit with the non-Donsker class \citep{vanweak}.
To remove this constraint, we need to introduce some strong assumption, such as the low noise condition in the classification problem, as discussed in \citep{hable2012asymptotic}.
However, we do not solve this problem in this study because such a response would go far beyond the scope of this work.
\end{remark}

%%%
\section{Related Work} 
\label{sec:related-work}

%%%%%

Several extensions of 2SLS and GMM exist for the nonlinear IV problem.
In the two-stage approach, the function $f(x)$ has often been obtained by solving a Fredholm integral equation of the first kind $\mathbb{E}[Y|Z] = \int f(x) \dd P(x|Z)$.
In \citep{Newey03:NIV,Blundell07:Semi-NIV,horowitz2011applied,Chen12:NIV}, linear regression is replaced by a linear projection onto a set of known basis functions.
Like the kernel choice problem in our work, this approach requires choosing the appropriate set of basis functions.
A uniform convergence rate of this approach is provided in \citep{Chen18:NIV-Uniform}.
In \citep{Hall05:NIV} and \citep{Darolles11:NIV}, the first-stage regression is replaced by a conditional density estimation of $\mathbb{P}(X|Z)$ using a kernel density estimator.
On the contrary, our approach does not rely on the conditional density estimation problem.

The IV regression has also recently received attention in the machine learning community. 
\citep{Hartford17:DIV} proposed to solve the integral equation by first estimating $P(X|Z)$ with a mixture of deep generative models on which the function $f(x)$ can be learned with another deep NNs.
Instead of NNs, \citep{Singh19:KIV} proposed to model the first-stage regression using the conditional mean embedding of $P(X|Z)$ \citep{Song10:KCOND,Song2013,Muandet17:KME} which is then used in the second-step kernel ridge regression.
In other words, the first-stage estimation in \citep{Singh19:KIV} becomes a vector-valued regression problem.
In an attempt to alleviate the two-stage-estimation spirit, \citep{Muandet19:DualIV} and \citep{Liao20:NeuralSEM} reformulate the two-stage procedure as a convex-concave saddle-point problem, which is in a minimax form, via studying the Fenchel and Lagrange dual reformulations of the population risk, respectively.
By using RKHSes for the inner maximization, DualIV \citep{Muandet19:DualIV} obtains a quadratic objective function similar to ours, but its RKHS is applied over $(Z,Y)$ which cannot be interpreted as a valid instrument.
In contrast, the approach of \citep[Appendix F]{Liao20:NeuralSEM} provides an exact dual reformulation of our method and obtains our objective if RKHSes are used for the inner maximization. 
% Besides, note that the starting objectives of these works differ from ours:
% in both DualIV and \citep{Liao20:NeuralSEM}, they started from the population risk functional, whereas we start from the CMR. 
Besides, the starting objectives of the above works differ from ours:
in \citep{Hartford17:DIV}, \citep{Singh19:KIV}, \citep{Muandet19:DualIV} and \citep[Appendix F]{Liao20:NeuralSEM}, they started from minimizing the $L_2$ norm of the CMR, whereas we start from minimizing the squared UMR. 
The fact that they arrive at the same objective hints a deeper connection which requires further investigation.

Our work follows in spirit many GMM-based approaches for IV regression, namely, \citep{Lewis18:AGMM,Bennett19:DeepGMM,Muandet20:KCM}. 
We adapt the MMR framework of \citep{Muandet20:KCM}, which  only considers a conditional moment testing problem, to the parameter estimation problem of IV regression, and hence our approaches and both of theoretical and experimental analyses are different from theirs.
In fact, this framework was initially inspired by \citep{Lewis18:AGMM} and \citep{Bennett19:DeepGMM} which instead parametrize the instruments by deep NNs.
By combining the GMM framework with RKHS functions, the objective function can be evaluated in closed-form as shown in our work. 
As a result, our IV estimate can be obtained by minimizing the empirical risk, as opposed to an adversarial optimization \citep{Lewis18:AGMM,Bennett19:DeepGMM}. It is important to note that recently these adversarial approaches have been improved similarly with RKHSs by \citep{Dikkala20:Minimax,Bennett20:variational}, while there is major difference. First, \citep{Dikkala20:Minimax} extends the work of \citep{Lewis18:AGMM} to a single-stage algorithm similar to our MMR-IV (RKHS) \citep[Section 4]{Dikkala20:Minimax}.
Although both works employ RKHSs in the minimax frameworks,  \citep{Dikkala20:Minimax} incorporate a Tikhonov regularization on $h$ in \eqref{eq:mmr} and resort to the representer theorem \citep{Scholkopf01:Representer} to develop the analytical objective function, whereas we impose a unit-ball constraint which is a form of Ivanov regularization \citep{Ivanov02:Ill-posed}.
the basis of our objective is the adversarial GMM objective of \citep{Lewis18:AGMM}. 
Furthermore, prior works do not study the analytical cross validation error. 

We also discuss connections to existing works in the areas outside of the IV setting in Section \ref{sec:related-work-2} in Appendix.

%%%%%

\section{Experimental Results} 
\label{sec:experiments}

We present the experimental results in a wide range of settings for IV estimation. 
Following \citep{Lewis18:AGMM} and \citep{Bennett19:DeepGMM}, we consider both low and high-dimensional scenarios.
In the experiments, we compare our algorithms to the following baseline algorithms: 
\begin{itemize}
\item \texttt{DirectNN}: A standard least square regression on $X$ and $Y$ using a neural network (NN).
\item \texttt{2SLS}: A vanilla 2SLS on raw $X$ and $Z$. 
\item \texttt{Poly2SLS}: The 2SLS that is performed on polynomial features of $X$ and $Z$ via ridge regressions.
{\bc 
\item \texttt{SieveIV} \citep{chen2021adaptive}: This algorithm solves the CMR via the sieve method, which uses an orthonormal basis functions. In the implementation, we utilize the trigonometric basis, and the number of basis functions is selected by the Lepski's method.
The details are found in \citep{chen2021adaptive} and its related studies.
}
\item \texttt{DeepIV} \citep{Hartford17:DIV}: A nonlinear extension of 2SLS using deep NNs. We use the implementation available at \url{https://github.com/microsoft/EconML}.
\item \texttt{KernelIV} \citep{Singh19:KIV}: A generalization of 2SLS by modeling relations among $X$, $Y$, and $Z$ as nonlinear functions in RKHSs. We use the publicly available implementation at \url{https://github.com/r4hu1-5in9h/KIV}.   
\item \texttt{GMM+NN}: An optimally-weighted GMM \citep{Hansen82:GMM} is combined with a NN $f(X)$. The details of this algorithm can be found in \citep[Section 5]{Bennett19:DeepGMM}.
\item \texttt{AGMM} \citep{Lewis18:AGMM}: This algorithm models $h(Z)$ by a deep NN and employs a minimax optimization to solve for $f(X)$. The implementation we use is available at \url{https://github.com/vsyrgkanis/adversarial_gmm}.
\item \texttt{DeepGMM} \citep{Bennett19:DeepGMM}: This algorithm is a variant of \texttt{AGMM} with optimal inverse-covariance weighting matrix. 
The publicly available implementation at \url{https://github.com/CausalML/DeepGMM} is used and all of the above baselines are provided in the package except KernelIV.
\item \texttt{AGMM-K} \citep{Dikkala20:Minimax}: This algorithm extends AGMM by modeling $h(Z)$ and $f(X)$ as RKHSs. Nystr\"om approximation is applied for fast computation.  The publicly available implementation at \url{https://github.com/microsoft/AdversarialGMM} is used.
\item \texttt{DualIV} \citep{Muandet19:DualIV}: This algorithm solves the CMR via a saddle-point reformulation and obtains a minimax problem. By using a RKHS for the inner maximization, it gets a similar objective to ours while the central kernel matrix is evaluated on $Z$ and $Y$, which thus can't be viewed as instruments and different from ours. The publicly available implementation at \url{https://github.com/krikamol/DualIV-NeurIPS2020} is used.
\end{itemize}

%%%%
\begin{table}[t!]
    \centering
    \caption{{\bc The mean square error (MSE) $\pm$ one standard deviation in the large-sample regime ($n = 2000$). Italics denote the second best. \label{tab:result_low_dim_large}}}
    \begin{tabular}{l c c c c} 
    \toprule
    \multirow{2}{*}{\textbf{Algorithm}} & \multicolumn{4}{c}{\textbf{True Function $f^*$}} \\
    & abs & linear & sin & step \\ 
    \midrule
    \texttt{DirectNN}    & .116 $\pm$ .000 & .035 $\pm$ .000 & .189 $\pm$ .000 & .199 $\pm$ .000 \\
    \texttt{2SLS} & .522 $\pm$ .000 & \textbf{.000} $\pm$ \textbf{.000} & .254 $\pm$ .000 & .050 $\pm$ .000 \\
    \texttt{Poly2SLS}    & .083 $\pm$ .000 & \textbf{.000} $\pm$ \textbf{.000} & .133 $\pm$ .000 & .039 $\pm$ .000 \\
    \texttt{GMM+NN}         & .318 $\pm$ .000 & .044 $\pm$ .000 & .694 $\pm$ .000 & .500 $\pm$ .000 \\
    \texttt{AGMM}        & .600 $\pm$ .001 & .025 $\pm$ .000 & .274 $\pm$ .000 & .047 $\pm$ .000 \\
    \texttt{DeepIV}      & .247 $\pm$ .004 & .056 $\pm$ .003 & .165 $\pm$ .003 & .038 $\pm$ .001 \\
    \texttt{DeepGMM}  & .027 $\pm$ .009 & .005 $\pm$ .001 & .160 $\pm$ .025 & \textit{.025} $\pm$ \textit{.006} \\
    \texttt{KernelIV}  & \textit{.019}  $\pm $ \textit{.000} & .009  $\pm $ .000 & \textit{.046}  $\pm$ \textit{.000} & .026  $\pm $ .000\\
    \texttt{AGMM-K} & 181 $\pm$ .000 & 2.34 $\pm$ .000 & 19.4 $\pm$ .000 & 4.13 $\pm$ .000 \\
    \texttt{DualIV} & .344 $\pm$ .000 & .034 $\pm$ .000 & .379 $\pm$ .000 & .345 $\pm$ .000 \\
    \texttt{SieveIV} & .170 $\pm$ .009 & .279 $\pm$ .001 & .021 $\pm$ .001 & 6.89 $\pm$ .044 \\
    \texttt{MMR-IV}\textsubscript{NN} & \textbf{.011} $\pm $ \textbf{.002} &.005  $\pm $ .000  & .153 $\pm$ .019  & .040 $\pm$ .004 \\
    \texttt{MMR-IV}\textsubscript{Nys} & \textbf{.011} $\pm$ \textbf{.001} & \textit{.001} $\pm$ \textit{.000} & \textbf{.006} $\pm$ \textbf{.002}  & \textbf{.020} $\pm$ \textbf{.002} \\
    \bottomrule
    \end{tabular}
\end{table}

%%%%
\begin{table}[t!]
    \centering    
    \caption{{\bc The mean square error (MSE) $\pm$ one standard deviation in the small-sample regime ($n=200$). Italics denote the second best. \label{tab:result_low_dim_small}}}
    \begin{tabular}{l c c c c} 
    \toprule
    \multirow{2}{*}{\textbf{Algorithm}} & \multicolumn{4}{c}{\textbf{True Function $f^*$}} \\
    & abs & linear & sin & step \\ 
    \midrule
    \texttt{DirectNN}    & .143 $\pm$ .000 & .046 $\pm$ .000 & .404 $\pm$ .006 & .253 $\pm$ .000 \\
    \texttt{2SLS} & .564 $\pm$ .000 & \textbf{.003} $\pm$ \textbf{.000} & .304 $\pm$ .000 & .076 $\pm$ .000 \\
    \texttt{Poly2SLS}    & .125 $\pm$ .000 & \textbf{.003} $\pm$ \textbf{.000} & .164 $\pm$ .000 & .077 $\pm$ .000 \\
    \texttt{GMM+NN}         & .792 $\pm$ .000 & .203 $\pm$ .000 & 1.56 $\pm$ .001 & .550 $\pm$ .000 \\
    \texttt{AGMM}        & .031 $\pm$ .000 & .011 $\pm$ .000 & .330 $\pm$ .000 & .080 $\pm$ .000 \\
    \texttt{DeepIV}      & .204 $\pm$ .008 & .047 $\pm$ .004 & .197 $\pm$ .004 & \textbf{.039} $\pm$ \textbf{.001} \\
    \texttt{DeepGMM}  & \textit{.022} $\pm$ \textit{.003} & .032 $\pm$ .016 & .143 $\pm$ .030 & \textit{.039} $\pm$ \textit{.002} \\
    \texttt{KernelIV}  & .063 $\pm$ .000 & .024 $\pm$ .000 & \textit{.086} $\pm$ \textit{.000} & .055 $\pm$ .000\\
    \texttt{AGMM-K} & 12.3 $\pm$ .000 & 1.32 $\pm$ .000 & 1.57 $\pm$ .000 & 1.71 $\pm$ .000 \\
    \texttt{DualIV} & .202 $\pm$ .000 & .103 $\pm$ .000 & .251 $\pm$ .000 & .362 $\pm$ .000 \\
    \texttt{SieveIV} & .466 $\pm$ .002 & .452 $\pm$ .001 & .363 $\pm$ .002 & 9.07 $\pm$ .038 \\
    \texttt{MMR-IV (NN)} & \textbf{.019} $\pm$ \textbf{.003} & \textit{.004} $\pm$ \textit{.001}  & .292 $\pm$ .024  & .075 $\pm$ .008 \\
    \texttt{MMR-IV (RKHS)} & .030 $\pm$ .000 &.011 $\pm$ .000  & \textbf{.075} $\pm$ \textbf{.000}  & .057 $\pm$ .000 \\
    \bottomrule
    \end{tabular}
\end{table}

%%%
\subsection{Low-dimensional Scenarios}
\label{sec:low_dim_sce}
Following \citep{Bennett19:DeepGMM}, we employ the following data generation process:
\begin{equation*}
    \textstyle Y = f^* (X) + e + \delta,\,\, X = Z_1+e+\gamma, 
    \\
    Z:=(Z_1,Z_2), 
\end{equation*}
where $e \sim \mathcal{N}(0,1)$, $\gamma, \delta \sim \mathcal{N}(0,0.1^2)$ and $Z\sim \mathrm{Uniform}([-3,3]^2)$ is a two-dimensional IV, but $Z_1$ has an effect on X. 
The variable $e$ is the confounding variable that creates the correlation between $X$ and the residual $Y-f^*(X)$. 
We vary the true function $f^*$ between the following cases to enrich the datasets:
\begin{enumerate*}[label=(\roman*),noitemsep]
   \item \texttt{sin}: $f^*(x)= \sin(x)$.
   \item \texttt{step}: $f^*(x)= \mathrm{1}_{\{x\geq 0\}}$.
   \item \texttt{abs}: $f^*(x)= |x|$.
   \item \texttt{linear}: $f^*(x)=x$.
\end{enumerate*}
We consider both small-sample ($n=200$) and large-sample ($n=2000$) regimes. 

\subsubsection{Experimental Settings} 
For the experiments on low-dimensional data, we consider both small-sample ($n=200$) and large-sample ($n=2000$) regimes, in which $n$ points are sampled for training, validation and test sets, respectively.
In both regimes, we standardize the values of $Y$ to have zero mean and unit variance for numerical stability. 

Hyper-parameters of NNs in Algorithm \ref{alg:mmriv-nn}, including the learning rate and the regularization parameter, are chosen by 2-fold CV for fair comparisons with the baselines. 
As per the dimensions of $X$, we parametrize $f$ either as a fully connected neural network with leaky ReLU activations and 2 hidden layers, each of which has 100 cells, for non-image data, or a deep convolutional neural network (CNN) architecture for MNIST data.  We denote the fully connected neural network as FCNN(100,100) and refer readers to our code release for exact details on our CNN construction. Learning rates and regularization parameters are summarized in \sref{Table}{tab:hyper_nn}.
Besides, we use the well-tuned hyper-parameter selections of baselines provided in their packages without changes. We fix the random seed to 527 for all data generation and model initialization.

\begin{table}[t!]
    \centering    
    \caption{Hyper-parameters of neural networks used in the experiments.}
    \label{tab:hyper_nn}
    \resizebox{\textwidth}{!}{
    \begin{tabular}{c c c c} 
    \toprule
    \textbf{Scenario}  & \textbf{Model $f$}  & \textbf{Learning Rates} & $\lambda$ \\
    \midrule
    Low Dimensional  & FCNN(100,100)  & ($10^{-12},10^{-11},10^{-10},10^{-9},10^{-8},10^{-7},10^{-6}$)& ($5\times10^{-5},10^{-4},2\times 10^{-4}$) \\
    MNIST\textsubscript{Z}  & FCNN(100,100)  & ($10^{-12},10^{-11},10^{-10},10^{-9},10^{-8},10^{-7},10^{-6}$)& ($5\times10^{-5},10^{-4},2\times 10^{-4}$) \\
    MNIST\textsubscript{X}  & CNN  &  ($10^{-12},10^{-11},10^{-10},10^{-9},10^{-8},10^{-7},10^{-6}$) & ($5\times10^{-5},10^{-4},2\times 10^{-4}$)\\
    MNIST\textsubscript{XZ}  & CNN  & ($10^{-12},10^{-11},10^{-10},10^{-9},10^{-8},10^{-7},10^{-6}$)& ($5\times10^{-5},10^{-4},2\times 10^{-4}$)  \\
    Mendelian & FCNN(100,100)  & ($10^{-12},10^{-11},10^{-10},10^{-9},10^{-8},10^{-7},10^{-6}$)& ($5\times10^{-5},10^{-4},2\times 10^{-4}$)  \\
    \bottomrule
    \end{tabular}}
\end{table}

In contrast to our NN-based method, the RKHS-based method in Algorithm \ref{alg:mmriv-rkhs} has the analytic form of CV error. 
For the kernel function on instruments, we employ the sum of Gaussian kernels 
$$k(z,z')= \frac{1}{3}\sum_{i=1}^{3} \exp\left(-\frac{\|z-z'\|_2^2}{2\sigma_{ki}^2}\right)$$ 
and the Gaussian kernel for $l(x,x')= \exp(-\|x-x'\|_2^2/(2\sigma_l^2))$, where $\sigma_{k1}$ is chosen as the median interpoint distance of $\bm z = \{z_i\}_{i=1}^n$ and $\sigma_{k2}=0.1\sigma_{k1}$, $\sigma_{k3}=10\sigma_{k1}$. The motivation of such a kernel $k$ is to optimize $f$ on multiple kernels, and we leave parameter selection of $k$ to the future work.  
We combine the training and validation sets to perform leave-2-out CV to select parameters of the kernel $l$ and the regularization parameter $\lambda$. We choose the Gaussian kernel for $l$.
For the Nystr\"om approximation, we subsample 300 points from the combined set. As a small subset of the Gram matrix is used as Nystr\"om samples, which misses much information of data, we avoid outliers in test results by averaging 10 test errors with different Nystr\"om approximations in each experiment.
All methods are repeated 10 times on each dataset with random initialization.

\subsubsection{Experiment Results}
Table \ref{tab:result_low_dim_large} and Table \ref{tab:result_low_dim_small} report the results for the large-sample and small-sample regimes respectively. 
First, under the influence of confounders, \texttt{DirectNN} performs worst as it does not use instruments.
Second, \texttt{MMR-IV}s perform reasonably well in both small-sample and large-sample regimes. 
For the linear function, \texttt{2SLS} and \texttt{Poly2SLS} tend to outperform other algorithms as the linearity assumption is satisfied in this case. 
For non-linear functions, some NN based methods show competitive performance in certain cases. Notably, \texttt{GMM+NN} has unstable performance as the function $h$ in \eqref{eq:umr} is designed manually. 
\texttt{KernelIV} performs well because of simple simulation models. Its kernel parameters are selected by median heuristic which is relatively naive to the cross validation, and in later more complicated experiments, the performance of \texttt{KernelIV} deteriorates. \texttt{AGMM-K} is similar in principle to our method while its errors are high. We suspect that this is due to its hyper-parameter selection, a few folds of cross validation over a small set of hyper-parameters, is not effective enough, because we observed that the \texttt{AGMM-K}'s performance becomes better as the numbers of CV folds and of hyper-parameter candidates increase. A similar observation was also obtained on DualIV. Thus, it shows that it is desirable to have the analytical CV error, which is an advantage of our method against other RKHS baselines. Besides, the selection of the weight matrix remains an open question, and although DualIV, AGMM-K and MMRIV (RKHS) rely on the median heuristic to select the bandwidth, we believe that the selection has different effects on different methods and it is difficult to get fair selection. We will leave this problem to future work.
Moreover, the performances of the complicated methods like \texttt{DeepIV} and \texttt{DeepGMM} deteriorate more in the large-sample regimes than the small-sample regimes.
We suspect that this is because these methods rely on two NNs and thus require proper hyper-parameters (e.g. model structures) on random samples.
{\bc 
For \texttt{SieveIV}, it performs well when the true function is the sin function, because we utilize the trigonometric basis for the sieve method. 
It suggests that the choice of basis functions is important for the performance of this method.
}
In contrast, \texttt{MMR-IV (Nystr\"om)} has the advantage of adaptive hyper-parameter selection.

We also record the runtimes of all methods on the large-sample regime and report them in \sref{Fig.}{fig:runtime}. 
Compared to the NN-based methods, i.e., \texttt{AGMM}, \texttt{DeepIV}, \texttt{DeepGMM}, our \texttt{MMR-IV (NN)} is the most computationally efficient method, which is clearly a result of a simpler objective. 
Using a minimax optimization between two NNs, \texttt{AGMM} is the least efficient method. 
\texttt{DeepGMM} and \texttt{DeepIV} are more efficient than \texttt{AGMM}, but are less efficient than \texttt{MMR-IV (NN)}. 
Lastly, all four RKHS-based methods, namely, \texttt{KernelIV}, \texttt{AGMM-K}, \texttt{DualIV} and \texttt{MMR-IV (RKHS)}, have similar computational time. All four methods are observed to scale poorly on large datasets.
{\bc In contrast, \texttt{SieveIV} is the most computationally efficient method in this experiment.}

%%%
\begin{figure}[t!]
\centering
\resizebox{0.85\textwidth}{!}{
\begin{tikzpicture}
\begin{axis}[
    ybar, axis on top,
    ybar=2pt,
    ymode=log,
    bar width=8pt,
    x=2.5cm,
    width=0.5\textwidth, height=3.8cm,
    enlargelimits=0.17,
    legend style={at={(1.13,1)},
    nodes near coords = \rotatebox{90}{{\pgfmathprintnumber[fixed zerofill, precision=2]{\pgfplotspointmeta}}},
      anchor=north,legend columns=1},
    ylabel={Second(s)},
    ymin=-10, ymax=5000,
    symbolic x coords={abs,linear,sin,step},
    xtick=data,
    nodes near coords,
    nodes near coords align={vertical},
    ylabel near ticks
    ]
\addplot+ coordinates {(abs,672) (linear,584) (sin,540) (step,725)};
\addplot+ coordinates {(abs,30) (linear,29) (sin,43) (step,44)};
\addplot+ coordinates {(abs,26) (linear,21) (sin,26) (step,19)};
\addplot+ coordinates {(abs,8) (linear,7) (sin,9) (step,7)};
\addplot+ coordinates {(abs,31) (linear,34) (sin,36) (step,32)};
\addplot+ coordinates {(abs,31) (linear,31) (sin,36) (step,37)};
\addplot+ coordinates {(abs,1.8) (linear,1.8) (sin,1.9) (step,1.7)};
\legend{\texttt{AGMM},\texttt{DeepIV},\texttt{DeepGMM}, \texttt{MMR-IV\textsubscript{\texttt{NN}}}, \texttt{KernelIV},\texttt{MMR-IV\textsubscript{\texttt{RKHS}}}, \texttt{SieveIV}}
\end{axis}
\end{tikzpicture}}
\caption{{\bc Runtime in the large-sample regime ($n=2000$). 
Time for parameter selection is excluded. 
\texttt{AGMM-K (No Nystr\"om)}, \texttt{DualIV} and \texttt{MMR-IV (RKHS)} overlap due to the same runtime. Python and library versions are included in the ''README'' and ''requirement'' files in the software. We use 2.9 GHz Intel Core i5 for the experiment. \label{fig:runtime}}}
\end{figure}

%%%%
\subsection{High-dimensional Structured Scenarios}
\label{sec:high_dim_sce}

In high-dimensional setting, we employ the same data generating process as in \sref{Sec.}{sec:low_dim_sce}. 
We consider only the absolute function for $f^*(x)=|x|$, but map $Z$, $X$, or both $X$ and $Z$ to MNIST images (784-dim) \citep{lecun1998gradient}. Let us denote the original outputs in \sref{Sec.}{sec:low_dim_sce} by $X^{\mathrm{low}}$, $Z^{\mathrm{low}}$ and let $\pi(u
) := \mathrm{round}(\min(\max(1.5u+5, 0), 9))$ be a transformation function mapping inputs to an integer between 0 and 9, and let $\mathrm{RI}(d)$ be a function that selects a random MNIST image from the digit class $d$. 
% We set $n=10,000$.
Then, the scenarios we consider are
\begin{enumerate*}[label=(\roman*),noitemsep]
    \item \textbf{MNIST\textsubscript{Z}}: $Z  \leftarrow \text{RI}(\pi(Z_1^{\text{low}}))$, 
    \item \textbf{MNIST\textsubscript{X}}: $X \leftarrow \text{RI}(\pi(X^{\text{low}}))$, and  
    \item \textbf{MNIST\textsubscript{XZ}}: $X \leftarrow \text{RI}(\pi(X^{\text{low}})),\,Z  \leftarrow \text{RI}(\pi(Z_1^{\text{low}}))$. 
\end{enumerate*}

\subsubsection{Experimental Settings}
We sample $n=10,000$ points for the training, validation, and test sets, and run each method 10 times. 
For \texttt{MMR-IV (Nystr\"om)}, we run it only on the training set to reduce computational workload, and use 
principal component analysis (PCA) to reduce dimensions of $X$ from 728 to 8. 
We still use the sum of Gaussian kernels for $k(z,z')$, but use the automatic relevance determination (ARD) kernel for $l(x,x')$.
We omit \texttt{KernelIV} and \texttt{DualIV} in this experiment since their kernels are not suitably designed for structured data and as we will see, so is \texttt{MMR-IV (Nystr\"om)}.
{\bc 
We also omit \texttt{SieveIV} in this setting because the method requires a huge number of basis functions in the high-dimensional setting. 
}

\subsubsection{Experiment Results}
We report the results in \sref{Table}{tab:result_high_dim}. Like \citep{Bennett19:DeepGMM}, we observe that the \texttt{DeepIV} code returns NaN when $X$ is high-dimensional, so the implementation has to be improved before being used for similar scenarios. 
Besides, we run \texttt{Ridge2SLS}, which is \texttt{Poly2SLS} with fixed linear degree, in place of \texttt{Poly2SLS} to reduce computational workload, and as a result, its performance is unsatisfactory.
\texttt{2SLS} has large errors for high-dimensional $X$ because the first-stage regression from $Z$ to $X$ is ill-posed. The 
average performance of \texttt{GMM+NN} suggests that manually designed functions for instruments is insufficient to extract useful information.
Furthermore, \texttt{MMR-IV (NN)} performs competitively across scenarios. On MNIST\textsubscript{Z}, \texttt{MMR-IV}s perform better than other methods, which implies using the sum of Gaussian kernels for the kernel $k$ is proper.  \texttt{DeepGMM} has competitive performance as well. On MNIST\textsubscript{X} and MNIST\textsubscript{XZ}, \texttt{MMR-IV (NN)} outperforms all other methods.  \texttt{AGMM-K+NN}, i.e., the KLayerTrained method by \citep{Dikkala20:Minimax}, is a variant of \texttt{AGMM-K}, which uses neural networks to model $f(X)$ and to extract features of $Z$ as inputs of the kernel $k$. To reinforce the flexibility, \texttt{AGMM-K+NN} also trains the kernel hyper-parameter, i.e., the lengthscale of the RBF kernel $k$. Compared with \texttt{MMR-IV (NN)}, \texttt{AGMM-K+NN} has a more flexible kernel $k$ but lacks good kernel hyper-parameter selection. So, it is vulnerable to local minimum and shows unstable performance. The ARD kernel with PCA
of \texttt{MMR-IV (Nystr\"om)} fails because the features from PCA are not representative enough. 
In addition, we observe that \texttt{DeepGMM} often produces results that are unreliable across all settings. 
We suspect that the hard-to-optimize objective and the complicated optimization procedure are the causes. 
Compared with \texttt{DirectNN}, most of baselines can hardly deal with high-dimensional structured instrumental regressions.

%%%%

%
\begin{table}[t!]
 \centering
 \caption{The mean square error (MSE) $\pm$ one standard deviation on high-dimensional structured data.
We run each method $10$ times. \label{tab:result_high_dim}}
 \begin{tabular}{l c c c} 
 \toprule
 \multirow{2}{*}{\textbf{Algorithm}} & \multicolumn{3}{c}{\textbf{Setting}} \\
 & $\mathrm{MNIST}_\mathrm{z}$ & $\mathrm{MNIST}_\mathrm{x}$ & $\mathrm{MNIST}_\mathrm{xz}$ \\ 
 \midrule
   \texttt{DirectNN}    & .134 $\pm$ .000 & \textit{.229} $\pm$ \textit{.000}                           & \textit{.196} $\pm$ \textit{.011}     \\
 \texttt{2SLS} & .563 $\pm$ .001 & $>$1000  & $>$1000 \\
 \texttt{Ridge2SLS}   & .567 $\pm$ .000 & .431 $\pm$ .000                           & .705 $\pm$ .000     \\
 \texttt{GMM+NN}      & .121 $\pm$ .004 & .235 $\pm$ .002                           & .240 $\pm$ .016     \\
 \texttt{AGMM} & \textit{.017} $\pm$ \textit{.007}& .732 $\pm$ .107 & .529 $\pm$.163 \\
  \texttt{DeepIV}      & .114 $\pm$ .005 & n/a                               & n/a         \\
 \texttt{DeepGMM}  & .038 $\pm$ .004 & .315 $\pm$ .130 & .333 $\pm$ .168\\
 \texttt{AGMM-K+NN} & .021 $\pm$ .007 & 1.05 $\pm$ .366 & .327 $\pm$ .192 \\
 \texttt{MMR-IV (NN)} & .024 $\pm$ .006 & \textbf{.124} $\pm$ \textbf{.021} & \textbf{.130}  $\pm $ \textbf{.009}  \\
 \texttt{MMR-IV (Nys)} & \textbf{.015} $\pm$ \textbf{.002} & .442 $\pm$ .000 & .425 $\pm$ .002 \\
 \bottomrule
\end{tabular}

\end{table}

\begin{figure*}[b!]
\begin{subfigure}[t]{0.26\textwidth}
\centering
\resizebox{!}{1.2in}{
\begin{tikzpicture} 
\begin{axis}[width=1.9in, height=1.5in, xmode=log,log basis x={2},ymode=log,xmode=log,xtick=data,log basis x=2,xlabel=\textsc{Number of IVs},ylabel=MSE,ymax=2, ymin = 0.0005, tick label style = {font=\small}]
\axispath\draw
            (7.49165,-10.02171)
        |-  (38.31801,-11.32467)
        node[near start,left] {$\frac{dy}{dx} = -1.58$};
\addplot+[dotted, mark=o,every mark/.append style={solid},error bars/.cd] plot coordinates  {
(8,0.235) +- (0.000,0.000)
(16,0.235) +- (0.000,0.000)
(32,0.254) +- (0.000,0.000)};
\addplot+[dashed, mark=o,every mark/.append style={solid},error bars/.cd] plot coordinates{
(8,0.0001) +- (0.000,0.000)
(16,0.001) +- (0.000,0.000)
(32,0.017) +- (0.000,0.000)};
\addplot+[dashdotted, mark=o,every mark/.append style={solid},error bars/.cd] plot coordinates{
(8,1.067) +- (0.000,0.000)
(16,0.002) +- (0.000,0.000)
(32,142.201) +- (0.000,0.000)};
\addplot+[dashed, mark=triangle,every mark/.append style={solid},error bars/.cd] plot coordinates{
(8,0.094) +- (0.000,0.000)
(16,0.190) +- (0.000,0.000)
(32,0.234) +- (0.000,0.000)};
\addplot+[dotted, mark=triangle,every mark/.append style={solid},error bars/.cd] plot coordinates{
(8,0.089) +- (0.000,0.000)
(16,0.129) +- (0.000,0.000)
(32,8.800) +- (7.611,7.611)};
\addplot+[dashdotted, mark=triangle,every mark/.append style={solid},error bars/.cd] plot coordinates{
(8,0.211) +- (0.002,0.002)
(16,0.118) +- (0.001,0.001)
(32,0.050) +- (0.003,0.003)};
\addplot+[dashdotted, mark=square,every mark/.append style={solid},error bars/.cd] plot coordinates{
(8,0.031) +- (0.000,0.000)
(16,0.059) +- (0.004,0.004)
(32,0.291) +- (0.000,0.000)};
\addplot+[dashed, mark=square,every mark/.append style={solid},error bars/.cd] plot coordinates{
(8,0.063) +- (0.000,0.000)
(16,0.063) +- (0.000,0.000)
(32,0.045) +- (0.000,0.000)};
\addplot+[dotted, mark=square,every mark/.append style={solid},error bars/.cd] plot coordinates{
(8,1.62) +- (0.000,0.000)
(16,0.905) +- (0.000,0.000)
(32,1.20) +- (0.000,0.000)};
\addplot+[solid, mark=square,every mark/.append style={solid},error bars/.cd] plot coordinates{
(8,0.226) +- (0.000,0.000)
(16,0.217) +- (0.000,0.000)
(32,0.265) +- (0.000,0.000)};
\addplot+[solid,mark=otimes,every mark/.append style={solid},error bars/.cd] plot coordinates{
(8,0.032) +- (0.011,0.011)
(16,0.037) +- (0.016,0.016)
(32,0.039) +- (0.027,0.027)};
\addplot+[solid, mark=otimes,every mark/.append style={solid},error bars/.cd] plot coordinates{
(8,0.009) +- (0.007,0.007)
(16,0.020) +- (0.002,0.002)
(32,0.024) +- (0.002,0.002)};
\end{axis} 
\end{tikzpicture}}
\end{subfigure}
\hspace{-5pt}
\begin{subfigure}[t]{0.24\textwidth}
\centering
\resizebox{!}{1.18in}{
\begin{tikzpicture} 
\begin{axis}[width=1.9in, height=1.5in, ymajorticks=false, ymode=log,xmode=log,xtick=data,log basis x=2,legend pos=outer north east,xlabel=\textsc{Value of $c_1$},ymax=2,ymin=0.0005,tick label style = {font=\small}]
\axispath\draw
            (7.49165,-10.02171)
        |-  (38.31801,-11.32467)
        node[near start,left] {$\frac{dy}{dx} = -1.58$};
\addplot+[dotted, mark=o,every mark/.append style={solid},error bars/.cd] plot coordinates  {
(0.5,0.104) +- (0.000,0.000)
(1,0.235) +- (0.000,0.000)
(2,0.420) +- (0.000,0.000)};
\addplot+[dashed, mark=o,every mark/.append style={solid},error bars/.cd] plot coordinates{
(0.5,0.001) +- (0.000,0.000)
(1,0.001) +- (0.000,0.000)
(2,0.003) +- (0.000,0.000)};
\addplot+[dashdotted, mark=o,every mark/.append style={solid},error bars/.cd] plot coordinates{
(0.5,0.002) +- (0.000,0.000)
(1,0.002) +- (0.000,0.000)
(2,0.003) +- (0.000,0.000)};
\addplot+[dashed, mark=triangle,every mark/.append style={solid},error bars/.cd] plot coordinates{
(0.5,0.079) +- (0.000,0.000)
(1,0.190) +- (0.000,0.000)
(2,0.347) +- (0.000,0.000)};
\addplot+[dotted, mark=triangle,every mark/.append style={solid},error bars/.cd] plot coordinates{
(0.5,0.188) +- (0.000,0.000)
(1,0.129) +- (0.000,0.000)
(2,0.227) +- (0.005,0.005)};
\addplot+[dashdotted, mark=triangle,every mark/.append style={solid},error bars/.cd] plot coordinates{
(0.5,0.254) +- (0.009,0.009)
(1,0.118) +- (0.001,0.001)
(2,0.033) +- (0.002,0.002)};
\addplot+[dashdotted, mark=square,every mark/.append style={solid},error bars/.cd] plot coordinates{
(0.5,0.038) +- (0.000,0.000)
(1,0.059) +- (0.000,0.000)
(2,0.112) +- (0.000,0.000)};
\addplot+[dashed, mark=square,every mark/.append style={solid},error bars/.cd] plot coordinates{
(0.5,0.084) +- (0.000,0.000)
(1,0.063) +- (0.000,0.000)
(2,0.046) +- (0.000,0.000)};
\addplot+[dotted, mark=square,every mark/.append style={solid},error bars/.cd] plot coordinates{
(0.5,1.04) +- (0.000,0.000)
(1,0.905) +- (0.000,0.000)
(2,0.818) +- (0.000,0.000)};
\addplot+[solid, mark=square,every mark/.append style={solid},error bars/.cd] plot coordinates{
(0.5,0.091) +- (0.000,0.000)
(1,0.217) +- (0.000,0.000)
(2,0.394) +- (0.000,0.000)};
\addplot+[solid,mark=otimes,every mark/.append style={solid},error bars/.cd] plot coordinates{
(0.5,0.096) +- (0.115,0.115)
(1,0.037) +- (0.016,0.016)
(2,0.055) +- (0.016,0.016)};
\addplot+[solid, mark=otimes,every mark/.append style={solid},error bars/.cd] plot coordinates{
(0.5,0.010) +- (0.008,0.008)
(1,0.020) +- (0.002,0.002)
(2,0.023) +- (0.002,0.002)};
\end{axis} 
\end{tikzpicture}}
\end{subfigure}
\hspace{-12pt}
\begin{subfigure}[t]{0.46\textwidth}
\resizebox{!}{1.18in}{
\begin{tikzpicture} 
\begin{axis}[width=1.9in, height=1.5in, ymajorticks=false, ymode=log,xmode=log,xtick=data,log basis x=2, legend pos=outer north east, legend columns=2, legend cell align={left}, xlabel=\textsc{Value of $c_2$},ymax=2,ymin=0.0005,ylabel near ticks,legend style={nodes={scale=0.7, transform shape}},tick label style = {font=\small}]
\axispath\draw
            (7.49165,-10.02171)
        |-  (38.31801,-11.32467)
        node[near start,left] {$\frac{dy}{dx} = -1.58$};
\addplot+[dotted, mark=o,every mark/.append style={solid},error bars/.cd] plot coordinates  {(0.5,0.400) +- (0.000,0.000)
(1,0.235) +- (0.000,0.000)
(2,0.106) +- (0.000,0.000)};
\addplot+[dashed, mark=o,every mark/.append style={solid},error bars/.cd] plot coordinates{
(0.5,0.001) +- (0.000,0.000)
(1,0.001) +- (0.000,0.000)
(2,0.004) +- (0.000,0.000)};
\addplot+[dashdotted, mark=o,every mark/.append style={solid},error bars/.cd] plot coordinates{
(0.5,0.001) +- (0.000,0.000)
(1,0.002) +- (0.000,0.000)
(2,0.055) +- (0.000,0.000)};
\addplot+[dashed, mark=triangle,every mark/.append style={solid},error bars/.cd] plot coordinates{
(0.5,0.198) +- (0.001,0.001)
(1,0.190) +- (0.000,0.000)
(2,0.096) +- (0.000,0.000)};
\addplot+[dotted, mark=triangle,every mark/.append style={solid},error bars/.cd] plot coordinates{
(0.5,0.065) +- (0.000,0.000)
(1,0.129) +- (0.000,0.000)
(2,0.080) +- (0.000,0.000)};
\addplot+[dashdotted, mark=triangle,every mark/.append style={solid},error bars/.cd] plot coordinates{
(0.5,0.037) +- (0.002,0.002)
(1,0.118) +- (0.001,0.001)
(2,0.286) +- (0.010,0.010)};
\addplot+[dashdotted, mark=square,every mark/.append style={solid},error bars/.cd] plot coordinates{
(0.5,0.057) +- (0.000,0.000)
(1,0.059) +- (0.000,0.000)
(2,0.538) +- (0.000,0.000)};
\addplot+[dashed, mark=square,every mark/.append style={solid},error bars/.cd] plot coordinates{
(0.5,0.028) +- (0.000,0.000)
(1,0.063) +- (0.000,0.000)
(2,0.084) +- (0.000,0.000)};
\addplot+[dotted, mark=square,every mark/.append style={solid},error bars/.cd] plot coordinates{
(0.5,0.787) +- (0.000,0.000)
(1,0.905) +- (0.000,0.000)
(2,2.27) +- (0.000,0.000)};
\addplot+[solid,mark=square,every mark/.append style={solid},error bars/.cd] plot coordinates{
(0.5,0.384) +- (0.017,0.000)
(1,0.217) +- (0.016,0.000)
(2,0.094) +- (0.015,0.000)};
\addplot+[solid,mark=otimes,every mark/.append style={solid},error bars/.cd] plot coordinates{
(0.5,0.016) +- (0.017,0.017)
(1,0.037) +- (0.016,0.016)
(2,0.046) +- (0.015,0.015)};
\addplot+[solid, mark=otimes,every mark/.append style={solid},error bars/.cd] plot coordinates{
(0.5,0.001) +- (0.000,0.000)
(1,0.020) +- (0.002,0.002)
(2,0.035) +- (0.02,0.02)};
\legend{\texttt{DirectNN},\texttt{2SLS},\texttt{Poly2SLS},\texttt{GMM+NN},\texttt{AGMM},\texttt{DeepIV},\texttt{DeepGMM}, \texttt{KernelIV}, \texttt{AGMM-K},\texttt{DualIV}, \texttt{MMR-IV}\textsubscript{\texttt{NN}},\texttt{MMR-IV}\textsubscript{\texttt{Nys}}}
\end{axis} 
\end{tikzpicture}}
\end{subfigure}
\caption{The MSE of different methods on Mendelian randomization experiments as we vary the numbers of instruments (left),  the strength of confounders to exposures $c_1$ (middle), and the strength of confounders to instruments $c_2$ (right). 
The MSE is obtained from 10 repetitions of the experiment.}
\label{fig:mendelians}
\end{figure*}

%%%
\subsection{Mendelian Randomization}
\label{sec:mendelian-main}

We demonstrate our method in the setting of Mendelian randomization which relies on genetic variants that satisfy the IV assumptions.
The ``exposure'' $X$ and outcome $Y$ are univariate and generated from the  simulation process by \citep{hartwig2017robust}: $$Y = \beta X + c_1 e + \delta,\quad X = {\textstyle \sum_{i=1}^{m}} \alpha_iZ_i +c_2 e+\gamma,$$
where $Z\in \rr^{d'}$ with each entry $Z_i \sim B(2,p_i)$, $p_i\sim \mathrm{unif}(0.1,0.9)$, $e \sim \mathcal{N}(0,1)$, $\alpha_i \sim \mathrm{unif}( [0.8/d',1.2/d'])$, and $\gamma, \delta \sim \mathcal{N}(0,0.1^2)$.
$Z_i \sim B(2, p_i)$ mimics the frequency of an individual getting one or more genetic variants. 
The parameters $\beta,\,c_1$ control the strength of exposures and confounders to outcomes, while $c_2,\,\alpha_i$ control the strength of instruments and confounders to exposures. 
We set $\alpha_i\sim \mathrm{unif}( [0.8/d',1.2/d'])$ so that as the number of IVs increases, each IV becomes weaker while the overall strength of instruments ($\sum_i^{d'} \alpha_i$) remains constant.

In Mendelian randomization, it is known that genetic variants may act as \emph{weak} IVs \citep{Kuang20:Ivy, Hartford20:WeakIV, Burgess20:GIV}, so this experiment aims to evaluate the sensitivity of different methods to the number of instruments ($d'$) and confounder strengths $(c_1,c_2)$. 
We consider three scenarios: \begin{enumerate*}[label=(\roman*),noitemsep] \item $d' = 8,16,32$; \item $c_1 =0.5,1,2$; \item $c_2 =0.5,1,2$; \end{enumerate*} unmentioned parameters use default values: $\beta=1$, $d'=16$, $c_1=1$, $c_2=1$.
We draw $10,000$ samples for the training, validation and test sets, respectively, and train \texttt{MMR-IV (Nystr\"om)} only on the training set.
Other settings are the same as those of the low-dim scenario.

\sref{Fig.}{fig:mendelians} depicts the experimental results.
Overall, \texttt{2SLS} performs well on all settings due to the linearity assumption, except particular sensitivity to the number of (weak) instruments, which is a well-known property of \texttt{2SLS} \citep{Angrist08:Harmless}. 
Although imposing no such assumption, \texttt{MMR-IV}s perform competitively and even more stably, since the information of instruments is effectively captured by the kernel $k$ and we only need to deal with a simple objective, and also the analytical CV error plays an essential role.
\texttt{KernelIV} also achieves competitive and stable performance across all settings, but not as good as ours. \texttt{DirectNN} is always among the worst approaches on all settings as no instrument is used. 
\texttt{Poly2SLS} performs accurately on the last two experiments, while presents significant instability with the number of instruments in \sref{Fig.}{fig:mendelians}(left) because of failure of the hyper-parameter selection.
In \sref{Fig.}{fig:mendelians}(middle) and \sref{Fig.}{fig:mendelians}(right), we can observe that the performance of most approaches deteriorates as the effect of confounders becomes stronger. 
\texttt{MMR-IV (Nystr\"om)} has promising performance and shows a bit more sensitivity to $c_2$ than $c_1$, and the good performance takes
the advantage of the hyper-parameter selection compared with \texttt{AGMM-K} and \texttt{DualIV}.

\subsection{Vitamin D data}

Lastly, we apply our algorithm to the Vitamin D data \citep[Sec. 5.1]{Sjolander19:ivtools}.  
The data were collected from a 10-year study on 2571 individuals aged 40–71 and 4 variables are employed: \emph{age} (at baseline), \emph{filaggrin} (binary indicator of filaggrin mutations), \emph{VitD} (Vitamin D level at baseline) and \emph{death} (binary indicator of death during study). 
The goal is to evaluate the potential effect of VitD on death. 
We follow \citep{Sjolander19:ivtools} by controlling age in the analyses, using filaggrin as instrument, and then applying the MMR-IV (Nystr\"om) algorithm.
\citep{Sjolander19:ivtools} modeled the effect of VitD on death by a generalized linear model and found the effect is insignificant by 2SLS ($p$-{value} on the estimated coefficient is $0.13$ with the threshold of $0.05$).
More details are shown latter. 
The estimated effect is illustrated in \sref{Fig.}{fig:real_life} in the appendix. 
We observe that: (i) by using instruments, both our method and \citep{Sjolander19:ivtools} output more reasonable results compared with those without instruments: a low VitD level at a young age has a slight effect on death, but a more adverse effect at an old age \citep{meehan2014role}; (ii) Unlike \citep{Sjolander19:ivtools}, our method allows more flexible non-linearity for causal effect. 

\subsubsection{Experimental Settings on Vitamin D Data}
\label{sec:additional_exp_settings_vitd}
We normalize each variable to have a zero mean and unit variance to reduce the influence of different scales. 
We consider two cases: (i) without instrument and (ii) with instruments. 
By without instrument, we mean that the $W_V$ matrix of \texttt{MMR-IV (Nystr\"om)} becomes an identity matrix. 
Following \citep{Sjolander19:ivtools}, we assess the effect of Vitamin D (exposure) on mortality rate (outcome), control the age in the analyses, and use filaggrin as the instrument. 
We illustrate original Vitamin D, age and death in \sref{Fig.}{fig:vitd_data}.  We randomly pick (random seed is 527) 300 Nystr\"om samples and use leave-2-out cross validation to select hyper-parameters. The generalized linear models in \citep{Sjolander19:ivtools} are a linear function in the first step and a logistic regression model in the second step.

In this experiment, we use \emph{age} as a control variable by considering a structural equation model, which is similar to the model \eqref{eq:model} except the presence of the controlled (exogenous) variable $C$,
\begin{align}
    Y = f(X,C)+\varepsilon, \quad X = t(Z,C)+g(\varepsilon) + \nu
\end{align}
where $\ep[\varepsilon] = 0$ and $\ep[\nu] = 0$. We further assume that the instrument $Z$ satisfies the following three conditions:
\begin{enumerate}[label=(\roman*)]
    \item \emph{Relevance:} $Z$ has a causal influence on $X$;
    \item \emph{Exclusion restriction:} $Z$ affects $Y$ only through $X$, i.e., ${Y \ci Z} | X,\varepsilon, C$;
    \item \emph{Unconfounded instrument(s):} $Z$ is conditionally independent of the error, i.e., ${\varepsilon \ci Z}\,|\,C$.
\end{enumerate}
Unlike the conditions specified in the main text, (ii) and (iii) also include the controlled variable $C$. 
A similar model is employed in \citep{Hartford17:DIV}. 
From Assumption (iii), we can see that $\ep[\varepsilon \,|\, C, Z] = \ep[\varepsilon \,|\, C]$, and based on this, we further obtain
 \begin{align}
     \ep[(Y-f(X,C)-\ep[\varepsilon | C])h(Z,C)]=0
 \end{align}
for every measurable function $h$. 
Note that $\ep[\varepsilon \,|\, C]$ is only conditioned on $C$, remains constant on arbitrary values of $C$, and is typically non-zero. 
To adapt our method to this model, we only need to use the kernel $k$ with $(Z,C)$ as inputs and be aware that the output of the method is an estimate of $ f'(X,C) \coloneqq f(X,C)+\ep[\varepsilon | C]$  instead of just $f(X,C)$. For simplicity, we directly fit $f'$ to binary $Y$ without using such as the logistic transform $g(x)=e^x/(1+e^x)$. This is because the transform requires non-trivial modification to the analytical cross validation error and we would like to test MMR-IV (Nystr\"om) with the exact analytical error proposed in the paper.

\begin{figure}[t!]
\captionsetup[subfigure]{justification=centering}
\begin{subfigure}{.5\textwidth}
  \centering
  \includegraphics[width=\textwidth]{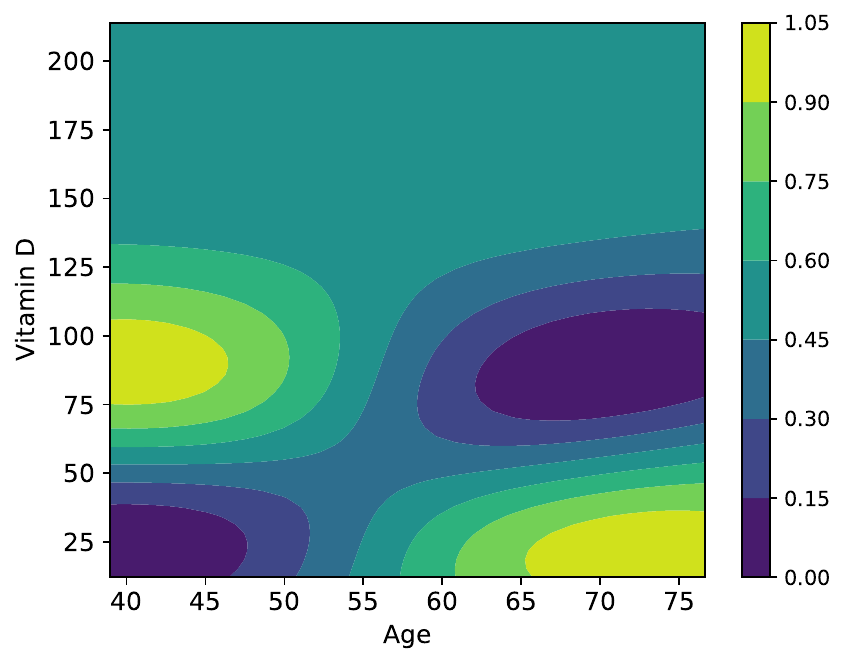}
  \caption{Kernel Ridge Regression (IV: None)}
  \label{fig:real_life_no_iv_krr} 
\end{subfigure}%
\begin{subfigure}{.5\textwidth}
  \centering
  \includegraphics[width=\textwidth]{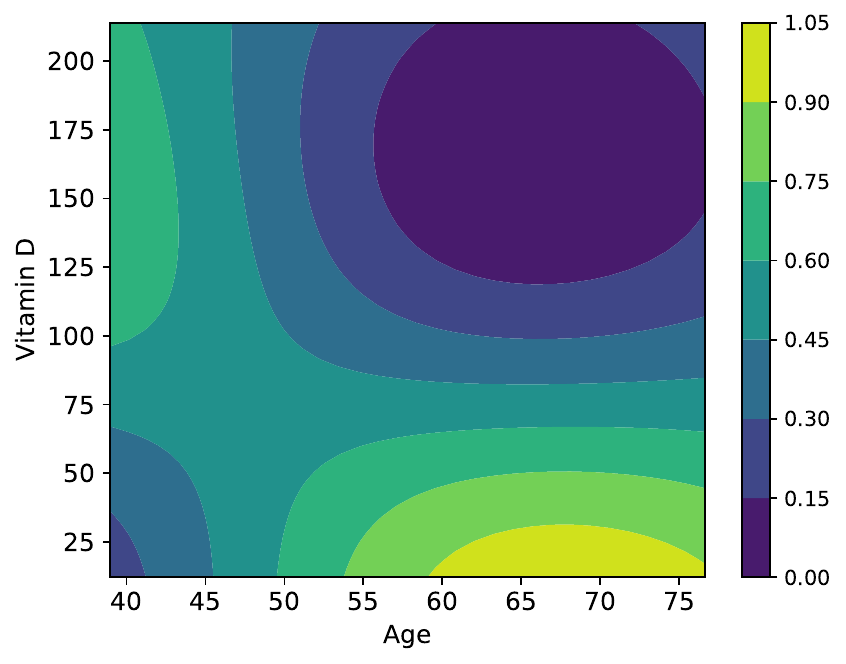}
  \caption{MMRIV (IV: Filaggrin Mutation)}
\label{fig:real_life_use_iv_mmriv}
\end{subfigure}
\vspace{0.5cm}
\\
\begin{subfigure}{.5\textwidth}
  \centering
  \includegraphics[width=\textwidth]{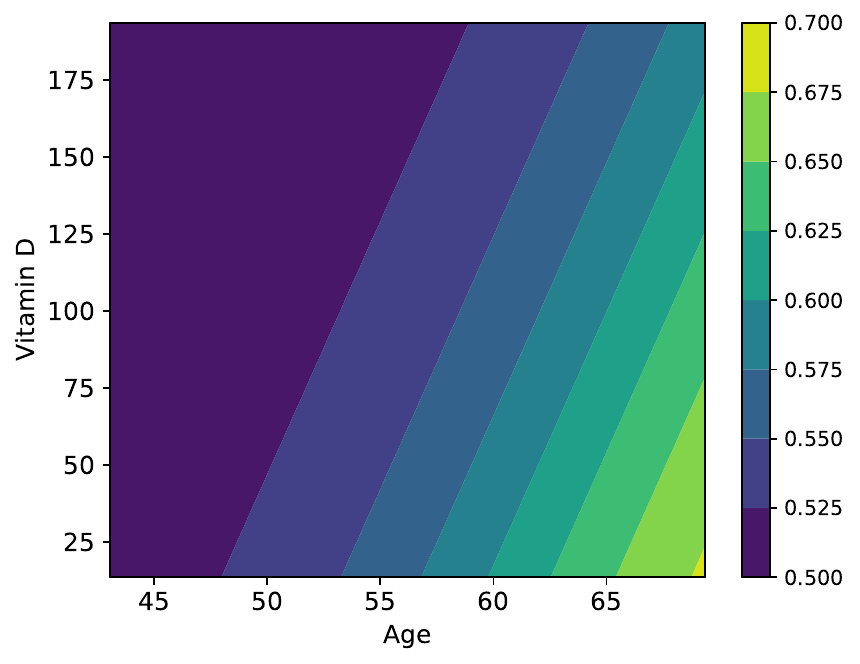}
  \caption{Generalized Linear Model (GLM, IV: None)}
  \label{fig:real_life_no_iv_glm} 
\end{subfigure}%
\begin{subfigure}{.5\textwidth}
  \centering
  \includegraphics[width=\textwidth]{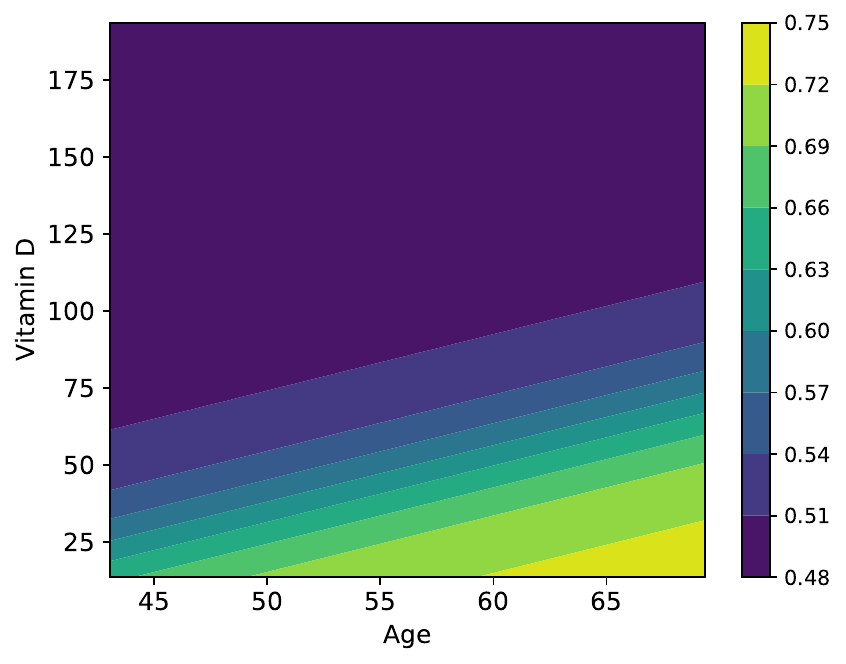}
  \caption{2SLS + GLM (IV: Filaggrin Mutation)}
\label{fig:real_life_use_iv_2sls}
\end{subfigure}
\caption{Estimated effect of vitamin D level on mortality rate, controlled by age. 
All plots depict normalized contours of  $f'(X,C)$ defined in Sec. \ref{sec:additional_exp_settings_vitd} where blue represents low mortality rate and yellow the opposite. We can divide each plot roughly into left (young age) and right (old age) parts. While the right parts reflect similar information (i.e., lower vitamin D level at an old age leads to higher mortality rate), the left parts are different. In (\subref{fig:real_life_no_iv_krr}), a high level of vitamin D at a young age can result in a high mortality rate, which is counter-intuitive.
A plausible explanation is that it is caused by some unobserved confounders between vitamin D level and mortality rate.
In (\subref{fig:real_life_use_iv_mmriv}), on the other hand, this spurious effect disappears when the filaggrin mutation is used as instrument, i.e., a low vitamin D level at a young age has only a slight effect on death, but a more adverse effect at an old age \citep{meehan2014role}. 
This comparison demonstrates the benefit of an instrument variable. (\subref{fig:real_life_no_iv_glm}) and (\subref{fig:real_life_use_iv_2sls}) correspond to the results obtained by using \citep{Sjolander19:ivtools}' generalized linear model (GLM), from which we can draw similar conclusions. It is noteworthy that MMR-IV allows more flexible non-linearity for causal effect.}
\label{fig:real_life}
\end{figure}

\begin{figure}[t!]
\centering
 \includegraphics[width=0.9\textwidth]{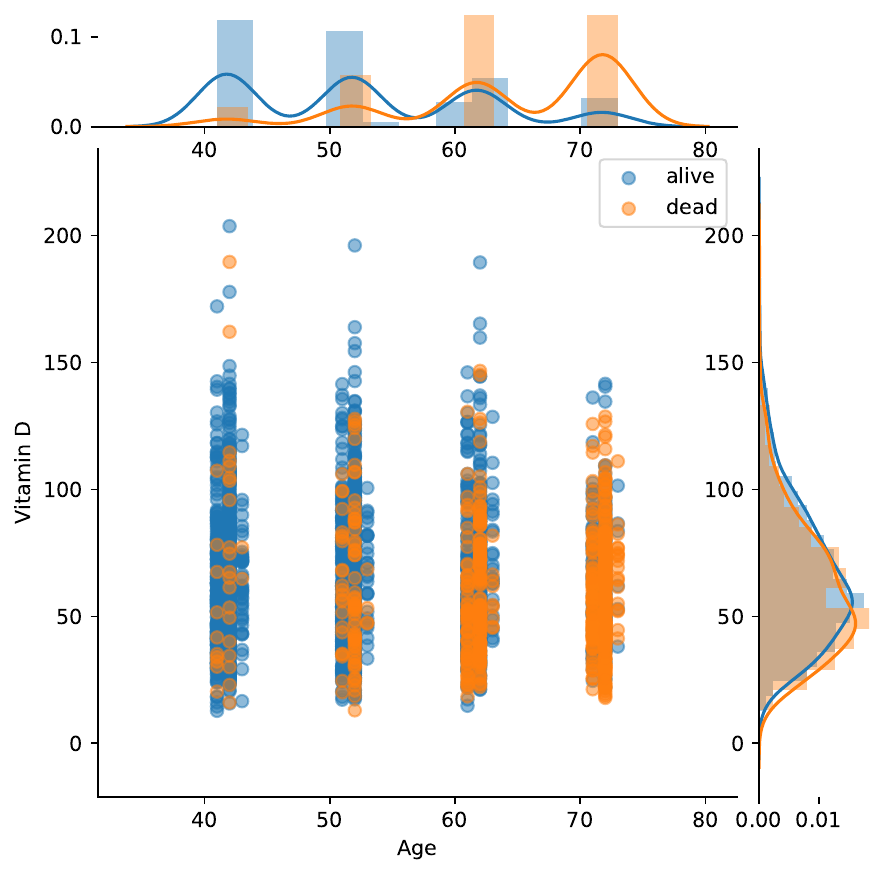}
\caption{Distribution of Vitamin D data. Data points are plotted in the middle, the solid curve and histogram on the right describe the kernel density estimation and histogram of Vitamin D, and those on the top are for Age.}
\label{fig:vitd_data}
\end{figure}

%%%%%

\section{Conclusion and Discussion}
\label{sec:conclusion}

Learning causal relations when hidden confounders are present is a cornerstone of reliable decision making.
 IV regression is a standard tool to tackle this task, but currently faces challenges in nonlinear settings.
This work presents a simple kernel-based framework that overcomes some of these challenges.
We employ RKHS theory in the reformulation of conditional moment restriction (CMR) as a maximum moment restriction (MMR) based on which we can approach the problem from the empirical risk minimization (ERM) perspective.
As we demonstrate, this framework not only facilitates theoretical analysis, but also results in easy-to-use algorithms that perform well in practice compared to existing methods.
The paper also shows a way of combining the elegant theoretical approach of kernel methods with practical merits of deep neural networks.
Despite these advantages, the optimal choice of the kernel $k$ in the MMR objective remains an open question which we aim to address in future work.

The efficiency issue, i.e., a volume of the error by our estimator, is one of the major unsolved problems. 
Importantly, since the efficiency heavily depends on the choice of kernel function $k$, we have to discuss an appropriate method of the kernel selection. 
This dependence has been clarified by the analysis of errors in the minimax problem by \citep{Dikkala20:Minimax}, which shows that the error is upper bounded by a metric entropy of RKHS associated with the kernel $k$.
Hence, the selection of optimal kernel functions is one of the important future directions. 

In the classical IV problem with parametric models, the optimal choice of IV is based on variance of an asymptotic distribution of estimators \citep{Newey93:CMR}. 
However, our setup corresponds to a nonparametric regression problem, which requires a more complicated  approach. 
A recent work \citep{zhang2021instrument} tackles this problem, but some arbitrariness remains.
Another challenge is a recent work of variational method of moments (VMM) \citep{bennett2020variational}.  
This work provides the optimal weighted objective and the closed-form expression of the estimator, which can achieve the semi-parametric efficiency bound for solving the conditional moment restriction. 
This approach would be highly relevant to the discussion of efficiency in our setting.

%%%
\section*{Acknowledgments}
We thank Yuchen Zhu, Vasilis Syrgkanis, and Heiner Kremer for a fruitful discussion. We are also indebted to anonymous reviewers for their constructive feedback on the initial draft of this manuscript.

%%%
\section*{Funding Information}
Rui Zhang was sponsored by PhD scholarship of Data61 and by the Empirical Inference department of Max Planck Institute.
Masaaki Imaizumi was supported by JSPS KAKENHI Grant Number 18K18114 and JST Presto Grant Number JPMJPR1852.
Bernhard Sch\"olkopf is a member of the Machine Learning Cluster of Excellence, EXC number 2064/1 – Project number 390727645.
This work was partly supported by the German Federal Ministry of Education and Research (BMBF): T\"ubingen AI Center, FKZ: 01IS18039B.

%%%
\section*{Author Contributions}
All authors have accepted responsibility for the entire content of this manuscript and approved its submission.
The study and manuscript have received significant contributions from all authors whose contributions are reflected in the author list.

\section*{Conflict of Interest}
The authors have a conflict of interest with the Australian National University, Data61 of Commonwealth Scientific and Industrial Research Organisation (CSIRO), Australia, Max Planck Institute (MPI), CISPA--Helmholtz Center for Information Security, and the University of Tokyo.

\section*{Ethical Approval}
The conducted research is not related to either human or animals use.

\section*{Data Availability Statement}
The datasets generated during and analyzed during the current study are available at \url{https://github.com/RuiZhang2016/MMRIV}, where the implementation of the approaches is also provided.

%%%%%

\appendix

\section{Detailed Proofs}

This section contains detailed proofs of the results that are missing in the main paper. Most of the proofs on consistency and asymptotic normality take advantages of the useful resource by \citep{NEWEY19942111}. Readers are referred to it for more detailed discussions on e.g. assumptions. Note that our proofs are based on the normed $\inF$ space.

\subsection{Proof of \texorpdfstring{\sref{Lemma}{lem:analytic-form}}{}}

\begin{proof} 
Since $\mathcal{H}_k$ is the RKHS, we can rewrite \eqref{eq:mmr} as
\begin{align}\label{eq:rkhs-norm-proof}
    R_k(f) &= \sup_{h\in\mathcal{H}_k,\|h\|\leq 1}\; \left(\ep[(Y-f(X))\langle h, k(Z,\cdot)\rangle_{\mathcal{H}}]\right)^2 \nonumber \\
    &= \sup_{h\in\mathcal{H}_k, \|h\|\leq 1}\; \left(\langle h, \ep[(Y-f(X))k(Z,\cdot)]\rangle_{\mathcal{H}}\right)^2  \nonumber \\
    &= \left\|\ep[(Y-f(X))k(Z,\cdot)] \right\|_{\mathcal{H}_k}^2  ,
\end{align}
where we used the reproducing property of $\mathcal{H}_k$ in the first equality and the fact that $\mathcal{H}_k$ is a vector space in the last equality.
By assumption, $\ep[(Y-f(X))k(Z,\cdot)]$ is Bochner integrable \citep[Def. A.5.20]{steinwart2008support}.
Hence, we can write \eqref{eq:rkhs-norm-proof} as
\begin{eqnarray*}
    \|\ep[(Y-f(X))k(Z,\cdot)]\|_{\mathcal{H}_k}^2 
    &=& \left\langle \ep[(Y-f(X))k(Z,\cdot)],\ep[(Y-f(X))k(Z,\cdot)]\right\rangle_{\mathcal{H}_k} \\
    &=& \ep\left[\langle (Y-f(X))k(Z,\cdot),\ep[(Y-f(X))k(Z,\cdot)]\rangle_{\mathcal{H}_k}\right] \\
    &=& \ep\left[\langle (Y-f(X))k(Z,\cdot),(Y'-f(X'))k(Z',\cdot)\rangle_{\mathcal{H}_k}\right] \\
    &=& \ep\left[(Y-f(X))(Y'-f(X'))k(Z,Z')\right],
\end{eqnarray*}
as required.
\end{proof}

\subsection{Proof of \texorpdfstring{\sref{Proposition}{thm:orig_cond_m_res}}{}}
\label{sec:proof-sufficiency}

\begin{proof}
    First, the law of iterated expectation implies that 
    \begin{equation}
        \ep[(Y-f(X))k(Z,\cdot)] = \ep_Z[\ep_{\mathit{XY}}[(Y-f(X))k(Z,\cdot)|Z]] 
        = \ep_Z[\ep_{\mathit{XY}}[Y-f(X)|Z]k(Z,\cdot)].
    \end{equation}
    By Lemma \ref{lem:analytic-form}, we know that $R_k(f) = \|\ep[(Y-f(X))k(Z,\cdot)]\|_{\mathcal{H}_k}^2$.
    As a result, $R_k(f) = 0$ if $\ep[Y - f(X)\,|\,z] = 0$ for $P_Z$-almost all $z$.
    To show the converse, we assume that $R_k(f) = 0$ and rewrite it as 
    \begin{equation*}
        R_k(f) = \iint_{\inz} g(z)k(z,z')g(z') \dd z \dd z' = 0,
    \end{equation*}
    where we define $g(z) := \ep_{\mathit{XY}}[Y-f(X)\,|\,z]p(z)$.
    Since $k$ is ISPD by assumption, this implies that $g$ is a zero function with respect to $P_Z$, i.e., $\ep[Y-f(X)\,|\,z] = 0$ for $P_Z$-almost all $z$.
\end{proof}

\subsection{Proof of Proposition \ref{thm:convex}}
\label{pf:convex}

\begin{proof}
Given $\alpha \in (0,1)$ and any functions $f,g: \mathcal X \rightarrow \mathbb R$, we will show that
\begin{equation*}
     R_k(\alpha f+(1-\alpha)g) - \alpha R_k(f) - (1-\alpha) R_k (g) < 0 .
\end{equation*}
By Lemma \ref{lem:analytic-form}, we know that $R_k(f) = \|\ep[(Y-f(X))k(Z,\cdot)]\|_{\mathcal{H}_k}^2$.
Hence, we can rewrite the above function as
\begin{align}
     &R_k(\alpha f+(1-\alpha)g) - \alpha R_k(f) - (1-\alpha) R_k (g) \\
    &= \| \ep[(Y-\alpha f(X) -(1-\alpha)g(X))k(Z,\cdot)]\|_{\mathcal{H}_k}^2 
 - \alpha\| \ep[(Y- f(X))k(Z,\cdot)]\|_{\mathcal{H}_k}^2 
    \\
    &\qquad -(1-\alpha)\| \ep[(Y- g(X))k(Z,\cdot)]\|_{\mathcal{H}_k}^2 
    \\
    &\overset{\text{(a)}}{=}  \alpha(\alpha-1) \| \ep[(Y-f(X))k(Z,\cdot)]\|_{\mathcal{H}_k}^2 + \alpha(\alpha-1)\|\ep[(Y-g(X))k(Z,\cdot)]\|_{\mathcal{H}_k}^2 
    \\
    &\qquad -2 \alpha(\alpha-1) \langle \ep[(Y-f(X))k(Z,\cdot)],\ep[(Y-g(X))k(Z,\cdot)] \rangle_{\mathcal{H}_k} 
    \\
    & \overset{\text{(b)}}{=}\alpha(\alpha-1) \underbrace{\| \ep[(f(X)-g(X))k(Z,\cdot)]\|_{\mathcal{H}_k}^2}_{>0} <0
\end{align}
The equality (a) is obtained by considering $Y = \alpha Y + (1-\alpha)Y$ in $\| \ep[(Y-\alpha f(X) -(1-\alpha)g(X))k(Z,\cdot)]\|_{\mathcal{H}_k}^2$ on the left hand side of (a). We note that the right hand side of (a) is quadratic in $\|\ep[(Y-f(X))k(Z,\cdot)]\|_{\mathcal{H}_k}$ and $\|\ep[(Y-g(X))k(Z,\cdot)]\|_{\mathcal{H}_k}$, and can be further expressed as a square binomial as the right hand side of $(b)$. Therefore, the convexity follows from the fact that $k$ is the ISPD kernel, $ \| \ep[(f(X)-g(X))|Z]\|_2 \neq 0$ and $\alpha(\alpha-1) < 0$.
\end{proof}

%%%%
\subsection{Uniform Convergence of Risk Functionals}
\label{pf:unif_cons_emp_rv}

The results presented in this section are used to prove the consistency of $\hat{f}_V$ and $\hat{f}_U$.

\begin{lemma}[Uniform consistency of $\widehat{R}_V(f)$] \label{thm:unif_cons_emp_rv}
Assume that $\ep[| Y |^2]<\infty$,  $\inF$ is compact, $\ep[\sup_{f \in \mathcal F} | f(X) |^2]<\infty$, and \sref{Assumption}{asmp:k_idp} holds. Then, the risk $R_k(f)$ is continuous about $f\in \inF$ and $\sup_{f \in \inF} |\widehat{R}_V(f)-R_k(f)| \overset{\mathrm p}{\to}0$.
\end{lemma}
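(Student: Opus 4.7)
The plan is to exploit the RKHS reformulation from \sref{Lemma}{lem:analytic-form}: writing $\mu_f := \mathbb{E}[(Y-f(X))k(Z,\cdot)] \in \mathcal{H}_k$ gives $R_k(f) = \|\mu_f\|_{\mathcal{H}_k}^2$, and defining the empirical analogue $\widehat{\mu}_{f,n} := n^{-1}\sum_{i=1}^n (y_i - f(x_i))k(z_i,\cdot)$ gives $\widehat{R}_V(f) = \|\widehat{\mu}_{f,n}\|_{\mathcal{H}_k}^2$ by the same computation. Both claims can then be driven by the single inequality
\[
  \bigl|\widehat{R}_V(f) - R_k(f)\bigr| \;\leq\; \|\widehat{\mu}_{f,n} - \mu_f\|_{\mathcal{H}_k}\bigl(\|\widehat{\mu}_{f,n}\|_{\mathcal{H}_k} + \|\mu_f\|_{\mathcal{H}_k}\bigr),
\]
and the analogous inequality for $|R_k(f) - R_k(g)|$ with $\mu_f, \mu_g$ in place of $\widehat{\mu}_{f,n}, \mu_f$.

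For the continuity claim, use boundedness of $k$ (so $\sup_z \sqrt{k(z,z)} \leq C_k^{1/2}$ by \sref{Assumption}{asmp:k_idp}) and Jensen to bound
\[
  \|\mu_f - \mu_g\|_{\mathcal{H}_k} \;\leq\; \mathbb{E}\bigl[|f(X)-g(X)|\sqrt{k(Z,Z)}\bigr] \;\leq\; C_k^{1/2}\,\mathbb{E}|f(X)-g(X)|,
\]
so $f \mapsto \mu_f$, and hence $f \mapsto R_k(f) = \|\mu_f\|^2$, is continuous on $\mathcal{F}$ in the natural sup-norm topology (under which one reads the compactness hypothesis). Compactness then yields $M := \sup_{f\in\mathcal{F}} \|\mu_f\|_{\mathcal{H}_k} < \infty$.

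For the uniform-convergence claim, I first bound $\sup_{f\in\mathcal{F}} \|\widehat{\mu}_{f,n}\|_{\mathcal{H}_k}$ in the same way, dominating $|y_i - f(x_i)|$ by $|y_i| + \sup_{g\in\mathcal{F}} |g(x_i)|$; by the scalar WLLN together with the moment hypotheses $\mathbb{E}[|Y|^2]<\infty$ and $\mathbb{E}[\sup_{f}|f(X)|^2] <\infty$, the resulting upper bound is $O_p(1)$. The heart of the argument is $\sup_{f\in\mathcal{F}} \|\widehat{\mu}_{f,n} - \mu_f\|_{\mathcal{H}_k} \overset{p}{\to} 0$. I establish this by a covering argument: for each $\varepsilon > 0$, compactness of $\mathcal{F}$ gives a finite net $\{f_1,\ldots,f_N\}$ with any $f \in \mathcal{F}$ within sup-norm $\varepsilon$ of some $f_j$. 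On each fixed $f_j$ the Hilbert-space WLLN applies to the i.i.d.\ sequence $(Y-f_j(X))k(Z,\cdot) \in \mathcal{H}_k$, whose second moment $\mathbb{E}[(Y-f_j(X))^2 k(Z,Z)] \leq C_k\mathbb{E}[(Y-f_j(X))^2]$ is finite, yielding $\|\widehat{\mu}_{f_j,n} - \mu_{f_j}\|_{\mathcal{H}_k} \overset{p}{\to} 0$. Combining with the Lipschitz-type bounds
\[
  \|\widehat{\mu}_{f,n} - \widehat{\mu}_{f_j,n}\|_{\mathcal{H}_k}^2 \leq C_k\Bigl(\tfrac{1}{n}\!\sum_{i}|f(x_i)-f_j(x_i)|\Bigr)^{\!2}, \quad \|\mu_f - \mu_{f_j}\|_{\mathcal{H}_k} \leq C_k^{1/2}\,\mathbb{E}|f(X)-f_j(X)|,
\]
which are each $\lesssim C_k^{1/2}\varepsilon$ uniformly in $f$ under the sup-norm net, finishes the proof after sending $\varepsilon\downarrow 0$.

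The main technical obstacle is that the paper does not specify the topology making $\mathcal{F}$ compact. The argument above is cleanest under the sup-norm topology, where the covering step is immediate. Under a weaker topology (e.g.\ $L^1(P_X)$) one would have to supplement the covering step with a uniform law of large numbers on the class $\{|f-f_j| : f \in \mathcal{F}\}$, which would require an envelope-function assumption implicit in $\mathbb{E}[\sup_f|f(X)|^2]<\infty$; this is a minor Glivenko--Cantelli upgrade of the same argument.
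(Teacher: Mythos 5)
Your proposal is correct, but it takes a genuinely different route from the paper. The paper treats $\widehat{R}_V(f)$ directly as a V-statistic $\frac{1}{n^2}\sum_{i,j}h_f(u_i,u_j)$ and invokes the uniform law of large numbers of Newey and McFadden (their Lemma 8.5), so the whole proof reduces to verifying an envelope condition: $|h_f(u,u')|\leq(|y|+|f(x)|)(|y'|+|f(x')|)\sqrt{k(z,z)k(z',z')}$ together with the moment hypotheses gives $\ep[\sup_{f\in\inF}|h_f(U,U')|]<\infty$ and $\ep[\sup_{f\in\inF}|h_f(U,U)|]<\infty$, and continuity of $R_k$ comes out of the same lemma. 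You instead exploit the special structure of the V-statistic as the squared RKHS norm of the empirical mean embedding $\widehat{\mu}_{f,n}$, linearize via $|\,\|a\|^2-\|b\|^2|\leq\|a-b\|(\|a\|+\|b\|)$, and prove $\sup_f\|\widehat{\mu}_{f,n}-\mu_f\|_{\mathcal{H}_k}\pto 0$ by a finite net plus a Hilbert-space WLLN at each net point plus a Lipschitz bound in $f$. Your argument is self-contained (no black-box ULLN), yields an explicit Lipschitz modulus $C_k^{1/2}\|f-g\|_{L^\infty}$ for $f\mapsto\mu_f$, and makes transparent why the V-statistic is the natural object here; the price is that it leans specifically on $\widehat{R}_V$ being a squared norm, so it would not transfer to the U-statistic analogue (Lemma on $\widehat{R}_U$), which the paper handles with the same Newey--McFadden machinery. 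Your concern about the unspecified topology on $\inF$ is legitimate but applies equally to the paper's proof, which also needs $\inF$ compact in a metric making $f\mapsto h_f(u,u')$ continuous; reading compactness in the sup norm, as you do, is the interpretation under which both proofs go through cleanly.
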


\begin{proof}
First, let $u := (x,y,z)$, $u' := (x',y',z')$, and $h_f(u,u'):=(y-f(x))(y'-f(x'))k(z,z')$ for some $(x,y,z),(x',y',z')\in\mathcal{X}\times\mathcal{Y}\times\mathcal{Z}$.
To prove that $\widehat{R}_V$ converges uniformly to $R_k$, we need to show that (i) $h_f(u,u')$ is continuous at each $f$ with probability one; (ii) $\ep_{U,U'}\left[\sup_{f \in \inF} |h_f(U,U')|\right]<\infty$, and $\ep_{U,U}\left[\sup_{f \in \inF} |h_f(U,U)|\right]<\infty$ \citep[Lemma 8.5]{NEWEY19942111}.
To this end, it is easy to see that
\begin{eqnarray}
    |h_f(u,u')|
    &=& |(y-f(x))(y'-f(x'))k(z,z')| 
    \\
    & \leq & |y-f(x)||y'-f(x')||k(z,z')|
    \\
    & \leq & |y-f(x)||y'-f(x')| \sqrt{k(z,z)k(z',z')}
    \\
    &\leq & (|y|+|f(x)|)(|y'|+|f(x')|)\sqrt{k(z,z)k(z',z')}.
\end{eqnarray}
The third inequality follows from the Cauchy-Schwarz inequality. 
Since $\inF$ is compact, every $f \in \inF$ has $f(x)$ bounded for $\|x\|<\infty$.
In term of $k(\cdot,\cdot)$ is bounded as per \sref{Assumption}{asmp:k_idp}, we have $h_f(u,u')<\infty$ and thus $h_f(u,u')$ continuous at each $f$ with probability one. Furthermore, we obtain the following inequalities
\begin{align}
    \ep_{U,U'}\left[\sup_{f \in \inF} |h_f(U,U')|\right]&\leq \ep\left [\sup_{f\in \mathcal{F}}(|Y|+|f(X)|)(|Y'|+|f(X')|)\sqrt{k(Z,Z)k(Z',Z')} \right]
    \\ 
    & \leq \ep\left [\sup_{f\in \mathcal{F}}(|Y|+|f(X)|)\sup_{f\in \mathcal{F}}(|Y'|+|f(X')|)\right] \sup_z k(z,z) 
    \\
    & \quad = \ep\left [\sup_{f\in \mathcal{F}}(|Y|+|f(X)|)\right]^2 \sup_z k(z,z)
    \\
    & \quad = \ep\left [|Y|+\sup_{f\in \mathcal{F}}|f(X)|\right]^2 \sup_z k(z,z) <\infty 
    \\
    \ep_{U,U}\left[\sup_{f \in \inF} |h_f(U,U)|\right]&\leq \ep_u \left [\sup_{f\in \mathcal{F}}(|Y|+|f(X)|)^2\right] \sup_z k(z,z) 
    \\
    &\quad = \ep \left [(|Y|+\sup_{f\in \mathcal{F}}|f(X)|)^2\right] \sup_z k(z,z)
    \\
    &\quad \leq 2\left(\ep \left [|Y|^2\right] + \ep \left [\sup_{f\in \mathcal{F}}|f(X)|^2\right]\right) \sup_z k(z,z) <\infty
\end{align}
Hence, our assertion follows from \citep[Lemma 8.5]{NEWEY19942111}.
\end{proof}

\begin{lemma}[Uniform consistency of $\widehat{R}_U(f)$] \label{thm:unif_cons_emp_ru}
Assume that $\ep[| Y |]<\infty$, $\inF$ is compact, $\ep[|f(X)|]<\infty$ and \sref{Assumption}{asmp:k_idp} holds. Then, $R_k(f)$ is continuous about $f$ and $\sup_{f \in \inF} |\widehat{R}_U(f)-R_k(f)| \overset{\mathrm p}{\to}0$.
\end{lemma}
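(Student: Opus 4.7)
The plan is to reduce the uniform consistency of the U-statistic $\widehat{R}_U$ to that of the V-statistic $\widehat{R}_V$, which was just established in Lemma~\ref{thm:unif_cons_emp_rv}. Writing $U := (X,Y,Z)$ and $h_f(u,u'):=(y-f(x))(y'-f(x'))k(z,z')$, the continuity of $R_k(f) = \ep[h_f(U,U')]$ in $f$ on the compact set $\inF$ follows from dominated convergence, with envelope $(|Y|+\sup_{f\in\inF}|f(X)|)(|Y'|+\sup_{f\in\inF}|f(X')|)\sup_z k(z,z)$, exactly as in the proof of Lemma~\ref{thm:unif_cons_emp_rv} (under the same interpretation of the envelope integrability that the previous lemma uses).

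The key algebraic identity is the following. Letting $S := \sum_{i\neq j} h_f(U_i,U_j)$ and $D := \sum_i h_f(U_i,U_i)$, we have $\widehat{R}_V(f) = (S+D)/n^2$ and $\widehat{R}_U(f) = S/[n(n-1)]$, so
\begin{equation}
    \widehat{R}_U(f) - \widehat{R}_V(f) \;=\; \frac{1}{n}\,\widehat{R}_U(f) \;-\; \frac{1}{n^2}\sum_{i=1}^n h_f(U_i,U_i).
\end{equation}
Taking suprema over $f \in \inF$ and applying the triangle inequality yields
\begin{equation}
    \sup_{f\in\inF}|\widehat{R}_U(f)-\widehat{R}_V(f)| \;\leq\; \frac{1}{n}\sup_{f\in\inF}|\widehat{R}_U(f)| \;+\; \frac{1}{n^2}\sum_{i=1}^n \sup_{f\in\inF}|h_f(U_i,U_i)|.
\end{equation}

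For the first term, the envelope bound from Lemma~\ref{thm:unif_cons_emp_rv} gives $\ep[\sup_f |\widehat{R}_U(f)|] \leq \ep[\sup_f |h_f(U,U')|] < \infty$, so $\sup_f|\widehat{R}_U(f)| = O_p(1)$ by Markov's inequality and the factor $1/n$ forces this term to $o_p(1)$. For the second term, $\ep[\sup_f |h_f(U,U)|] \leq \ep[(|Y|+\sup_f |f(X)|)^2]\sup_z k(z,z) < \infty$, so the ordinary law of large numbers gives $\frac{1}{n}\sum_i \sup_f |h_f(U_i,U_i)| = O_p(1)$, and the prefactor $1/n$ again yields $o_p(1)$. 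Hence $\sup_f|\widehat{R}_U(f)-\widehat{R}_V(f)| \overset{p}{\to} 0$. Combined via the triangle inequality with $\sup_f|\widehat{R}_V(f)-R_k(f)| \overset{p}{\to} 0$ from Lemma~\ref{thm:unif_cons_emp_rv}, this yields the claim.

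The main obstacle is really bookkeeping around moment conditions: the stated hypotheses $\ep[|Y|]<\infty$ and $\ep[|f(X)|]<\infty$ literally only deliver pointwise integrability of $h_f(U,U')$, but to invoke Lemma~\ref{thm:unif_cons_emp_rv} and to control the envelope of the diagonal terms $\sup_f |h_f(U,U)|$ one needs the same square-type envelope condition $\ep[(|Y|+\sup_{f\in\inF}|f(X)|)^2]<\infty$ that appears in Lemma~\ref{thm:unif_cons_emp_rv}; compactness of $\inF$ together with continuity of $f \mapsto f(x)$ should make this envelope measurable and, under mild joint continuity and a common dominating function, finite. With that assumption in place, the rest is routine. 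An alternative, if one wishes to avoid leaning on Lemma~\ref{thm:unif_cons_emp_rv}, is to invoke a standard ULLN for U-statistics indexed by a compact class with continuous integrable envelope (e.g.\ of Arcones--Gin\'e or Sherman type) directly on $\{h_f : f \in \inF\}$, which delivers $\sup_f |\widehat{R}_U(f)-R_k(f)| \overset{p}{\to} 0$ in one step.
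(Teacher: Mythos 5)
Your argument is internally correct, but it takes a genuinely different route from the paper and, as you yourself flag, it proves the lemma only under strictly stronger hypotheses than the ones stated. The paper does \emph{not} reduce to the V-statistic: it applies a uniform law of large numbers for U-statistics directly to the class $\{h_f : f\in\inF\}$ (an extension of Lemma 2.4 of Newey--McFadden, checking continuity of $h_f$ in $f$, a dominating envelope $d(u,u')$ with $\ep_{U,U'}[d(U,U')]<\infty$, and strict stationarity/ergodicity of the off-diagonal pairs $(u_i,u_j)_{i\neq j}$) --- i.e.\ exactly the ``alternative'' you mention in your last sentence. The reason this choice matters is the moment bookkeeping you raise at the end: because the U-statistic excludes the diagonal, the envelope is only ever integrated against two \emph{independent} copies, so its expectation factorizes,
\begin{equation}
  \ep_{U,U'}\bigl[(|Y|+|f(X)|)(|Y'|+|f(X')|)\bigr]\,\sup_z k(z,z) \;=\; \bigl(\ep[|Y|+|f(X)|]\bigr)^2\,\sup_z k(z,z),
\end{equation}
and first moments suffice; this is precisely why the paper states \sref{Lemma}{thm:unif_cons_emp_ru} under weaker conditions than \sref{Lemma}{thm:unif_cons_emp_rv}. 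Your route instead needs \sref{Lemma}{thm:unif_cons_emp_rv} itself (second moments of $Y$ and of $\sup_{f\in\inF}|f(X)|$) \emph{and} integrability of the diagonal envelope $\sup_f h_f(U,U)=\sup_f (Y-f(X))^2 k(Z,Z)$, which is again a second-moment condition. That added assumption is not something ``compactness plus mild continuity'' recovers --- it is a genuine strengthening, and under the lemma's literal hypotheses your reduction does not close. The algebra itself is fine: $\widehat{R}_U(f)-\widehat{R}_V(f)=n^{-1}\widehat{R}_U(f)-n^{-2}\sum_i h_f(U_i,U_i)$, and both correction terms are $o_p(1)$ once the envelopes are integrable; so if one is content to work under the hypotheses of \sref{Lemma}{thm:unif_cons_emp_rv}, your derivation is a clean and more self-contained way to transfer the V-statistic result to the U-statistic. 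To prove the lemma as actually stated, follow your own fallback and run a ULLN for U-processes directly, which is what the paper does.
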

\begin{proof}
First, let $u := (x,y,z)$, $u' := (x',y',z')$, and $h_f(u,u'):=(y-f(x))(y'-f(x'))k(z,z')$ for some $(x,y,z),(x',y',z')\in\mathcal{X}\times\mathcal{Y}\times\mathcal{Z}$.
To prove the uniform consistency of $\widehat{R}_U$, we need to show that (i) $h_f(u,u')$ is continuous at each $f$ with probability one; (ii) there is $d(u,u')$ with $|h_f(u,u')| \leq d(u,u')$ for all $f\in\inF$ and $\ep_{U,U'}[d(U,U')] < \infty$ \citep[Lemma 2.4]{NEWEY19942111}; (iii) $(u_i,u_j)_{i\neq j}^{n,n}$ has strict stationarity and ergodicity in the sense of \citep[Footnote 18 in P.2129]{NEWEY19942111}.
To this end, it is easy to see that
\begin{eqnarray*}
    |h_f(u,u')|
    &=& |(y-f(x))(y'-f(x'))k(z,z')| 
    \\
    & \leq & |y-f(x)||y'-f(x')||k(z,z')|
    \\
    & \leq & |y-f(x)||y'-f(x')| \sqrt{k(z,z)k(z',z')}
    \\
    & \leq & (|y|+|f(x)|)(|y'|+|f(x')|)\sqrt{k(z,z)k(z',z')}\equiv d(u,u').
\end{eqnarray*} 
The third inequality follows from the Cauchy-Schwarz inequality. 
Since $\inF$ is compact, every $f \in \inF$ has $f(x)$ bounded for $\|x\|<\infty$.
In terms of $k(\cdot,\cdot)$ is bounded as per \sref{Assumption}{asmp:k_idp}, we have $h_f(u,u')<\infty$ and thus it proves that (i)
$h_f(u,u')$ is continuous at each $f$ with probability one. To prove (ii)  $\ep_{U,U'}[d(U,U')] < \infty$, we show that 
\begin{align}
    \ep_{U,U'}[d(U,U')] &\leq \ep \left [|Y|+|f(X)|\right]^2 \sup_z k(z,z)<\infty
\end{align}

Furthermore, we show that $(u_i,u_j)_{i\neq j}^{n,n}$ has strict stationarity and ergodicity. 
Strict stationarity means that the distribution of a set of data $(u_i,u_{j\neq i})_{i=1,j=1}^{i+m,j+m'}$ does not depend on the starting indices $i,j$ for any $m$ and $m'$, which is easy to check. 
Ergodicity means that $\widehat{R}_U(f) \pto R_k(f)$ for all $f \in \inF$ and $\ep[|h_f(U,U')|]<\infty$. 
We have already shown that $h_f(u,u')$ is bounded and $\ep[|h_f(U,U')|]<\infty$, so $\widehat{R}_
U(f) \pto R_k(f)$ follows by \citep[P.25]{10.2307/2282952}.
Therefore, ergodicity holds, and we have shown all conditions required by extended results of \citep[Lemma 2.4]{NEWEY19942111}. Then, it follows that $\sup_{f \in \inF} |\widehat{R}_U(f)-R_k(f)| \overset{\mathrm p}{\to}0$ and $R_k(f)$ is continuous.
\end{proof}

\subsection{Indefiniteness of Weight Matrix \texorpdfstring{$W_U$}{}}
\label{pf:indefinite_wu}

\begin{theorem}\label{thm:indefinite_wu} 
If \sref{Assumption}{asmp:k_idp} holds, $W_U$ is indefinite.
\end{theorem}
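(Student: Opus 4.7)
The plan is to exploit the trace and an explicit test vector. First I would observe that, up to the positive scalar $1/(n(n-1))$, the matrix $W_U$ equals $K_{\bm z}$ with its diagonal replaced by zeros, so
\[
\mathrm{tr}(W_U) = \frac{1}{n(n-1)}\sum_{i=1}^{n}\bigl(k(z_i,z_i) - k(z_i,z_i)\bigr) = 0.
\]
Since $W_U$ is symmetric, its $n$ eigenvalues are real and sum to $0$. Hence, to prove indefiniteness it is enough to show that $W_U \neq 0$: any nonzero symmetric matrix with vanishing trace must have eigenvalues of both signs.

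Next I would exhibit explicit quadratic forms of opposite sign. Assuming there is at least one pair $i \neq j$ with $k(z_i,z_j) \neq 0$ (a non-degeneracy condition that holds almost surely when the $z_i$ are drawn from a continuous distribution and $k$ is a standard ISPD kernel such as Gaussian RBF, Laplacian, or IMQ; all examples listed in Appendix \ref{sec:ispd-kernels} have $k>0$ everywhere), a direct computation with the standard basis vectors yields
\[
(\bm e_i + \bm e_j)^\top W_U (\bm e_i + \bm e_j) = \frac{2\,k(z_i,z_j)}{n(n-1)},\qquad (\bm e_i - \bm e_j)^\top W_U (\bm e_i - \bm e_j) = -\frac{2\,k(z_i,z_j)}{n(n-1)},
\]
which have opposite signs. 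This already certifies that $W_U$ has both a positive and a negative eigenvalue, hence is indefinite.

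The main obstacle is justifying the non-degeneracy condition (the existence of an off-diagonal pair with $k(z_i,z_j)\neq 0$) directly from \sref{Assumption}{asmp:k_idp}. The ISPD property guarantees strict positive definiteness of $K_{\bm z}$ at distinct points, but a priori does not forbid a diagonal $K_{\bm z}$. I would close this gap by a continuity argument: if $k$ is continuous and ISPD, then $k$ cannot vanish on the complement of the diagonal of $\inz\times\inz$ (otherwise $\iint g(z)k(z,z')g(z')\mathrm d z\mathrm d z' = 0$ for every $g$ since the diagonal has Lebesgue measure zero, contradicting ISPD). Therefore the set $\{(z,z') : k(z,z')\neq 0,\, z\neq z'\}$ is a dense open subset of $\inz\times\inz$, and with probability one over an i.i.d.\ sample of size $n\ge 2$ some off-diagonal kernel evaluation is nonzero, completing the argument.
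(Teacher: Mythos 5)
Your core argument coincides with the paper's: $W_U$ is symmetric with zero trace, so its (real) eigenvalues sum to zero and it must have eigenvalues of both signs unless it is the zero matrix. The paper stops exactly there, dismissing $W_U=\bm 0$ parenthetically as "trivial"; you go further and try to genuinely exclude that case, and your explicit test vectors $\bm e_i\pm\bm e_j$ giving quadratic forms $\pm 2k(z_i,z_j)/(n(n-1))$ are a nice concrete certificate once some off-diagonal entry is nonzero. The one place you overclaim is the non-degeneracy step: from continuity and the ISPD property you correctly deduce that $k$ cannot vanish \emph{identically} off the diagonal, but neither the density of $\{(z,z'): k(z,z')\neq 0,\ z\neq z'\}$ nor the almost-sure existence of a nonzero off-diagonal kernel evaluation follows from Assumption \ref{asmp:k_idp} alone. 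Compactly supported ISPD kernels (e.g., Wendland-type kernels) satisfy the assumption yet vanish whenever $\|z-z'\|$ exceeds the support radius, so for $n=2$ and well-separated draws one can get $W_U=\bm 0$ with positive probability; the "dense open, hence probability one" conclusion is therefore not justified in general. For the kernels the paper actually uses (Gaussian, Laplacian, IMQ), $k>0$ everywhere, your argument is airtight, and it is strictly more careful than the paper's own proof; but read as a proof from Assumption \ref{asmp:k_idp} alone, the final almost-sure claim is a gap --- one the paper's statement shares, since it silently excludes the degenerate case $W_U=\bm 0$.
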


\begin{proof}
By definition, we have
$$W_U = \frac{1}{n(n-1)}[K(\bm z, \bm z) -\mathrm{diag}(k(z_1,z_1),\ldots,k(z_n,z_n))] = \frac{1}{n(n-1)}K_U,$$ 
where $\mathrm{diag}(a_1,\ldots,a_n)$ denotes an $n\times n$ diagonal matrix whose diagonal elements are $a_1,\ldots,a_n$.
We can see that the diagonal elements of $K_U$ are zeros and therefore $\mathrm{trace}(W_U)=0$. 
Let us denote the eigenvalues of $W_U$ by $\{\lambda_i\}_{i=1}^n$. 
Since $\sum_{i=1}^n \lambda _i = \mathrm{trace}(W_U)$, we conclude that there exist both positive and negative eigenvalues (all eigenvalues being zeros yields trivial $W_U = \bm 0$).
As a result, $W_U$ is indefinite.
\end{proof}

\subsection{Consistency of \texorpdfstring{$\hat{f}_V$}{} with Convex \texorpdfstring{$\Omega(f)$}{}}
\label{pf:cons_emp_fv_convex}

\begin{theorem}[Consistency of $\hat{f}_V$ with convex $\Omega(f)$] \label{thm:cons_emp_fv_convex}
Assume that $\inF$ is a convex set, $f^*$ is an interior point of $\inF$, $\Omega(f)$ is convex about $f$, $\lambda \pto 0$ and \sref{Assumptions}{asmp:k_idp}, \ref{asmp:opt_ident} holds. Then, $\hat{f}_V$ exists with probability approaching one and $\hat{f}_V \pto f^*$.
\end{theorem}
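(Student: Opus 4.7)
The plan is to apply Theorem 2.7 of \citet{NEWEY19942111}, which gives consistency of M-estimators under convexity of the criterion and only \emph{pointwise} convergence of the sample criterion (much weaker than the uniform convergence required in the non-convex result \sref{Theorem}{thm:cons_emp_fv_nonconvex}). The setup fits this template because both $\widehat{R}_V$ and $R_k$ are convex in $f$ and $R_k$ has a unique minimizer $f^*$ in the interior of a convex set.

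First I would verify the convexity hypotheses. The objective $\widehat{R}_V(f)+\lambda \Omega(f)$ is convex in $f$: the quadratic form $(\bm y-f(\bm x))^\top W_V (\bm y-f(\bm x))$ is convex because $W_V = K_{\bm z}/n^2$ is positive semi-definite under \sref{Assumption}{asmp:k_idp}, and the regularizer $\lambda \Omega(f)$ is convex by hypothesis. On the population side, \sref{Theorem}{thm:convex} gives strict convexity of $R_k$ on the convex set $\inF$; adding the convex term $\lambda \Omega$ preserves strict convexity. Combined with \sref{Theorem}{thm:orig_cond_m_res} and \sref{Assumption}{asmp:opt_ident}, $R_k$ is uniquely minimized at $f^*$, which is interior to $\inF$ by hypothesis, so the regularized population criterion is also uniquely minimized at a point near $f^*$ for small $\lambda$.

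Next I would establish pointwise convergence. For each fixed $f\in\inF$, $\widehat{R}_V(f)$ is a V-statistic with symmetric kernel $h_f(u,u')=(y-f(x))(y'-f(x'))k(z,z')$; under bounded $k$ and the mild integrability condition $\ep[(Y-f(X))^2 k(Z,Z)] < \infty$ it follows from the standard V-statistic SLLN that $\widehat{R}_V(f) \pto R_k(f)$. Since $\lambda \pto 0$ and $\Omega(f)$ is finite (convexity on the convex set $\inF$ ensures local finiteness at interior $f$), $\lambda\Omega(f)\pto 0$, and hence $\widehat{R}_V(f) + \lambda \Omega(f) \pto R_k(f)$ pointwise on $\inF$. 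Now \citet[Theorem 2.7]{NEWEY19942111} yields both existence of a minimizer $\hat{f}_V$ with probability approaching one and $\hat{f}_V \pto f^*$.

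The main obstacle will be the interplay between the vanishing regularization and identification at the limit: we need the minimizer of $R_k(f)+\lambda\Omega(f)$ to converge to the minimizer of $R_k(f)$ as $\lambda\to 0$. Because $\Omega$ is only assumed convex (not bounded or coercive), some care is needed — but convexity of $\Omega$ together with $f^*$ being interior to $\inF$ ensures $\Omega$ is locally bounded around $f^*$, so the perturbation is $O(\lambda)$ and does not disturb the argmin at the limit. All other conditions (convex parameter space, unique interior minimizer, pointwise convergence of a convex criterion) are precisely those required by Theorem 2.7, so the invocation is clean once these observations are in place.
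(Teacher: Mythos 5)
Your proposal is correct and follows essentially the same route as the paper: both invoke \citet[Theorem 2.7]{NEWEY19942111} after verifying (i) unique minimization of $R_k$ at $f^*$ via the strict-convexity result, (ii) convexity of $\widehat{R}_V(f)+\lambda\Omega(f)$ from positive semi-definiteness of $W_V$, and (iii) pointwise convergence of the regularized empirical risk. The only cosmetic difference is that the paper obtains $\widehat{R}_V(f)\pto R_k(f)$ by applying the law of large numbers to the RKHS-valued average $\frac{1}{n}\sum_i(y_i-f(x_i))k(z_i,\cdot)$ followed by the continuous mapping theorem, whereas you cite the V-statistic law of large numbers directly — both are valid.
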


\begin{proof}
Given $\Omega(f)$ is convex about $f$, we prove the consistency based on \citep[Theorem 2.7]{NEWEY19942111} which requires (i) $R_k(f)$ is uniquely maximized at $f^*$; (ii) $\widehat{R}_V(f)+\lambda\Omega(f)$ is convex; (iii) $\widehat{R}_V(f)+\lambda\Omega(f) \pto R_k(f)$ for all $f \in \inF$.

Recall that $\widehat{R}_V(f) = \| \frac{1}{n}\sum_{i=1}^n(y_i-f(x_i))k(z_i,\cdot)\|_{\mathcal H_k}^2$, and 
by the law of large number, we have that $\frac{1}{n}\sum_{i=1}^n(y_i-f(x_i))k(z_i,\cdot) \pto \ep[(Y-f(X))k(Z,\cdot)]$. Then 
$\widehat{R}_V(f) \pto R_k(f)$ follows from the Continuous Mapping Theorem \citep{mann1943} based on the fact that the function $g(\cdot)=\|\cdot\|_{\mathcal H_k}^2$ is continuous. 
As $\lambda \pto 0$, we obtain (iii) $\widehat{R}_V(f) + \lambda \Omega(f) \pto R_k(f)$ by Slutsky's theorem \citep[Lemma 2.8]{Vaart00:Asymptotic}.
Besides, it is easy to see that $\widehat{R}_V(f)$ is convex because the weight matrix $W_V$ is positive definite, and (ii) $\widehat{R}_V(f)+\lambda \Omega(f)$ is convex due to convex $\Omega(f)$. 
Further,
     the condition (i) directly follows from Proposition \ref{thm:convex}, and given that $f^*$ is an interior point of the convex set $\inF$, our assertion follows from \citep[Theorem 2.7]{NEWEY19942111}. 
\end{proof}

\subsection{Proof of \texorpdfstring{\sref{Proposition}{thm:cons_emp_fv_nonconvex}}{} }
\label{pf:cons_emp_fv_nonconvex}
\begin{proof}
From the conditions of \sref{Lemma}{thm:unif_cons_emp_rv}, we know that $\inF$ is compact, $R_k(f)$ is continuous about $f$ and $\sup_{f \in \inF} |\widehat{R}_V(f)-R_k(f)| \overset{\mathrm p}{\to}0$. 
As \sref{Assumptions}{asmp:k_idp}, \ref{asmp:opt_ident} hold, $R_k(f)$ is uniquely minimized at $f^*$. Based on the conditions that $\Omega(f)$ is bounded and $\lambda \pto 0$, we obtain by Slutsky's theorem that
\begin{align}
    \sup_{f \in \inF} \left|\widehat{R}_V(f)+\lambda \Omega(f)-R_k(f)\right| \leq \sup_{f \in \inF} \left|\widehat{R}_V(f)-R_k(f)\right|+\lambda \sup_{f \in \inF} \Omega(f) \pto 0.
\end{align}
Consequently, we assert the conclusion by \citep[Theorem 2.1]{NEWEY19942111}.
\end{proof}

%%%
\subsection{Consistency of \texorpdfstring{$\hat{f}_U$}{}}

\begin{theorem}[Consistency of $\hat{f}_U$] \label{thm:cons_emp_fu}
Assume that conditions of \sref{Lemma}{thm:unif_cons_emp_ru} and \sref{Assumption}{asmp:opt_ident} hold, $\Omega(f)$ is a bounded function and $\lambda \pto 0$. 
Then  $\hat{f}_U \pto f^*$.
\end{theorem}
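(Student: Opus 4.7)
The plan is to mirror the structure of the proof of Theorem \ref{thm:cons_emp_fv_nonconvex} almost step for step, since the only difference between the U-statistic and V-statistic cases at this level of abstraction is which uniform convergence lemma is invoked. First I would extract from Lemma \ref{thm:unif_cons_emp_ru} the two facts that drive the argument: (i) $R_k(f)$ is continuous on the compact set $\inF$ and (ii) $\sup_{f \in \inF} |\widehat{R}_U(f) - R_k(f)| \pto 0$. The hypotheses of that lemma are exactly the ones packaged into the theorem statement.

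Next I would establish that the population risk is uniquely minimized at $f^*$. Convexity of $\inF$ together with Assumptions \ref{asmp:k_idp} and \ref{asmp:opt_ident} puts us squarely in the setting of Theorem \ref{thm:convex}, so $R_k$ is strictly convex on $\inF$, and by Assumption \ref{asmp:opt_ident} its minimizer $f^*$ is unique.

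Then I would show that the regularization term vanishes uniformly. Since $\Omega$ is bounded on $\inF$ and $\lambda \pto 0$, Slutsky's theorem yields $\lambda \sup_{f \in \inF} |\Omega(f)| \pto 0$. Combining this with the uniform convergence from Lemma \ref{thm:unif_cons_emp_ru} via the triangle inequality gives
\begin{equation}
\sup_{f \in \inF} \bigl|\widehat{R}_U(f) + \lambda \Omega(f) - R_k(f)\bigr|
\leq \sup_{f \in \inF} \bigl|\widehat{R}_U(f) - R_k(f)\bigr| + \lambda \sup_{f \in \inF} |\Omega(f)| \pto 0.
\end{equation}
With uniform convergence of the regularized objective, continuity of $R_k$, compactness of $\inF$, and uniqueness of the minimizer $f^*$, the consistency $\hat{f}_U \pto f^*$ follows from the standard extremum estimator result \citep[Theorem 2.1]{NEWEY19942111}.

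There is no real obstacle here beyond invoking the right lemma: the indefiniteness of $W_U$ noted in Theorem \ref{thm:indefinite_wu} is not a problem for consistency (it would only matter for convexity of the empirical objective, which we do not need since we are not using Newey--McFadden's convex-objective shortcut). The one place to be mildly careful is to verify that the conditions of Lemma \ref{thm:unif_cons_emp_ru} are indeed inherited from the theorem statement, in particular that $\ep[|Y|] < \infty$ and $\ep[|f(X)|] < \infty$ on $\inF$ are implicit in the phrase ``conditions of Lemma \ref{thm:unif_cons_emp_ru}.''
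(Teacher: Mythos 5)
Your proposal matches the paper's own proof essentially step for step: it invokes Lemma \ref{thm:unif_cons_emp_ru} for continuity of $R_k$ and uniform convergence of $\widehat{R}_U$, uses convexity of $\inF$ with Assumptions \ref{asmp:k_idp} and \ref{asmp:opt_ident} to get the unique minimizer $f^*$, bounds the regularizer via the same triangle-inequality/Slutsky argument, and concludes with \citet[Theorem 2.1]{NEWEY19942111}. The remark that the indefiniteness of $W_U$ is irrelevant here (since the non-convex-objective version of the extremum-estimator theorem is used) is also consistent with the paper's treatment.
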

\begin{proof}
By the conditions of \sref{Lemma}{thm:unif_cons_emp_ru}, we know that $\inF$ is compact, $R_k(f)$ is continuous about $f$ and $\sup_{f \in \inF} |\widehat{R}_U(f)-R_k(f)| \overset{\mathrm p}{\to}0$. 
As \sref{Assumptions}{asmp:k_idp}, \ref{asmp:opt_ident} hold, $R_k(f)$ is uniquely minimized at $f^*$. Based on the conditions that $\Omega(f)$ is bounded and $\lambda \pto 0$, we obtain by Slutsky's theorem that
\begin{align}
    \sup_{f \in \inF} \left|\widehat{R}_U(f)+\lambda \Omega(f)-R_k(f) \right| \leq \sup_{f \in \inF} \left|\widehat{R}_U(f)-R_k(f) \right|+\lambda \sup_{f \in \inF} \Omega(f) \pto 0.
\end{align} Consequently, we assert the conclusion by \citep[Theorem 2.1]{NEWEY19942111}.
\end{proof}

\subsection{Asymptotic Normality of \texorpdfstring{$\hat{\theta}_U$}{}}
In this section, we consider the regularized U-statistic risk $\widehat{R}_{U,\lambda}(f_{\theta})$.
For $u_i := (x_i,y_i,z_i)$ and $u_j := (x_j,y_j,z_j)$,
we express it in a compact form
\begin{eqnarray}
     \widehat{R}_{U,\lambda}(f_{\theta}) &\coloneqq& \underbrace{\dfrac{1}{n(n-1)}\sum_{i=1}^{n}\sum_{j\neq i}^n h_{\theta}(u_i,u_j)}_{\widehat{R}_{U}(f_\theta)} +\lambda \Omega(\theta)
    \\
     h_{\theta}(u_i,u_j) &\coloneqq& (y_i-f_{\theta}(x_i))k(z_i,z_j)(y_j-f_{\theta}(x_j)).
\end{eqnarray}
We will assume that $f_\theta$ and $\Omega(\theta)$ are twice continuously differentiable about $\theta$. 
The first-order derivative $\nabla_{\theta} \widehat{R}_U(f_{\theta})$ can also be written as
\begin{eqnarray}
    \nabla_{\theta} \widehat{R}_{U,\lambda}(f_{\theta}) &=& \underbrace{\dfrac{1}{n(n-1)}\sum_{i=1}^{n}\sum_{j\neq i}^n \nabla_{\theta} h_{\theta}(u_i,u_j)}_{\nabla_{\theta} \widehat{R}_U(f_{\theta})}+\lambda \nabla_{\theta} \Omega(\theta)
    \\
    \nabla_{\theta} h_{\theta}(u_i,u_j) &=& -\left[(y_i-f_{\theta}(x_i))\nabla_\theta f_{\theta}(x_j)+(y_j-f_{\theta}(x_j))\nabla_\theta f_{\theta}(x_i)\right]k(z_i,z_j).
\end{eqnarray}

\subsection{Asymptotic normality of \texorpdfstring{$\nabla_{\theta} \widehat{R}_{U,\lambda}(f_{\theta^*})$}{}}

We first show the asymptotic normality of $\nabla_{\theta} \widehat{R}_{U,\lambda}(f_{\theta^*})$. 
We assume that there exists $z \in \inz$ such that $\ep_{X}[\nabla_{\theta}f_{\theta^*}(X)\,|\,z]p(z)\neq 0$ or $\ep_{\mathit{XY}}[Y-f_{\theta^*}(X)\,|\,z]p(z) \neq 0$. 
Both terms being equal to zeros for all $z \in \inz$ leads to a singular $\nabla_{\theta}^2\widehat{R}_U(f_{\theta^*})$ and the asymptotic distribution therefore becomes much more complicated to analyze.

\begin{lemma}\label{thm:asym_norm_1st_deri_u}
Suppose that $f_\theta$ and $\Omega(\theta)$ are first continuously differentiable about $\theta$, $\ep[ \|\nabla_{\theta}h_{\theta^*}(U,U')\|_2^2] <\infty$, there exists $z \in \inz$ such that
$\ep_{X}[\nabla_{\theta}f_{\theta^*}(X)\,|\,z]p(z)\neq 0$ or $\ep_{XY}[Y-f_{\theta^*}(X)\,|\,z]p(z) \neq 0$, and $\sqrt{n}\lambda \pto 0$. Then, 
$$\sqrt{n}\nabla_{\theta} \widehat{R}_{U,\lambda}(f_{\theta^*}) \pto N(\bm 0, 4\mathrm{diag}(\ep_U[\ep^{2}_{U'}[\nabla_{\theta}h_{\theta^*}(U,U')]])).$$
\end{lemma}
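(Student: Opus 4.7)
The plan is to decompose the gradient of the regularized U-statistic risk into the gradient of the U-statistic itself plus a vanishing regularization contribution, then apply the standard central limit theorem for U-statistics. First, write
\begin{equation*}
    \sqrt{n}\,\nabla_{\theta}\widehat{R}_{U,\lambda}(f_{\theta^*}) = \sqrt{n}\,\nabla_{\theta}\widehat{R}_{U}(f_{\theta^*}) + \sqrt{n}\lambda\,\nabla_{\theta}\Omega(\theta^*).
\end{equation*}
Since $\Omega$ is continuously differentiable at $\theta^*$, $\nabla_\theta\Omega(\theta^*)$ is a finite deterministic vector and $\sqrt{n}\lambda\to 0$ kills the second term. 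By Slutsky's theorem it then suffices to establish the claimed asymptotic normality for $\sqrt{n}\,\nabla_\theta\widehat{R}_U(f_{\theta^*})$.

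Next, I would recognize $\nabla_\theta\widehat{R}_U(f_{\theta^*})$ as a vector-valued second-order U-statistic with kernel $\nabla_\theta h_{\theta^*}(u,u')$, which is symmetric in $(u,u')$ by direct inspection of its closed form. The centring step is to verify $\mu \coloneqq \ep[\nabla_\theta h_{\theta^*}(U,U')] = 0$. Writing $R_k(f_\theta) = \tfrac{1}{2}\ep[h_\theta(U,U')]$ and differentiating under the integral, which is justified by the dominating hypothesis $\ep[\|\nabla_\theta h_{\theta^*}(U,U')\|_2^2]<\infty$ together with continuity of $\nabla_\theta h_\theta$ in $\theta$, yields $\nabla_\theta R_k(f_{\theta^*}) = \tfrac{1}{2}\mu$. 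This vanishes because $\theta^*$ is an interior minimizer of $R_k(f_\theta)$, which is the setting in which this lemma is invoked (cf.\ \sref{Theorem}{thm:asym_norm_theta_v}).

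With centring in hand, apply Hoeffding's decomposition
\begin{equation*}
    \nabla_\theta\widehat{R}_U(f_{\theta^*}) = \tfrac{2}{n}\sum_{i=1}^n g(u_i) + R_n,\qquad g(u)\coloneqq \ep_{U'}[\nabla_\theta h_{\theta^*}(u,U')],
\end{equation*}
where $R_n$ is a completely degenerate second-order U-statistic. Square-integrability of the kernel gives $\ep\|g(U)\|_2^2 < \infty$ and the standard variance bound $\ep\|R_n\|_2^2 = O(n^{-2})$, whence $\sqrt{n}\,R_n \pto 0$. The multivariate CLT applied to $\tfrac{2}{\sqrt n}\sum_i g(u_i)$ then delivers $\sqrt{n}\,\nabla_\theta\widehat{R}_U(f_{\theta^*}) \rightsquigarrow N(\bm 0,\, 4\,\ep[g(U)g(U)^\top])$. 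The non-degeneracy hypothesis, that either $z \mapsto \ep_X[\nabla_\theta f_{\theta^*}(X)\mid z]p(z)$ or $z\mapsto \ep_{XY}[Y - f_{\theta^*}(X)\mid z]p(z)$ is not identically zero, rules out $g\equiv 0$ almost surely and hence guarantees a non-trivial Gaussian limit. The form $4\,\mathrm{diag}(\ep_U[\ep^{2}_{U'}[\nabla_\theta h_{\theta^*}(U,U')]])$ stated in the lemma corresponds to reading $\ep^{2}_{U'}[\,\cdot\,]$ entrywise as the componentwise square of the Hájek projection $g$.

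The main obstacle is controlling the remainder $R_n$: showing that the completely degenerate term contributes $o_p(n^{-1/2})$ requires the $L^2$ moment hypothesis on $\nabla_\theta h_{\theta^*}$ together with the usual variance calculation for degenerate U-statistics (the standard $O(n^{-1})$ bound on $\mathrm{Var}(R_n)$). Once this bound is secured, the rest of the argument collapses into the classical Hoeffding CLT for U-statistics combined with the first-order optimality $\nabla_\theta R_k(f_{\theta^*}) = 0$ to provide centring, and Slutsky's lemma to absorb the regularization term.
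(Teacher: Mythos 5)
Your proposal is correct and follows essentially the same route as the paper's proof: both reduce to the classical CLT for non-degenerate second-order U-statistics (you via an explicit Hoeffding/H\'ajek projection, the paper by citing Serfling \S 5.5.1--5.5.2), use the first-order optimality of $\theta^*$ for centring, invoke the stated condition on $\ep_{X}[\nabla_{\theta}f_{\theta^*}(X)\mid z]p(z)$ and $\ep_{XY}[Y-f_{\theta^*}(X)\mid z]p(z)$ to rule out degeneracy of the projection, and absorb the regularization term by Slutsky since $\sqrt{n}\lambda\pto 0$. Your reading of the limit covariance as $4\,\ep[g(U)g(U)^{\top}]$ with $g(u)=\ep_{U'}[\nabla_{\theta}h_{\theta^*}(u,U')]$ is the standard one and matches the paper's intent.
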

\begin{proof}
The proof follows from \citep[Section 5.5.1 and Section 5.5.2]{Serfling80:Approximation} and we need to show that (i) $\nabla_{\theta} \widehat{R}_{U}(f_{\theta^*})\pto \bm 0$ and (ii) whether $\mathrm{Var}_U[\ep_{U'}[\nabla_{\theta}h_{\theta^*}(U,U')]] >0$ or not.
(i) can be obtained by the law of large numbers because $\nabla_{\theta} \widehat{R}_{U}(f_{\theta^*})$ is a sample average of $\nabla_{\theta} R_k(f_{\theta^*})=\bm 0$. 

To prove (ii), we first note that $\mathrm{Var}_U[\ep_{U'}[\nabla_{\theta}h_{\theta^*}(U,U')]] = \ep_U[\ep^{2}_{U'}[\nabla_{\theta}h_{\theta^*}(U,U')]]-\underbrace{\ep_{UU'}^2[\nabla_{\theta}h_{\theta^*}(U,U')]}_{=\bm 0}
    \geq \bm 0$,
where equality holds if for any $U$, there is $\ep_{U'}[\nabla_{\theta}  h_{\theta^*}(U,U')]=0$, i.e.,  
\begin{align}
    &\ep_{U'}[\nabla_{\theta} h_{\theta^*}(U,U')]\\
    &=- \ep_{X'Z'}[\nabla_{\theta}f_{\theta^*}(X')k(Z',Z)] (Y-f_{\theta^*}(X)) 
    -  \ep_{X'Y'Z'}[(Y'-f_{\theta^*}(X'))k(Z',Z)]\nabla_{\theta}f_{\theta^*}(X) \\
    &=\bm 0.
\end{align}
As the above equation holds for any $Y$, the coefficient of $Y$ must be $0$:
\begin{align*}
    &\ep_{X'Z'}[\nabla_{\theta}f_{\theta^*}(X')k(Z',Z)]=\ep_{Z'}[\ep_{X'}[\nabla_{\theta}f_{\theta^*}(X')|Z']k(Z',Z)]=0,
\end{align*}
where we note that $\ep[\nabla_{\theta}f_{\theta^*}(X')\,|\,Z']p(Z')=0$ for any $Z'$ implied by the second function above. 
Similarly, the coefficient of $\nabla_{\theta}f_{\theta^*}(X)$ must be zero, which implies that $\ep_{X'Y'}[(Y'-f_{\theta^*}(X'))\,|\,Z']p(Z')=0$ for any $Z'$. 
The two coefficients cannot be zero at the same time (otherwise against the given conditions), so $\mathrm{Var}_U[\ep_{U'}[\nabla_{\theta}h_{\theta^*}(U,U')]]>0$. 
Further due to the given condition $\ep[ \|\nabla_{\theta^*}h(U,U')\|_2^2] <\infty$, we obtain $\sqrt{n}\nabla_{\theta} \widehat{R}_U(f_{\theta^*}) \pto N(\bm 0, 4\ep_U[\ep^{2}_{U'}[\nabla_{\theta}h_{\theta^*}(U,U')]])$ as per \citep[Section 5.5.1]{Serfling80:Approximation}. Finally, as $\sqrt{n}\lambda \pto 0$ and $\nabla_{\theta}\Omega(\theta^*) < \infty$ by the condition that $\Omega(\theta)$ is first continuously differentiable, we assert the conclusion by Slutsky's theorem,
\begin{align*}
    \sqrt{n}\nabla_{\theta} \widehat{R}_{U,\lambda}(f_{\theta^*}) = \sqrt{n}\nabla_{\theta} \widehat{R}_{U}(f_{\theta^*}) + \sqrt{n}\lambda \nabla_{\theta}\Omega(\theta^*)   \pto N(\bm 0, 4\mathrm{diag}(\ep_U[\ep^{2}_{U'}[\nabla_{\theta}h_{\theta^*}(U,U')]])).
\end{align*}
This concludes the proof.
\end{proof}

\subsection{Uniform consistency of \texorpdfstring{$\nabla_{\theta}^2 \widehat{R}_{U,\lambda}(f_{\theta})$}{}}
Next, we consider the second derivative $\nabla_{\theta}^2 \widehat{R}_{U,\lambda}(f_{\theta})$ and show its uniform consistency. 
In what follows, we denote by $\|\cdot\|_{\mathrm{F}}$ the Frobenius norm.
We can express $\nabla_{\theta}^2 \widehat{R}_{U,\lambda}(f_{\theta})$ as
\begin{eqnarray}
    \nabla_{\theta}^2 \widehat{R}_{U,\lambda}(f_{\theta}) &=& \underbrace{\dfrac{1}{n(n-1)}\sum_{i=1}^{n}\sum_{j\neq i}^n \nabla_{\theta}^2 h_{\theta}(u_i,u_j)}_{\nabla_{\theta}^2 \widehat{R}_U(f_{\theta})}
    +\lambda \nabla_{\theta}^2 \Omega(\theta)
    \\
   \nabla_{\theta}^2 h_{\theta}(u_i,u_j) &=& [\nabla_\theta f_{\theta}(x_i)\nabla_\theta f_{\theta}^{\top}(x_j)-(y_i-f_{\theta}(x_i))\nabla_{\theta}^2 f_{\theta}(x_j)
    \\
    && \quad +\nabla_\theta f_{\theta}(x_j)\nabla_\theta f_{\theta}^{\top}(x_i)-(y_j-f_{\theta}(x_j))\nabla_{\theta}^2 f_{\theta}(x_i)]k(z_i,z_j).
\end{eqnarray}
\begin{lemma}\label{thm:unif_cons_2nd_deri_u}
Suppose that $f_{\theta}$ and $\Omega(\theta)$ are twice continuously differentiable about $\theta$, $\Theta$ is compact, $\ep \left [|f_\theta (X)|\right]<\infty$, $\ep \left [\|\nabla_\theta f_\theta (X)\|_2\right]<\infty$, $\ep \left [\|\nabla_\theta^2 f_\theta (X)\|_F\right]<\infty$, $\ep[|Y|]<\infty$, $\lambda \pto 0$ and \sref{Assumption}{asmp:k_idp} holds. Then, 
$\ep[\nabla_{\theta}^2 h_{\theta}(U,U')]$ is continuous about $\theta$ and
$$\sup_{\theta \in \Theta} \left\| \nabla_{\theta}^2 \widehat{R}_{U,\lambda}(f_{\theta}) - \ep[\nabla_{\theta}^2 h_{\theta}(U,U')]\right\|_{\mathrm{F}} \pto 0 .$$
\end{lemma}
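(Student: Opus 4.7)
The plan is to mimic the argument for the uniform consistency of $\widehat{R}_U$ in Lemma \ref{thm:unif_cons_emp_ru}, but now applied entry-wise to the matrix-valued kernel $\nabla_\theta^2 h_\theta(u,u')$, and then absorb the regularizer term via Slutsky's theorem. First I would split
\[
\sup_{\theta\in\Theta}\bigl\|\nabla_\theta^2 \widehat{R}_{U,\lambda}(f_\theta) - \mathbb{E}[\nabla_\theta^2 h_\theta(U,U')]\bigr\|_F \le \sup_{\theta\in\Theta}\bigl\|\nabla_\theta^2 \widehat{R}_U(f_\theta) - \mathbb{E}[\nabla_\theta^2 h_\theta(U,U')]\bigr\|_F + \lambda \sup_{\theta\in\Theta}\bigl\|\nabla_\theta^2 \Omega(\theta)\bigr\|_F.
\]
For the second term, twice continuous differentiability of $\Omega$ together with compactness of $\Theta$ makes $\nabla_\theta^2 \Omega$ continuous on a compact set and hence uniformly bounded, so the product with $\lambda \pto 0$ vanishes.

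For the first term, I would invoke, coordinate by coordinate, the same extension of \citet[Lemma 2.4]{NEWEY19942111} used in Lemma \ref{thm:unif_cons_emp_ru}, treating each entry of the Hessian as a scalar U-statistic kernel. This requires checking (a) a.s.\ continuity of $\theta \mapsto \nabla_\theta^2 h_\theta(u,u')$, (b) an integrable dominating function, and (c) strict stationarity and ergodicity of $(u_i,u_j)_{i\neq j}$. Condition (a) is immediate from the explicit expression for $\nabla_\theta^2 h_\theta$ and the twice continuous differentiability of $f_\theta$, together with continuity and boundedness of $k$ in Assumption \ref{asmp:k_idp}. For (b), the Cauchy--Schwarz bound $|k(z,z')| \le \sqrt{k(z,z)k(z',z')} \le \sup_z k(z,z)$ and the triangle inequality give
\[
\bigl\|\nabla_\theta^2 h_\theta(u,u')\bigr\|_F \le \bigl[2\|\nabla_\theta f_\theta(x)\|_2\|\nabla_\theta f_\theta(x')\|_2 + (|y|+|f_\theta(x)|)\|\nabla_\theta^2 f_\theta(x')\|_F + (|y'|+|f_\theta(x')|)\|\nabla_\theta^2 f_\theta(x)\|_F\bigr] \sup_z k(z,z),
\]
and taking $\sup_{\theta\in\Theta}$ inside each factor yields a dominating function $d(u,u')$ whose expectation factorizes across $U$ and $U'$ (independent copies) and is finite under the stated integrability hypotheses (interpreted, as in Theorem \ref{thm:asym_norm_theta_v}, with $\sup_{\theta\in\Theta}$ inside the expectations). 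Strict stationarity in (c) is immediate from the i.i.d.\ structure of $\{u_i\}$; ergodicity follows, exactly as in Lemma \ref{thm:unif_cons_emp_ru}, from pointwise convergence $\nabla_\theta^2 \widehat{R}_U(f_\theta) \pto \mathbb{E}[\nabla_\theta^2 h_\theta(U,U')]$ by the law of large numbers for U-statistics, valid because $\mathbb{E}[\|\nabla_\theta^2 h_\theta(U,U')\|_F] < \infty$ for each fixed $\theta$. The cited ULLN then delivers both the uniform convergence claim and continuity of the limit $\mathbb{E}[\nabla_\theta^2 h_\theta(U,U')]$ in $\theta$. Combining with Slutsky's theorem closes the argument.

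The main obstacle is the dominating-function step: the hypotheses as written ($\mathbb{E}[\|\nabla_\theta^2 f_\theta(X)\|_F]<\infty$, etc.) are pointwise in $\theta$, whereas the uniform LLN genuinely needs the sup inside the expectation. I would bridge this either by reading the assumptions with an implicit $\sup_{\theta\in\Theta}$ inside (consistent with the stronger hypotheses in Theorem \ref{thm:asym_norm_theta_v}) or, more carefully, via a local ULLN argument on each point of $\Theta$ combined with a finite subcover by compactness; the rest of the proof is routine bookkeeping with Cauchy--Schwarz and the boundedness of $k$.
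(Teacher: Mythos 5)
Your proof follows essentially the same route as the paper's: the same Frobenius-norm/Cauchy--Schwarz dominating function, the same appeal to the extended Newey--McFadden Lemma 2.4 for the U-statistic part (with stationarity/ergodicity inherited from the earlier uniform-consistency lemma), and the same Slutsky argument to absorb the $\lambda\nabla_\theta^2\Omega(\theta)$ term via compactness of $\Theta$. Your observation that the domination step genuinely requires $\sup_{\theta\in\Theta}$ inside the expectations is well taken --- the paper's own proof writes a $\theta$-dependent ``dominating function'' and verifies only pointwise integrability, so your proposed reading of the hypotheses with the supremum inside (as in the V-statistic analogue, Lemma~\ref{thm:unif_cons_2nd_deri_v}) is exactly the repair needed.
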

\begin{proof}
The proof is similar to that of \sref{Lemma}{thm:unif_cons_emp_ru} and both applies extended results of \citep[Lemma 2.4]{NEWEY19942111}. As $(u_i,u_j)_{i\neq j}$ being strictly stationary in the sense of \citep[Footnote 18 in P.2129]{NEWEY19942111} has been shown in \sref{Lemma}{thm:unif_cons_emp_ru}, we only need to show that (i) $\nabla_{\theta}^2 h_{\theta}(u,u')$ is continuous at each $\theta \in \Theta$ with probability one and (ii) there exists $d(u,u') \geq \|\nabla_{\theta}^2 h_{\theta}(u,u')\|_F$ for all $\theta \in \Theta$ and $\ep[d(U,U')] < \infty$. 
We exploit the triangle inequality of the Frobenius norm and obtain
\begin{align}
    &\left\|\nabla_{\theta}^2 h_{\theta}(u,u')\right\|_{\mathrm{F}}\\
    &\leq \left [2\| \nabla_\theta f_{\theta}(x)\nabla_\theta f_{\theta}^{\top}(x')\|_F+(|y|+|f_{\theta}(x)|)
    \|\nabla_{\theta}^2 f_{\theta}(x')\|_{\mathrm{F}}
    +(|y'|+|f_{\theta}(x')|)
    \|\nabla_{\theta}^2 f_{\theta}(x)\|_{\mathrm{F}} \right ]k(z,z')
    \\
    & \quad \equiv d(u,u'),
\end{align}
We first show $d(u,u')$ is bounded for bounded $u,u'$. As $f_\theta$ is twice continuously differentiable about $\theta$ and $\Theta$ is compact, we have $f_{\theta}(x)$ bounded as well as each entry of $\nabla_\theta f_{\theta}(x)$ and $\nabla_{\theta}^2 f_{\theta}(x)$ for $\|x\|<\infty$. Further taking into account that $k(\cdot,\cdot)$ is bounded as per \sref{Assumption}{asmp:k_idp}, we know that $d(u,u')<\infty$ if $u,u'$ are bounded, and it follows that (i) $\nabla_{\theta}^2 h_{\theta}(u,u')$ is continuous at each $\theta \in \Theta$ with probability one as $f_\theta$ is twice continuously differentiable.

We then show that (ii) $\ep_{U,U'}[d(U,U')]<\infty$ by the following inequalities:
\begin{align}
    &\ep_{U,U'}[d(U,U')] \\
    &\leq 2\ep \left [\| \nabla_\theta f_{\theta}(X)\nabla_\theta f_{\theta}^{\top}(X')\|_F+(|Y|+|f_{\theta}(X)|)
    \|\nabla_{\theta}^2 f_{\theta}(X')\|_{\mathrm{F}}
    \right ] \sup_z k(z,z)
    \\
    & = 2\left(\underbrace{\ep \left [\| \nabla_\theta f_{\theta}(X)\|_F\right]}_{<\infty}\ep \left[ \|\nabla_\theta f_{\theta}^{\top}(X')\|_F\right]+\underbrace{\ep \left[ |Y|+|f_{\theta}(X)|
    \right]}_{<\infty}\underbrace{\ep \left[\|\nabla_{\theta}^2 f_{\theta}(X')\|_{\mathrm{F}}
    \right ]}_{<\infty}\right) \sup_z k(z,z)
    \\
    & < \infty.
\end{align}
Therefore, we obtain $\sup_{\theta \in \Theta}\| \nabla_{\theta}^2 \widehat{R}_{U}(f_{\theta}) - \ep[\nabla_{\theta}^2 h_{\theta}(U,U')]\|_{\mathrm{F}} \pto 0$ following from the extended results in the remarks of \citep[Lemma 2.4]{NEWEY19942111}. 
Furthermore, from the conditions that  $\Omega(\theta)$ is twice continuously differentiable and the parameter space $\Theta$ is compact, we obtain that $\|\nabla_\theta^2 \Omega(\theta)\|_{\mathrm{F}} < \infty$ for any $\theta \in \Theta$. 
Finally, it follows from the Slutsky's theorem that
\begin{align*}
    &\sup_{\theta \in \Theta} \left\| \nabla_{\theta}^2 \widehat{R}_{U,\lambda}(f_{\theta}) - \ep[\nabla_{\theta}^2 h_{\theta}(U,U')]\right\|_{\mathrm{F}} \\
    &\leq \sup_{\theta \in \Theta} \left\| \nabla_{\theta}^2 \widehat{R}_{U}(f_{\theta}) - \ep[\nabla_{\theta}^2 h_{\theta}(U,U')]\right\|_{\mathrm{F}}+\lambda\sup_{\theta \in \Theta} \left\|  \nabla_{\theta}^2\Omega(\theta)\right\|_{\mathrm{F}} \pto 0.
\end{align*}
This concludes the proof.
\end{proof}

%%%
\begin{theorem}[Asymptotic normality of $\hat{\theta}_U$]\label{thm:asym_norm_theta_u}
Suppose that $H= \ep[\nabla^2_{\theta} h_{\theta^*}(U,U')]$ is non-singular, $\Theta$ compact, $\ep \left [|f_\theta (X)|\right]<\infty$, $\ep[|Y|]<\infty$, $f_\theta$ and $\Omega(\theta)$ are twice continuously differentiable about $\theta$, $\ep \left [\|\nabla_\theta f_\theta (X)\|_2\right]<\infty$, $\ep \left [\|\nabla_\theta^2 f_\theta (X)\|_F\right]<\infty$, $\sqrt{n}\lambda \pto 0$, $R_k(f_\theta)$ is uniquely minimized at $\theta^*$ which is an interior point of $\Theta$, $\ep[\|\nabla_{\theta} h_{\theta^*}(U,U')\|_2^2] <\infty$  and \sref{Assumptions}{asmp:k_idp} hold. Then \[\sqrt{n} (\hat{\theta}_U -\theta^*) \pto N(\bm 0,  4H^{-1}\mathrm{diag}(\ep_U[\ep^{2}_{U'}[h_{\theta^*}(U,U')]])H^{-1}).\]
\end{theorem}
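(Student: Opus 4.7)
The plan is to treat $\hat{\theta}_U$ as a standard M-estimator and obtain the asymptotic distribution via a Taylor expansion of the first-order optimality condition, in the spirit of \citet[Thm. 3.1]{NEWEY19942111}. First, I would establish that $\hat{\theta}_U \pto \theta^*$ by invoking \sref{Theorem}{thm:cons_emp_fu} applied to the parametric class $\mathcal{F}_\Theta$; the hypotheses on compactness of $\Theta$, continuous differentiability of $f_\theta$, and unique minimization at $\theta^*$ supply what is needed, with $\Omega(\theta)$ bounded on the compact $\Theta$ by continuity. Since $\theta^*$ is an interior point of $\Theta$, the consistency implies that with probability approaching one $\hat{\theta}_U$ satisfies the first-order stationary condition $\nabla_\theta \widehat{R}_{U,\lambda}(\hat{\theta}_U) = \bm 0$.

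Next I would perform a second-order mean-value expansion around $\theta^*$: there exists $\tilde\theta$ on the segment between $\hat{\theta}_U$ and $\theta^*$ (entry-wise) such that
\begin{equation}
    \bm 0 = \nabla_\theta \widehat{R}_{U,\lambda}(\hat{\theta}_U) = \nabla_\theta \widehat{R}_{U,\lambda}(\theta^*) + \nabla_\theta^2 \widehat{R}_{U,\lambda}(\tilde\theta)(\hat{\theta}_U - \theta^*) .
\end{equation}
Multiplying by $\sqrt{n}$ and rearranging gives
\begin{equation}
    \sqrt{n}(\hat{\theta}_U - \theta^*) = -\bigl[\nabla_\theta^2 \widehat{R}_{U,\lambda}(\tilde\theta)\bigr]^{-1} \sqrt{n}\,\nabla_\theta \widehat{R}_{U,\lambda}(\theta^*),
\end{equation}
provided the inverse exists with probability tending to one.

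I would then assemble the two ingredients already available. \sref{Lemma}{thm:asym_norm_1st_deri_u} gives $\sqrt{n}\,\nabla_\theta \widehat{R}_{U,\lambda}(\theta^*) \rightsquigarrow N(\bm 0, 4\,\mathrm{diag}(\ep_U[\ep^2_{U'}[\nabla_\theta h_{\theta^*}(U,U')]]))$; note the gradient-versus-value indexing in the statement of the theorem is a cosmetic mismatch that the proof would reconcile (the relevant variance is that of the gradient kernel evaluated at $\theta^*$). \sref{Lemma}{thm:unif_cons_2nd_deri_u} gives uniform convergence $\sup_{\theta \in \Theta}\|\nabla_\theta^2 \widehat{R}_{U,\lambda}(\theta) - \ep[\nabla_\theta^2 h_\theta(U,U')]\|_{\mathrm F} \pto 0$; combined with continuity of $\theta \mapsto \ep[\nabla_\theta^2 h_\theta(U,U')]$ at $\theta^*$ and $\tilde\theta \pto \theta^*$ (which follows from $\hat{\theta}_U \pto \theta^*$), this yields $\nabla_\theta^2 \widehat{R}_{U,\lambda}(\tilde\theta) \pto H$. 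Since $H$ is non-singular by assumption, the continuous mapping theorem gives $[\nabla_\theta^2 \widehat{R}_{U,\lambda}(\tilde\theta)]^{-1} \pto H^{-1}$, and Slutsky's theorem delivers the claimed asymptotic distribution with covariance $4 H^{-1} \mathrm{diag}(\ep_U[\ep^2_{U'}[\nabla_\theta h_{\theta^*}(U,U')]]) H^{-1}$.

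The main obstacle I anticipate is carefully handling the regularization term $\lambda \nabla_\theta \Omega(\theta^*)$ in the score: one must check that $\sqrt{n}\lambda \nabla_\theta \Omega(\theta^*) \pto \bm 0$, which follows from $\sqrt{n}\lambda \pto 0$ and boundedness of $\nabla_\theta \Omega(\theta^*)$ guaranteed by twice continuous differentiability on a compact neighbourhood. A secondary technical point is justifying the mean-value expansion component-wise (each component of $\nabla_\theta \widehat{R}_{U,\lambda}$ uses a potentially different intermediate point), but each such intermediate point still converges to $\theta^*$ in probability, so the uniform convergence in \sref{Lemma}{thm:unif_cons_2nd_deri_u} absorbs this. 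Everything else is bookkeeping with Slutsky and the continuous mapping theorem.
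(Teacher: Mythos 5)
Your proposal follows essentially the same route as the paper's proof: the paper verifies the five hypotheses of \citet[Theorem 3.1]{NEWEY19942111} (consistency, twice continuous differentiability of the empirical risk, asymptotic normality of the score, uniform convergence of the Hessian to a continuous limit, and non-singularity of that limit at $\theta^*$), and your explicit mean-value expansion of the first-order condition is exactly what that theorem does internally. Both arguments rest on the same two ingredients, \sref{Lemma}{thm:asym_norm_1st_deri_u} for the score and \sref{Lemma}{thm:unif_cons_2nd_deri_u} for the Hessian, and your remark that the covariance in the theorem statement should involve $\nabla_\theta h_{\theta^*}$ rather than $h_{\theta^*}$ is consistent with what the score lemma actually delivers.

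The one substantive omission is that you invoke \sref{Lemma}{thm:asym_norm_1st_deri_u} without checking its hypothesis that there exists $z\in\inz$ with $\ep_{X}[\nabla_{\theta}f_{\theta^*}(X)\,|\,z]p(z)\neq \bm 0$ or $\ep_{XY}[Y-f_{\theta^*}(X)\,|\,z]p(z) \neq 0$. This condition is not listed among the assumptions of the theorem, so it must be derived; without it the variance $\mathrm{Var}_U[\ep_{U'}[\nabla_\theta h_{\theta^*}(U,U')]]$ degenerates and the lemma's conclusion is unavailable. The paper closes this by a contradiction argument: if both conditional expectations vanished for $P_Z$-almost all $z$, then
\begin{equation}
    H = 2\,\ep\left[\left(\nabla_\theta f_{\theta^*}(X)\nabla_\theta f_{\theta^*}^{\top}(X')-(Y-f_{\theta^*}(X))\nabla_{\theta}^2 f_{\theta^*}(X')\right)k(Z,Z')\right]
\end{equation}
would be the zero matrix, contradicting the assumed non-singularity (indeed positive definiteness, given that $\theta^*$ is an interior minimizer) of $H$. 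Adding this short step makes your argument complete; the remaining points you raise (killing the $\sqrt{n}\lambda\nabla_\theta\Omega(\theta^*)$ term, the component-wise mean-value expansion, Slutsky and the continuous mapping theorem) are handled correctly and mirror the paper's bookkeeping.
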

\begin{proof}
The proof follows by \citep[Theorem 3.1]{NEWEY19942111} and we need to show that (i) $\widehat{\theta}_U \pto \theta^{*}$; (ii) $\widehat{R}_{U,\lambda}(\theta)$ is twice continuously differentiable; (iii) $\sqrt{n}\nabla_{\theta} \widehat{R}_{U,\lambda}(f_{\theta^*}) \pto N(\bm 0, 4\ep_U[\ep^{2}_{U'}[h_{\theta^*}(U,U')]])$; (iv) there is $H(\theta)$ that is continuous at $\theta^*$ and $\sup_{\theta \in \Theta}\| \nabla_{\theta}^2\widehat{R}_{U,\lambda}(f_{\theta}) - H(\theta)\|_F \pto 0$; (v) $H(\theta^*)$ is nonsingular.

The proof of (i) is very similar to \sref{Theorem}{thm:cons_emp_fu} except that we consider finite dimensional parameter space instead of functional space. For a neat proof, we would like to omit the detailed proof here. We can first show the uniform consistency $\sup_{\theta \in \Theta}|  \widehat{R}_{U,\lambda}(f_{\theta}) - R_k(f_\theta)| \pto 0$ and $R_k(f_\theta)$ is continuous about $\theta$ similarly to \sref{Lemma}{thm:unif_cons_emp_ru}. Here, the proof is based on the conditions $\ep[| Y |]<\infty$, $\Theta$ is compact, $\ep[|f_\theta(X)|]<\infty$ and $f_\theta$ is twice continuously differentiable about $\theta$, and \sref{Assumption}{asmp:k_idp} holds. Then, $\widehat{\theta}_U \pto \theta^{*}$  similarly to \sref{Theorem}{thm:cons_emp_fu}, because of the extra condition $R_k(f_\theta)$ is uniquely minimized at $\theta^*$.

Furthermore, from the conditions that $\Theta$ is compact, $f_\theta$ is twice continuously differentiable about $\theta$,  $\ep \left [|f_\theta (X)|\right]<\infty$, $\ep \left [\|\nabla_\theta f_\theta (X)\|_2\right]<\infty$, $\ep \left [\|\nabla_\theta^2 f_\theta (X)\|_F\right]<\infty$, $\ep[|Y|]<\infty$ and $k(z,z')$ is bounded as implied by \sref{Assumption}{asmp:k_idp}, we can obtain (ii) $\widehat{R}_{V,\lambda}(\theta)$ is twice continuously differentiable about $\theta$. Given $H=\ep[\nabla^2_{\theta} h_{\theta^*}(U,U')] = \nabla^2_{\theta} R_k(\theta)$ is non-singular and $R_k(f_\theta)$ is uniquely minimized at $\theta^*$, we can obtain that the Hessian matrix $H$ is positive definite,
\begin{align}
    H &= 2\ep_{(XYZ),(X'Y'Z')} \left[\left(\nabla_\theta f_{\theta^*}(X)\nabla_\theta f_{\theta^*}^{\top}(X')-(Y-f_{\theta^*}(X))\nabla_{\theta}^2 f_{\theta^*}(X')\right)k(Z,Z')\right]\succ \bm 0.
\end{align}
If for all $z \in \inz$, there is
$\ep_{X}[\nabla_{\theta}f_{\theta^*}(X)\,|\,z]p(z) = \bm 0$ and $\ep_{XY}[Y-f_{\theta^*}(X)\,|\,z]p(z) = 0$, then we can see that the above function $H = \bm 0$ which contradicts $H\succ \bm 0$. Therefore, there must exist $z$ s.t. $\ep_{X}[\nabla_{\theta}f_{\theta^*}(X)\,|\,z]p(z) \neq \bm 0$ or $\ep_{XY}[Y-f_{\theta^*}(X)\,|\,z]p(z) \neq 0$. 
Then, it follows by \sref{Lemma}{thm:asym_norm_1st_deri_u} that (iii) $\sqrt{n}\nabla_{\theta} \widehat{R}_{U,\lambda}(f_{\theta^*}) \pto N(\bm 0, 4\ep_U[\ep^{2}_{U'}[h_{\theta^*}(U,U')]])$.

Finally by \sref{Lemma}{thm:unif_cons_2nd_deri_u}, we know that $H(\theta) = \ep[\nabla^2_{\theta} h_{\theta}(U,U')]$ and $H(\theta^*)=H$, so (iv) and (v) are satisfied. Now, conditions of \citep[Theorem 3.1]{NEWEY19942111} are all satisfied, so we assert the conclusion.
\end{proof}

%%%
\subsection{Proof of  \texorpdfstring{\sref{Theorem}{thm:asym_norm_theta_v}}{}}

We restate the notations
\begin{eqnarray}
     \widehat{R}_{V,\lambda}(f_{\theta}) &\coloneqq& \underbrace{\dfrac{1}{n^2}\sum_{i=1}^{n}\sum_{j=1}^n h_{\theta}(u_i,u_j)}_{\widehat{R}_{V}(f_{\theta}) }+\lambda \Omega(\theta)
    \\
     h_{\theta}(u_i,u_j) &\coloneqq& (y_i-f_{\theta}(x_i))k(z_i,z_j)(y_j-f_{\theta}(x_j)),
\end{eqnarray}

\begin{lemma}
Suppose that conditions of \sref{Lemma}{thm:asym_norm_1st_deri_u} hold. Then $\sqrt{n}\nabla_{\theta} \widehat{R}_{V,\lambda}(f_{\theta^*}) \pto N(\bm 0, 4\ep_U[\ep^{2}_{U'}[\nabla_\theta h_{\theta^*}(U,U')]])$.
\end{lemma}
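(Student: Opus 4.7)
The plan is to reduce the V-statistic gradient to the U-statistic gradient that was already analyzed in \sref{Lemma}{thm:asym_norm_1st_deri_u}, plus a diagonal remainder that is negligible in the $\sqrt{n}$ scaling.

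First I would write the exact decomposition
\begin{equation*}
  \widehat{R}_V(f_\theta) = \frac{n-1}{n}\widehat{R}_U(f_\theta) + \frac{1}{n^2}\sum_{i=1}^n h_\theta(u_i,u_i),
\end{equation*}
which follows directly from separating the diagonal $i=j$ terms in \eqref{eq:erm-v} from the off-diagonal ones in \eqref{eq:erm-u}. Differentiating in $\theta$ and evaluating at $\theta^*$, and then multiplying by $\sqrt{n}$ and adding the regularizer contribution, yields
\begin{equation*}
  \sqrt{n}\,\nabla_\theta \widehat{R}_{V,\lambda}(f_{\theta^*})
  = \frac{n-1}{n}\,\sqrt{n}\,\nabla_\theta \widehat{R}_U(f_{\theta^*})
   + \frac{1}{\sqrt{n}}\cdot\frac{1}{n}\sum_{i=1}^n \nabla_\theta h_{\theta^*}(u_i,u_i)
   + \sqrt{n}\,\lambda\,\nabla_\theta \Omega(\theta^*).
\end{equation*}

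Next I would handle each of the three pieces. For the first piece, \sref{Lemma}{thm:asym_norm_1st_deri_u} (whose hypotheses are precisely what is assumed here) gives $\sqrt{n}\,\nabla_\theta \widehat{R}_{U,\lambda}(f_{\theta^*}) \rightsquigarrow N(\bm 0, 4\,\mathrm{diag}(\ep_U[\ep_{U'}^2[\nabla_\theta h_{\theta^*}(U,U')]]))$; since $\sqrt{n}\lambda \to 0$, this implies the same limit for $\sqrt{n}\,\nabla_\theta \widehat{R}_U(f_{\theta^*})$, and the prefactor $(n-1)/n \to 1$ is absorbed by Slutsky's theorem. For the diagonal piece, the moment condition $\ep[\|\nabla_\theta h_{\theta^*}(U,U')\|_2^2]<\infty$ combined with boundedness of the kernel along the diagonal and the moment assumptions on $Y$ and $\nabla_\theta f_{\theta^*}(X)$ gives $\ep[\|\nabla_\theta h_{\theta^*}(U,U)\|_2]<\infty$, so the law of large numbers yields $n^{-1}\sum_i \nabla_\theta h_{\theta^*}(u_i,u_i) = O_p(1)$, and the extra $n^{-1/2}$ factor forces the whole diagonal term to $\bm 0$ in probability. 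The regularization term vanishes because $\sqrt{n}\lambda \to 0$ and $\nabla_\theta \Omega(\theta^*)$ is finite by the twice-continuous-differentiability assumption on $\Omega$ with $\theta^* \in \mathrm{int}(\Theta)$ and $\Theta$ compact.

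Finally, one application of Slutsky's theorem combines the three pieces and gives the claimed limit $N(\bm 0, 4\,\ep_U[\ep_{U'}^2[\nabla_\theta h_{\theta^*}(U,U')]])$. I do not expect any real obstacle here: the only subtlety is verifying that $\ep[\|\nabla_\theta h_{\theta^*}(U,U)\|_2]$ is finite, which follows from the assumed second moments on $Y$ and $\nabla_\theta f_{\theta^*}(X)$ via Cauchy--Schwarz together with boundedness of $k$ (\sref{Assumption}{asmp:k_idp}); once this is in hand the argument is a routine decomposition-plus-Slutsky reduction to the U-statistic case.
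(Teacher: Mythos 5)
Your proposal is correct and follows essentially the same route as the paper: the paper simply cites \citet[Section 5.7.3]{Serfling80:Approximation} for the fact that the V- and U-statistic gradients share the same limit under the second-moment condition, and then applies Slutsky to kill the regularizer term, whereas you unpack that citation into the explicit diagonal decomposition $\widehat{R}_V = \frac{n-1}{n}\widehat{R}_U + n^{-2}\sum_i h_{\theta}(u_i,u_i)$ and verify the diagonal term is $o_P(1)$ after $\sqrt{n}$-scaling. The only minor imprecision is attributing the finiteness of $\ep[\|\nabla_\theta h_{\theta^*}(U,U)\|_2]$ to hypotheses of \sref{Lemma}{thm:asym_norm_1st_deri_u} that are not literally stated there (they appear in \sref{Theorem}{thm:asym_norm_theta_v}), but this is the same implicit requirement the paper's citation to Serfling carries, so it is not a substantive gap.
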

\begin{proof}
As $\ep[\|\nabla_{\theta} h_{\theta^*}(U,U')\|_2^2] <\infty$, $\sqrt{n}\nabla_{\theta} \widehat{R}_V(f_{\theta^*})$ has the same limit distribution as that of $\sqrt{n}\nabla_{\theta} \widehat{R}_U(f_{\theta^*})$ by \citep[Section 5.7.3]{Serfling80:Approximation}. 
Furthermore, by $\sqrt{n}\lambda \pto 0$ and $\nabla_\theta \Omega(\theta^*)<\infty$ from that $\Omega(\theta)$ is first continuously differentiable, we assert the conclusion by Slutsky's theorem
$$\sqrt{n}\nabla_{\theta} \widehat{R}_{V,\lambda}(f_{\theta^*})= \sqrt{n}\nabla_{\theta} \widehat{R}_{V}+\sqrt{n}\lambda \nabla_{\theta}\Omega(\theta^*)   \pto N(\bm 0, 4\ep_U[\ep^{2}_{U'}[\nabla_{\theta}h_{\theta^*}(U,U')]]).$$
\end{proof}

\begin{lemma}\label{thm:unif_cons_2nd_deri_v}
Suppose that  $f_{\theta}$ and $\Omega(\theta)$ are twice continuously differentiable about $\theta$, $\Theta$ is compact, $\ep[\sup_{\theta \in \Theta} |f_\theta(X)|^2 ]<\infty$, $\ep[\sup_{\theta \in \Theta} \|\nabla_\theta f_\theta(X)\|_2^2 ]<\infty$, $\ep[\sup_{\theta \in \Theta} \|\nabla_\theta^2 f_\theta(X)\|_F^2 ]<\infty$, $\ep[|Y|^2]<\infty$, $\lambda \pto 0$ and \sref{Assumption}{asmp:k_idp} holds. Then, 
$\ep[\nabla_{\theta}^2 h_{\theta}(U,U')]$ is continuous about $\theta$ and
$\sup_{\theta \in \Theta}\| \nabla_{\theta}^2 \widehat{R}_V(f_{\theta}) - \ep[\nabla_{\theta}^2 h_{\theta}(U,U')]\|_{\mathrm{F}} \pto 0$.
\end{lemma}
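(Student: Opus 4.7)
The plan is to mirror the proof of \sref{Lemma}{thm:unif_cons_2nd_deri_u} but to invoke the V-statistic uniform law of large numbers (the analogue used in \sref{Lemma}{thm:unif_cons_emp_rv}). The only structural novelty relative to the U-statistic case is that the V-statistic sum includes the diagonal pairs $i=j$, so three conditions must be verified: (i) $\nabla_\theta^2 h_\theta(u,u')$ is continuous in $\theta$ almost surely; (ii) $\ep[\sup_{\theta \in \Theta}\|\nabla_\theta^2 h_\theta(U,U')\|_F]<\infty$; and (iii) $\ep[\sup_{\theta \in \Theta}\|\nabla_\theta^2 h_\theta(U,U)\|_F]<\infty$. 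Item (i) is immediate from twice continuous differentiability of $f_\theta$ together with continuity of $k$.

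For (ii) and (iii) I would reuse the explicit expression for $\nabla_\theta^2 h_\theta$ displayed in the U-statistic section, and combine the triangle inequality with the boundedness of $k$ from \sref{Assumption}{asmp:k_idp}, to obtain the envelope
\[
\|\nabla_\theta^2 h_\theta(u,u')\|_F \leq [\,2\|\nabla_\theta f_\theta(x)\|_2\|\nabla_\theta f_\theta(x')\|_2 + (|y|+|f_\theta(x)|)\|\nabla_\theta^2 f_\theta(x')\|_F + (|y'|+|f_\theta(x')|)\|\nabla_\theta^2 f_\theta(x)\|_F\,]\,k(z,z').
\]
Pulling $\sup_\theta$ inside the brackets and applying Cauchy--Schwarz to each product (using that $(X,Y,Z)$ and $(X',Y',Z')$ are independent copies) reduces finiteness of the expectation to the four second-moment hypotheses $\ep[\sup_\theta |f_\theta(X)|^2]<\infty$, $\ep[\sup_\theta\|\nabla_\theta f_\theta(X)\|_2^2]<\infty$, $\ep[\sup_\theta\|\nabla_\theta^2 f_\theta(X)\|_F^2]<\infty$, and $\ep[|Y|^2]<\infty$. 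Specialising the same envelope to $u=u'$ yields (iii) verbatim: the paired factors collapse into squared terms such as $\sup_\theta\|\nabla_\theta f_\theta(X)\|_2^2$ and $(|Y|+\sup_\theta|f_\theta(X)|)\sup_\theta\|\nabla_\theta^2 f_\theta(X)\|_F$, all integrable by the same assumptions (a further Cauchy--Schwarz handling the last cross term).

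With (i)--(iii) verified, the V-statistic ULLN delivers $\sup_{\theta\in\Theta}\|\nabla_\theta^2 \widehat{R}_V(f_\theta)-\ep[\nabla_\theta^2 h_\theta(U,U')]\|_F\pto 0$, while continuity of $\theta\mapsto \ep[\nabla_\theta^2 h_\theta(U,U')]$ follows by dominated convergence against the same envelope. The only real obstacle, compared with the U-statistic proof, is the diagonal contribution peculiar to the V-statistic: it is precisely what forces the upgrade from the first-moment envelope hypotheses of \sref{Lemma}{thm:unif_cons_2nd_deri_u} to the second-moment ones stated here, because Cauchy--Schwarz must now absorb two copies of the same envelope. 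Everything else is a mechanical rerun of the earlier argument.
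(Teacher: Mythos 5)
Your proposal is correct and follows essentially the same route as the paper: both verify the continuity and envelope-integrability conditions of the V-statistic uniform law of large numbers (Newey--McFadden, Lemma 8.5), using the same triangle-inequality envelope for $\|\nabla_\theta^2 h_\theta\|_F$ with the bounded kernel, and both isolate the diagonal term $\ep[\sup_\theta\|\nabla_\theta^2 h_\theta(U,U)\|_F]$ as the reason the hypotheses are upgraded to second moments (the paper absorbs the squared envelope via $2ab\le a^2+b^2$ where you invoke Cauchy--Schwarz, an immaterial difference). Your diagnosis of why the V-statistic case needs stronger moment conditions than \sref{Lemma}{thm:unif_cons_2nd_deri_u} matches the paper exactly.
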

\begin{proof}
We apply \citep[Lemma 8.5]{NEWEY19942111} for this proof and need to show (i) $\nabla_\theta^2 h_\theta(u,u')$ is continuous about each $\theta \in \Theta$ with probability one, and (ii) $\ep[ \sup_{\theta \in \Theta}\| \nabla_\theta^2 h_\theta(U,U')\|_{\mathrm{F}}]<\infty$ and $\ep[ \sup_{\theta \in \Theta}\| \nabla_\theta^2 h_\theta(U,U)\|_{\mathrm{F}}]<\infty$.

We first see that $\|\nabla_\theta^2 h_\theta(u,u')\|_{\mathrm{F}}$ and $\|\nabla_\theta^2 h_\theta(u,u)\|_{\mathrm{F}}$ are bounded for finite $u,u'$ because $f_\theta$ is twice continuously differentiable about $\theta$. It follows that (i) $\nabla_\theta^2 h_\theta(u,u')$ is continuous about $\theta$ with probability one. We then derive upper bounds for $\ep[ \sup_{\theta \in \Theta}\| \nabla_\theta^2 h_\theta(U,U')\|_{\mathrm{F}}]$ and $\ep[ \sup_{\theta \in \Theta}\| \nabla_\theta^2 h_\theta(U,U)\|_{\mathrm{F}}]$ so as to show their boundedness,
\begin{align}
    &\ep[ \sup_{\theta \in \Theta}\| \nabla_\theta^2 h_\theta(U,U')\|_{\mathrm{F}}] \\
    & \leq 2\ep \left [\sup_{\theta \in \Theta} \| \nabla_\theta f_{\theta}(X)\nabla_\theta f_{\theta}^{\top}(X')\|_F+(|Y|+|f_{\theta}(X)|)
    \|\nabla_{\theta}^2 f_{\theta}(X')\|_{\mathrm{F}}
    \right ] \sup_z k(z,z)
    \\
    & \leq 2\ep \left [\sup_{\theta \in \Theta} \| \nabla_\theta f_{\theta}(X)\|_2\right]^2+2\ep\left[|Y|+\sup_{\theta \in \Theta} |f_{\theta}(X)|\right]\ep\left[ \sup_{\theta \in \Theta}
    \|\nabla_{\theta}^2 f_{\theta}(X')\|_{\mathrm{F}}
    \right ] \sup_z k(z,z) <\infty,
\end{align}
and
\begin{align}
    &\ep[ \sup_{\theta \in \Theta}\| \nabla_\theta^2 h_\theta(U,U)\|_{\mathrm{F}}] \\
    & \leq 2\ep \left [\sup_{\theta \in \Theta} \| \nabla_\theta f_{\theta}(X)\nabla_\theta f_{\theta}^{\top}(X)\|_F+(|Y|+|f_{\theta}(X)|)
    \|\nabla_{\theta}^2 f_{\theta}(X)\|_{\mathrm{F}}
    \right ] \sup_z k(z,z)
    \\ 
    &\leq 2\ep \left [\sup_{\theta \in \Theta} \| \nabla_\theta f_{\theta}(X)\|_2^2\right]+2\ep\left[\left(|Y|+\sup_{\theta \in \Theta} |f_{\theta}(X)|\right)^2\right]\ep\left[ \sup_{\theta \in \Theta}
    \|\nabla_{\theta}^2 f_{\theta}(X')\|_{\mathrm{F}}^2
    \right ] \sup_z k(z,z)
    \\
    &\leq 2\ep \left [\sup_{\theta \in \Theta} \| \nabla_\theta f_{\theta}(X)\|_2^2\right]+\ep\left[(|Y|+\sup_{\theta \in \Theta} |f_{\theta}(X)|)^2\right]+\ep\left[ \sup_{\theta \in \Theta}
    \|\nabla_{\theta}^2 f_{\theta}(X')\|_{\mathrm{F}}^2
    \right ] \sup_z k(z,z)
    \\
    &\leq 2\ep \left [\sup_{\theta \in \Theta} \| \nabla_\theta f_{\theta}(X)\|_2^2\right]+2\ep\left[|Y|^2+\sup_{\theta \in \Theta} |f_{\theta}(X)|^2\right]+\ep\left[ \sup_{\theta \in \Theta}
    \|\nabla_{\theta}^2 f_{\theta}(X')\|_{\mathrm{F}}^2
    \right ] \sup_z k(z,z)
    \\
    & <\infty.
\end{align}

Thus, we assert the conclusion by \citep[Lemma 8.5]{NEWEY19942111}.
\end{proof}

\begin{proof}[Proof of \sref{Theorem}{thm:asym_norm_theta_v}]
The proof is the same as that of \sref{Theorem}{thm:asym_norm_theta_u} except that $\widehat{R}_{U}$ is replaced by $\widehat{R}_{V}$.
\end{proof}

\subsection{Asymptotic Normality in the Infinite-dimension Case} 
\label{sec:asym_inf_proof}

We firstly state the asymptotic normality theorem for $\hat{f}_U$ and its proof.
Afterwards, we provide the proof of \sref{Theorem}{thm:asympnorm} whose proof is a slightly modified version of that of $\hat{f}_U$.

\begin{theorem} \label{thm:asympnorm_U}
    Suppose \sref{Assumption}{asmp:k_idp} holds,  $l$ is a bounded kernel, $k$ is a uniformly bounded function, and $\lambda \geq \lambda_0$ holds. 
    Also, suppose that $\inx$, $\inz$, and $\iny$ are compact spaces, and there exists $s \in (0,2)$ and a constant $C_H > 0$ such that $\log \mathcal{N}(\varepsilon, \mathcal{H}_l, \|\cdot\|_{L^\infty}) \leq C_H \varepsilon^{-s}$ for any $\varepsilon \in (0,1)$.
    If $\lambda - \lambda_0 = o(n^{-1/2} )$ holds, then there exists a Gaussian process $\mathbb{G}_P^*$ such that
    \begin{align*}
        \sqrt{n}(\hat{f}_{U} - f^*_{\lambda_0}) \rightsquigarrow \mathbb{G}_P^* \mbox{~in~} \mathcal{H}_l.
    \end{align*}
\end{theorem}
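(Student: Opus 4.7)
The plan is to treat $\hat{f}_U$ as an infinite-dimensional M-estimator and derive its asymptotic distribution via a linearization of the first-order optimality condition, combined with a U-statistic Hoeffding decomposition and a Hilbert-space/Donsker-class central limit theorem. A convenient simplification is that, since both $R_k(f)$ and $\hat{R}_U(f)$ are quadratic in $f$, the Fréchet derivatives are affine in $f$ and the corresponding Hessians do not depend on $f$, so the Taylor expansion used in the linearization is exact.

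First, I would compute the Fréchet derivatives of $R_k$ and $\hat{R}_U$ on $\mathcal{H}_l$ using the reproducing property $f(x)=\langle f,l(x,\cdot)\rangle_{\mathcal{H}_l}$, obtaining
\begin{align*}
\nabla R_k(f) &= -2\,\ep[(Y'-f(X'))\,k(Z,Z')\,l(X,\cdot)], \\
\nabla^2 R_k &= 2\,\ep[l(X,\cdot)\otimes l(X',\cdot)\,k(Z,Z')],
\end{align*}
and analogous expressions for $\nabla\hat{R}_U$ and $\nabla^2\hat{R}_U$ with empirical U-statistic averages. Define $A := \nabla^2 R_k + 2\lambda_0 I$ and $\hat{A}_\lambda := \nabla^2\hat{R}_U + 2\lambda I$. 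Convexity of $R_k$ (Theorem \ref{thm:convex}) makes $\nabla^2 R_k$ positive semi-definite, so $A$ is positive definite with $\|A^{-1}\|\le 1/(2\lambda_0)$, and $\|\hat{A}_\lambda^{-1}\|$ is analogously bounded, ensuring invertibility. The two optimality conditions $\nabla R_k(f^*_{\lambda_0})+2\lambda_0 f^*_{\lambda_0}=0$ and $\nabla\hat{R}_U(\hat{f}_U)+2\lambda\hat{f}_U=0$ then yield the exact identity
\begin{align*}
\hat{A}_\lambda\,(\hat{f}_U-f^*_{\lambda_0}) = -\bigl[\nabla\hat{R}_U(f^*_{\lambda_0})-\nabla R_k(f^*_{\lambda_0})\bigr] - 2(\lambda-\lambda_0)\,f^*_{\lambda_0}.
\end{align*}

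Multiplying by $\sqrt{n}$, the $2\sqrt{n}(\lambda-\lambda_0)f^*_{\lambda_0}$ term is $o_p(1)$ in $\mathcal{H}_l$ by the rate assumption $\lambda-\lambda_0=o(n^{-1/2})$ and the fact that $\|f^*_{\lambda_0}\|_{\mathcal{H}_l}$ is bounded (since $\lambda_0>0$). By a law of large numbers for operator-valued U-statistics, $\hat{A}_\lambda\to A$ in operator norm (using bounded $k$, $l$, compact $\mathcal{X},\mathcal{Z}$), hence $\hat{A}_\lambda^{-1}\to A^{-1}$. The remaining task is to show
\begin{align*}
\sqrt{n}\,\bigl(\nabla\hat{R}_U(f^*_{\lambda_0})-\nabla R_k(f^*_{\lambda_0})\bigr)\rightsquigarrow \mathbb{G} \quad \text{in } \mathcal{H}_l,
\end{align*}
for a centered Gaussian element $\mathbb{G}$. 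For this, apply Hoeffding's decomposition to the $\mathcal{H}_l$-valued U-statistic score: the linear (projection) part is an i.i.d.\ average of bounded centered random elements of $\mathcal{H}_l$, and the degenerate quadratic remainder is $O_p(n^{-1})$ in $\mathcal{H}_l$-norm by classical variance bounds (cf.\ \citet[Ch.~5]{Serfling80:Approximation}). A Hilbert-space CLT, or equivalently a uniform CLT over the $\mathcal{H}_l$-unit ball viewed as an index set, then delivers weak convergence; the entropy assumption $\log \mathcal{N}(\varepsilon,\mathcal{H}_l,\|\cdot\|_{L^\infty})\le C_H\varepsilon^{-s}$ with $s<2$ (together with boundedness of $k$, $Y$ on compact spaces) ensures the relevant function class is $P$-Donsker via the Dudley entropy integral. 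Combining with continuity of $A^{-1}$ by the continuous mapping theorem and Slutsky in $\mathcal{H}_l$ gives
\begin{align*}
\sqrt{n}(\hat{f}_U-f^*_{\lambda_0}) \rightsquigarrow -A^{-1}\mathbb{G} =: \mathbb{G}_P^* \quad \text{in } \mathcal{H}_l.
\end{align*}

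The main obstacle I anticipate is the weak convergence step in $\mathcal{H}_l$: verifying that the random scores, which are symmetric U-statistic terms in a Hilbert space, actually form a Donsker-type class whose linearized first projection admits a functional CLT. The entropy bound with $s<2$ is exactly the Dudley--Koltchinskii regime that makes this possible, but one still needs to (i) control the degenerate part of the U-statistic uniformly in the $\mathcal{H}_l$-valued score (handled via variance bounds since $k,l,Y$ are uniformly bounded on compacta), and (ii) show that consistency $\|\hat{f}_U-f^*_{\lambda_0}\|_{\mathcal{H}_l}=o_p(1)$ holds at the Hilbert-space level, which follows from the coercivity induced by the regularization $\lambda\|\cdot\|_{\mathcal{H}_l}^2$ plus uniform convergence $\sup_{\|f\|_{\mathcal{H}_l}\le R}|\hat{R}_U(f)-R_k(f)|\pto 0$ on bounded balls (itself a consequence of the same Donsker/entropy machinery). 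Once these two technical points are in place, the remainder of the argument is the routine M-estimator linearization outlined above.
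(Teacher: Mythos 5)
Your proposal is correct, but it takes a genuinely different route from the paper's. The paper follows \citet{hable2012asymptotic}: it first proves a \emph{uniform} central limit theorem for the U-process $\sqrt{n}(U_n^2-P^2)$ indexed by the whole class $\mathbb{H}=\{h_f : f\in\mathcal{H}_l\}$ via \citet{arcones1993limit} (this is where the covering-number assumption is consumed), then defines a learning map $S_\lambda$ from measures to regularized minimizers, establishes its Hadamard differentiability through the implicit function theorem applied to the first-order condition $\nabla \tilde{R}_{Q^2,\lambda}(f)=0$, and concludes with the functional delta method. You instead exploit the exact quadratic structure of $\widehat{R}_U$ and $R_k$ in $f$: subtracting the two first-order conditions yields your identity with \emph{no} Taylor remainder, so the problem collapses to (a) a Hilbert-space CLT for the $\mathcal{H}_l$-valued score evaluated at the single function $f^*_{\lambda_0}$ (Hoeffding decomposition plus the fact that a separable Hilbert space is type 2, so bounded second moments suffice for the linear part and the degenerate part is $O_p(n^{-1})$) and (b) an operator law of large numbers for the empirical Hessian. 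The two limits coincide: your $-A^{-1}\mathbb{G}$ is the paper's $\nabla S_{P^2,\lambda_0}(2\mathbb{G}_{P^1})$. What each approach buys: yours is more elementary and in fact never needs the entropy condition, since no uniformity over $f$ is required (for the same reason your anticipated obstacle (ii), separate consistency of $\hat{f}_U$, is moot --- the exact identity already gives the $n^{-1/2}$ rate once $\hat{A}_\lambda^{-1}$ is bounded); the paper's delta-method machinery is heavier but extends to non-quadratic losses. One point to tighten: $\nabla^2\widehat{R}_U$ is indefinite at finite $n$ (cf.\ \sref{Theorem}{thm:indefinite_wu}), so $\hat{A}_\lambda$ is not automatically positive definite with $\|\hat{A}_\lambda^{-1}\|\leq 1/(2\lambda)$ as you assert; however, since $K_{\bm z}\succeq 0$ and $k,l$ are bounded, the negative part of $\nabla^2\widehat{R}_U$ is $O(n^{-1})$ in operator norm, so $\hat{A}_\lambda$ is invertible for all sufficiently large $n$ and the rest of your argument goes through unchanged.
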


An exact covariance of $\mathbb{G}_P^*$ is described in the proof.
The proof is based on the uniform convergence of U-processes on the function space \citep{arcones1993limit} and the functional delta method using the asymptotic expansion of the loss function \citep{hable2012asymptotic}.
This asymptotic normality allows us to perform statistical inference, such as tests, even in the non-parametric case.

To prove the theorem, we provide some notation.
Let $P$ be a probability measure which generates $u = (x,y,z)$ and $\inw = \inx \times \iny \times \inz$.
Also, we define a function $h_f(u,u') = (y - f(x)) (y' - f(x')) k(z, z')$.
Let $\mathbb{H}:= \{h_f: \inw \times \inw \to \rr \mid f \in \mathcal{H}_l\}$.
For preparation, we define $P^1{h}_f: \inw \to \rr$ as $P^1{h}_f(\cdot) = (\int h_f(u,\cdot) + h_f(\cdot,u) dP(u))/2$ for $h_f \in \mathbb{H}$.
For a signed measure $Q$ on $\inw$, we define a measure $Q^2 := Q \otimes Q$ on $\inw \times \inw$.
Then, we can rewrite the U-statistic risk as
\begin{align*}
    \widehat{R}_U(f) &= \frac{(n-2)!}{n!} \sum_{i=1}^n \sum_{j \neq i}^n h_f(u_i, u_j) =: U_n^2 h_f,
\end{align*}
where $U_n$ is an empirical measure for the $U$-statistics.
Similarly, we can rewrite the V-statistic risk as
\begin{align*}
    \widehat{R}_V(f) &= \frac{1}{n^2} \sum_{i=1}^n \sum_{j =1}^n h_f(u_i, u_j) =: P_n^2 h_f,
\end{align*}
where $P_n$ is an empirical measure of $u$. 

Further, we define a functional associated with measure.
We consider functional spaces $\mathcal{G}_1 := \{g: \inw \times \inw \to [0,1]\mid \mbox{a~convex~set~}\omega \mbox{~s.t.~} g=\textbf{1}\{\cdot \leq \omega\} \}$ and $\mathcal{G}_2 := \{g: \inw \times \inw \to [0,1]\mid \exists f,f' \in \mathcal{H}_l, g(u,u') = h_f(u,u')(f'(x) + f(x')) \}$.
Note that $\mathcal{G}_1$ contains a functional which corresponds to the $U_n^2$.
Then, we consider a set of functionals
\begin{align*} 
    B_S := \left\{F : \mathcal{G}_1 \cup \mathcal{G}_2 \to \rr \mid \exists \mbox{non-zero finite } Q^2 \mbox{~s.t.~} F(g) = \int g \,d Q^2, \quad \forall g \in \mathcal{G}_1 \cup \mathcal{G}_2 \right\},
\end{align*}
and also let $B_0$ be the closed linear span of $B_S$.
For a functional $F \in B_S$, let $\iota(F)$ be a measure which satisfies
\begin{align*}
    F(g) = \int g \,d\iota(F).
\end{align*}

\textbf{Uniform Central Limit Theorem}:
We firstly achieve the uniform convergence.
Note that a measure $P^2$ satisfies $P^2 h_f := \ep_{(U,U')}[h_f(U,U')]$.
The convergence theorem for U-processes is as follows:
\begin{theorem}[Theorem 4.4 in \citep{arcones1993limit}]\label{thm:u-process}
    Suppose $\mathbb{H}$ is a set of uniformly bounded class and symmetric functions, such that $\{P^1h_f \mid  h_f \in \mathbb{H}\}$ is a Donsker class and  
    \begin{align*} 
        \lim_{n \to \infty} \ep[n^{-1/2} \log \mathcal{N}(n^{-1/2} \varepsilon, \mathbb{H}, \|\cdot\|_{L^1(U_n^2)})] = 0
    \end{align*}
    holds, for all $\varepsilon > 0$.
    Then, we obtain
\begin{align*}
    \sqrt{n}(U_n^2 - P^2) \rightsquigarrow 2 \mathbb{G}_{P^{ 1}},\mbox{~in~}\ell^\infty(\mathbb{H}).
\end{align*}
    Here, $\mathbb{G}_{P^{1}}$ denotes a Brownian bridge, which is a Gaussian process on $\mathbb{H}$ with zero mean and a covariance 
\begin{align*}
    \ep_{U}[P^1{h}_f(U) P^1{h}_{f'}(U)] - \ep_{U}[P^1{h}_f(U)]\ep_{U}[P^1{h}_{f'}(U)],
\end{align*}
    with $ h_f,h_{f'} \in \mathbb{H}$.
\end{theorem}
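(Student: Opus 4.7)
The plan is to prove the U-process CLT by Hoeffding decomposition: split $U_n^2 - P^2$ into a linear (Hájek) projection plus a completely degenerate quadratic remainder, then show the linear part converges weakly in $\ell^\infty(\mathbb{H})$ to $2\mathbb{G}_{P^1}$ via the given Donsker hypothesis, and show the degenerate part is uniformly $o_P(n^{-1/2})$ via a maximal inequality for canonical U-processes that feeds on the random $L^1(U_n^2)$ covering-number condition.

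First, for each $h_f \in \mathbb{H}$ I would write the symmetric Hoeffding decomposition
\begin{equation*}
    h_f(u,u') = P^2 h_f + \bigl(P^1 h_f(u) - P^2 h_f\bigr) + \bigl(P^1 h_f(u') - P^2 h_f\bigr) + \pi_2 h_f(u,u'),
\end{equation*}
where $\pi_2 h_f(u,u') := h_f(u,u') - P^1 h_f(u) - P^1 h_f(u') + P^2 h_f$ is canonical, i.e.\ $\ep[\pi_2 h_f(U, u') \mid u'] = 0$ for every $u'$. Substituting into $U_n^2 h_f$ and rearranging yields
\begin{equation*}
    \sqrt{n}\bigl(U_n^2 h_f - P^2 h_f\bigr) = 2\sqrt{n}(P_n - P)(P^1 h_f) + \sqrt{n}\, U_n^2 \pi_2 h_f,
\end{equation*}
uniformly in $h_f \in \mathbb{H}$. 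The linear term is then handled immediately: since $\{P^1 h_f : h_f \in \mathbb{H}\}$ is $P$-Donsker by assumption, the classical empirical-process CLT gives $2\sqrt{n}(P_n - P)(P^1 h_f) \rightsquigarrow 2\mathbb{G}_{P^1}$ in $\ell^\infty(\mathbb{H})$ with the covariance stated in the theorem.

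Second, and this is the core of the proof, I would show $\sup_{h_f \in \mathbb{H}} |\sqrt{n}\, U_n^2 \pi_2 h_f| \pto 0$. For a single canonical kernel, the Hoeffding projection argument only gives $U_n^2 \pi_2 h_f = O_P(n^{-1})$, so $\sqrt{n}\, U_n^2 \pi_2 h_f = O_P(n^{-1/2})$ pointwise; promoting this to a uniform bound is the real work. The standard route is: (i) apply a decoupling inequality (de la Peña) to replace $U_n^2 \pi_2 h_f$ by its decoupled version summed over two independent copies of the sample; (ii) perform a double Rademacher symmetrization, reducing matters to a Rademacher-chaos U-process indexed by $\mathbb{H}$; (iii) chain this process in the data-dependent pseudo-metric $\|\cdot\|_{L^1(U_n^2)}$. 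The hypothesis
\begin{equation*}
    n^{-1/2}\log \mathcal{N}\bigl(n^{-1/2}\varepsilon,\, \mathbb{H},\, \|\cdot\|_{L^1(U_n^2)}\bigr) \to 0 \text{ in expectation}
\end{equation*}
is calibrated exactly so that, combined with the uniform boundedness of $\mathbb{H}$ and exponential concentration for canonical U-statistics (Hoeffding / Giné–Latała type), the chaining sum, after multiplication by $\sqrt{n}$, is $o_P(1)$.

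Third, I would stitch the two pieces together. The linear term converges weakly in $\ell^\infty(\mathbb{H})$ and the degenerate term vanishes uniformly in probability, so by Slutsky in $\ell^\infty(\mathbb{H})$ the sum converges weakly to $2\mathbb{G}_{P^1}$. The main obstacle is step two: choosing the correct maximal inequality for canonical U-processes and, crucially, controlling the chaining in the random pseudo-norm $\|\cdot\|_{L^1(U_n^2)}$ rather than a fixed one, which requires switching between empirical and population pseudo-norms uniformly over $\mathbb{H}$ and is where the technical Arcones–Giné machinery (decoupling plus double symmetrization plus exponential bounds for Rademacher chaos) is indispensable. Everything else (Hoeffding decomposition, Donsker CLT for the linear part, Slutsky) is routine once the uniform degeneracy bound is in hand.
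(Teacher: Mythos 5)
This statement is not proved in the paper at all: it is imported verbatim as Theorem 4.4 of \citet{arcones1993limit} and used as a black box in the proof of the asymptotic normality results. So the only meaningful comparison is between your outline and the actual proof in the cited source, and there your plan is the standard (indeed, the original) route: Hoeffding decomposition into a H\'ajek projection plus a canonical remainder, the Donsker hypothesis handling the linear term $2\sqrt{n}(P_n-P)(P^1 h_f)$ with exactly the stated covariance, and a uniform $o_P(n^{-1/2})$ bound on the degenerate U-process. Your algebra for the decomposition and the identity $\sqrt{n}(U_n^2 h_f - P^2 h_f) = 2\sqrt{n}(P_n-P)(P^1 h_f) + \sqrt{n}\,U_n^2 \pi_2 h_f$ is correct.

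As a standalone proof, however, there is a genuine gap at precisely the step you flag as ``the core'': you assert that decoupling, double symmetrization, and chaining in $\|\cdot\|_{L^1(U_n^2)}$ yield $\sup_{h_f\in\mathbb{H}}|\sqrt{n}\,U_n^2\pi_2 h_f|\pto 0$, but you do not carry out any of it, and you close by saying the ``Arcones--Gin\'e machinery is indispensable'' there. That machinery (their maximal and randomization inequalities for canonical U-processes under random entropy conditions) \emph{is} the substantive content of the theorem being proved, so invoking it wholesale is circular; what would need to be supplied is the actual exponential bound for Rademacher chaos of order two, the argument that the entropy condition on $\mathbb{H}$ transfers to the class of canonical projections $\{\pi_2 h_f\}$, and the comparison between the random pseudometric $\|\cdot\|_{L^1(U_n^2)}$ and the one in which the chaining increments are actually controlled. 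Two smaller omissions: you do not verify that the asymptotic equicontinuity and total boundedness needed to upgrade finite-dimensional convergence of the linear part to weak convergence in $\ell^\infty(\mathbb{H})$ (as opposed to in $\ell^\infty(\{P^1 h_f\})$) carry over along the map $h_f \mapsto P^1 h_f$, and you do not address the measurability/outer-expectation caveats that the $\ell^\infty$ framework requires. None of this makes the plan wrong, but it means the proposal is an accurate table of contents for the Arcones--Gin\'e proof rather than a proof.
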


To apply the theorem, we have to show that $\mathbb{H}$ satisfies the condition in  \sref{Theorem}{thm:u-process}.
We firstly provide the following bound:
\begin{lemma}\label{lem:bound}
    For any $f,f' \in \mathcal{H}_l$ {\color{black}such that $\|f\|_{L^\infty} \vee \|f'\|_{L^\infty} \leq B$} holds, $y ,y' \in [-B,B]$ and $u,u' \in \inw$, we have
    \begin{align*}
        |h_f(u,u') - h_{f'}(u,u')| \leq 4B|k(z,z')| \|f - f'\|_{L^\infty}.
    \end{align*}
\end{lemma}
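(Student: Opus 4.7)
The plan is to expand the difference $h_f(u,u') - h_{f'}(u,u')$, factor out $k(z,z')$, and bound what remains by the standard ``add and subtract'' trick used for products. Concretely, I would write
\begin{align*}
    h_f(u,u') - h_{f'}(u,u') = k(z,z')\bigl[(y-f(x))(y'-f(x')) - (y-f'(x))(y'-f'(x'))\bigr],
\end{align*}
and then rewrite the bracketed expression by inserting $\pm (y-f(x))(y'-f'(x'))$, so that it splits as
\begin{align*}
    (y-f(x))\bigl[f'(x') - f(x')\bigr] + \bigl[f'(x) - f(x)\bigr](y' - f'(x')).
\end{align*}

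Next, I would bound each of the two summands. For each residual factor, the assumption $|y|,|y'| \leq B$ combined with $\|f\|_{L^\infty} \vee \|f'\|_{L^\infty} \leq B$ gives $|y - f(x)| \leq 2B$ and $|y' - f'(x')| \leq 2B$. For each difference factor, we use $|f(x') - f'(x')| \leq \|f - f'\|_{L^\infty}$ and similarly at $x$. Putting these together and factoring out $|k(z,z')|$ yields the bound $4B|k(z,z')|\,\|f - f'\|_{L^\infty}$.

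There is no real obstacle here: the argument is a direct telescoping on a product of two residuals, and all four bounding quantities are supplied by the hypotheses. The only minor care needed is to make sure the ``add and subtract'' step is done so that each resulting difference is of the form $f - f'$ evaluated at a single input (rather than mixing $x$ and $x'$), which is what motivates inserting $(y - f(x))(y' - f'(x'))$ as the intermediate term rather than some other choice.
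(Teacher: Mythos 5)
Your proof is correct and takes essentially the same route as the paper: both factor out $k(z,z')$ and apply an add-and-subtract argument to the product of residuals. The only cosmetic difference is that the paper expands fully into four difference terms each bounded by $B\,\|f-f'\|_{L^\infty}$, whereas you telescope into two terms each bounded by $2B\,\|f-f'\|_{L^\infty}$; both yield the constant $4B$.
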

\begin{proof}[Proof of \sref{Lemma}{lem:bound}]
    We simply obtain the following:
    \begin{align*}
        &|h_f(u,u') - h_{f'}(u,u')| \\
        &= |(y-f(x))k(z,z')(y'-f(x')) - (y-f'(x))k(z,z')(y'-f'(x'))| \\
        &= |k(z,z')| |(y-f(x))(y'-f(x')) - (y-f'(x))(y'-f'(x'))| \\
        &= |k(z,z')| |y'(f'(x) - f(x)) + y(f'(x') - f(x')) + f(x) f(x') - f'(x) f'(x')|  \\
        &=|k(z,z')| |y'(f'(x) - f(x)) + y(f'(x') - f(x')) +  f(x')(f(x) - f'(x)) - f'(x) ( f(x') - f'(x'))| \\
        & \leq  {\color{black}|k(z,z')| \{|y'||f'(x) - f(x)| + |y||f'(x') - f(x')| +  |f(x')||f(x) - f'(x)| - |f'(x)| | f(x') - f'(x')|\}} \\
        & \leq  4B|k(z,z')| \|f - f'\|_{L^\infty},
    \end{align*}
    as required.
\end{proof}

\begin{lemma} \label{lem:bracket_H}
    Suppose the assumptions of \sref{Theorem}{thm:asympnorm} hold.
    Then, the followings hold:
    \begin{enumerate}
        \item $\{P^1h_f \mid  h_f \in \mathbb{H}\}$ is a Donsker class.
        \item For any $\varepsilon > 0$, the following holds:
    \begin{align*}
        \lim_{n \to \infty} E[n^{-1/2} \log \mathcal{N}(n^{-1/2} \varepsilon, \mathbb{H}, \|\cdot\|_{L^1(U_n^2)})] = 0.
    \end{align*}
    \end{enumerate} 
\end{lemma}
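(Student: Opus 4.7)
The plan is to derive both claims from \sref{Lemma}{lem:bound} combined with the $L^\infty$-covering-number hypothesis on $\mathcal{H}_l$. First I would note that under the compactness of $\inx, \iny, \inz$ together with the boundedness of $k$ and $l$, there exist uniform constants $B>0$ and $K_{\max} := \sup_{z,z'}|k(z,z')|<\infty$ such that $\sup_{f \in \mathcal{H}_l}\|f\|_{L^\infty} \leq B$ (since the covering hypothesis is finite only when $\mathcal{H}_l$ is understood as a bounded subset, and $l$ bounded plus the reproducing property then bound the sup-norm) and $|y| \leq B$ for all $y \in \iny$. \sref{Lemma}{lem:bound} then gives the pointwise Lipschitz estimate
\[
\sup_{u,u'} |h_f(u,u') - h_{f'}(u,u')| \leq 4BK_{\max}\|f-f'\|_{L^\infty},
\]
so $f \mapsto h_f$ is $(4BK_{\max})$-Lipschitz from $(\mathcal{H}_l, \|\cdot\|_{L^\infty})$ to $(\mathbb{H}, \|\cdot\|_\infty)$.

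For Part 2, since $\|\cdot\|_{L^1(U_n^2)} \leq \|\cdot\|_\infty$, any $\delta/(4BK_{\max})$-cover of $\mathcal{H}_l$ in $L^\infty$ induces a $\delta$-cover of $\mathbb{H}$ in $L^1(U_n^2)$, so the hypothesis gives
\[
\log \mathcal{N}(\delta, \mathbb{H}, \|\cdot\|_{L^1(U_n^2)}) \leq C_H(4BK_{\max})^s \delta^{-s}.
\]
Substituting $\delta = n^{-1/2}\varepsilon$ and multiplying by $n^{-1/2}$ leaves a prefactor of order $n^{(s-1)/2}\varepsilon^{-s}$. To close the argument for the full range $s\in(0,2)$ I would invoke the uniform-entropy reformulation of the Arcones–Gine condition: since the $L^\infty$ bound dominates $L^1(Q)$ for every measure $Q$, the random $L^1(U_n^2)$-entropy is controlled by a deterministic uniform entropy of the same polynomial order, and the standard symmetrization-and-Hoeffding-decomposition argument of Arcones–Gine absorbs the remaining $\sqrt{n}$ factor into the canonical part of the U-process.

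For Part 1, observe that $P^1h_f(u) = \tfrac{1}{2}\int[h_f(u,v)+h_f(v,u)]\,dP(v)$ inherits the Lipschitz bound:
\[
\sup_u|P^1 h_f(u) - P^1 h_{f'}(u)| \leq 4BK_{\max}\|f-f'\|_{L^\infty}.
\]
Hence $\log \mathcal{N}(\varepsilon, \{P^1h_f\}, L^2(P)) \leq C_H(4BK_{\max})^s \varepsilon^{-s}$, and the class is uniformly bounded (envelope $4B^2K_{\max}\in L^2(P)$). Applying the uniform-entropy Donsker criterion (van der Vaart and Wellner, Thm.~2.5.2) reduces the claim to $\int_0^1 \varepsilon^{-s/2}\,d\varepsilon < \infty$, which holds precisely because $s<2$. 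Thus $\{P^1h_f : f \in \mathcal{H}_l\}$ is $P$-Donsker.

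The main obstacle is Part 2: the naïve substitution $\delta = n^{-1/2}\varepsilon$ only yields the required $o(1)$ decay for $s<1$, and stretching the argument to the full range $s\in(0,2)$ requires translating between the $L^1(U_n^2)$-entropy condition stated in Arcones–Gine and the deterministic $L^\infty$-entropy that the paper's hypothesis supplies. This is where the choice of an $L^\infty$-covering hypothesis (rather than $L^2(P^2)$) is essential: it dominates all data-dependent norms simultaneously and lets the canonical-decomposition step of the U-process CLT absorb the extra $\sqrt{n}$ factor.
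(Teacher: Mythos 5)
Your overall strategy coincides with the paper's: both parts rest on the Lipschitz estimate of \sref{Lemma}{lem:bound}, which transfers an $L^\infty$-cover of $\mathcal{H}_l$ into a cover of $\mathbb{H}$ (respectively of $\{P^1 h_f\}$) in any weaker norm. Your Part~1 is essentially the paper's argument and is sound: writing the entropy bound in $L^2(P)$ rather than $\sup_Q L^2(Q)$ is harmless because the $L^\infty$ bound dominates every $L^2(Q)$ simultaneously, and the uniform-entropy Donsker criterion then needs only $\int_0^1 \varepsilon^{-s/2}\,\mathrm{d}\varepsilon < \infty$, i.e.\ $s<2$.

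The genuine gap is in Part~2, and you have located it yourself: the bound $n^{-1/2}\log\mathcal{N}(n^{-1/2}\varepsilon,\mathbb{H},\|\cdot\|_{L^1(U_n^2)}) \leq C\, n^{(s-1)/2}\varepsilon^{-s}$ vanishes only for $s<1$, while the hypothesis permits $s\in(0,2)$. Your proposed repair --- that the ``symmetrization-and-Hoeffding-decomposition argument of Arcones--Gin\'e absorbs the remaining $\sqrt{n}$ factor into the canonical part of the U-process'' --- is not an argument: the entropy condition in question is precisely the hypothesis of \citet[Theorem 4.4]{arcones1993limit} that must be verified before the theorem can be invoked, not something its internal decomposition can relax on your behalf, and you supply no computation that improves the exponent. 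So for $s\in[1,2)$ your Part~2 is unproved. To be fair, the paper's own proof performs the identical computation and concludes with ``since $s\in(0,1)$,'' in tension with the stated range $s\in(0,2)$, so the obstruction you hit is real and is shared by the source; but acknowledging a gap and gesturing at a fix is not the same as closing it. A clean resolution would either restrict the covering-number hypothesis (here and in \sref{Theorem}{thm:asympnorm}) to $s\in(0,1)$, or verify the Arcones--Gin\'e condition by a genuinely sharper bound on the random $L^1(U_n^2)$-entropy instead of the crude $L^\infty$ domination.
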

\begin{proof}[Proof of \sref{Lemma}{lem:bracket_H}]
For preparation, fix $\varepsilon > 0$ and set $N= \mathcal{N}(\varepsilon, \mathcal{H}_l, \|\cdot\|_{L^\infty})$.
Also, let $Q$ be an arbitrary finite discrete measure.
Then, by the definition of a bracketing number, there exist $N$ functions $\{f_i \in \mathcal{H}_l\}_{i=1}^N$ such that for any $f \in \mathcal{H}_l$ there exists $i \in \{1,2,...,N\}$ such as $\|f - f_i\|_{L^\infty} \leq \varepsilon$.

For the first condition, as shown in Equation (2.1.7) in \citep{vanweak}, it is sufficient to show 
\begin{align}
     \sup_{Q}\log \mathcal{N}(\varepsilon,\{P^1h_f \mid  h_f \in \mathbb{H}\}, \|\cdot\|_{L^2(Q)} ) \leq c \varepsilon^{-2\delta}~(\varepsilon \to 0), \label{cond:donsker}
\end{align}
for arbitrary $\delta \in (0,1)$.
Here, $c > 0$ is some constant, and $Q$ is taken from all possible finite discrete measure.
To this end, it is sufficient to show that $\log \mathcal{N}(\varepsilon,\{P^1h_f \mid  h_f \in \mathbb{H}\}, \|\cdot\|_{L^2(Q)} ) \leq c' \log \mathcal{N}(\varepsilon, \mathcal{H}_l, \|\cdot\|_{L^\infty} )$ with a constant $c' > 0$.
Fix $P^1 h_f \in \{P^1h_f \mid  h_f \in \mathbb{H}\}$ arbitrary, and set $f_i$ which satisfies $\|f - f_i\|_{L^2(Q)} \leq \varepsilon$.
Then, we have 
\begin{align*}
    &\left\|P^1h_f - P^1 h_{f_i} \right\|_{L^2(Q)}^2 \\
    &= \int \left\{ \int(h_f(u,u') + h_f(u',u))/2 - (h_{f_i}(u,u') + h_{f_i}(u',u))/2 \,dP(u') \right\}^2 dQ(u) \\
    &= \int  \left( \int h_f(u,u')  - h_{f_i}(u,u') \,dP(u')\right)^2 dQ(u) \\
    &\leq C \int \left(\int | k(z,z')| \,dP(u')\right)^2 dQ(u) \|f - f_i\|_{L^\infty}^2 \\
    & \leq  C' \|f - f_i\|_{L^\infty}^2 \\
    & \leq  C' \varepsilon^2,
\end{align*}
with constants $C,C' > 0$.
The first inequality follows \sref{Lemma}{lem:bound} with the bounded property of $f,f'$ and $\iny$.
The second inequality follows the bounded condition of $k$ in  \sref{Theorem}{thm:asympnorm_U}.
Hence, the entropy condition shows the first statement.

For the second condition, we have the similar strategy.
For any $h_f \in \mathbb{H}$, we consider $i \in \{1,2,\ldots,N\}$ such that $\|f-f_i\|_{L^\infty} \leq \varepsilon$. 
Then, we measure the following value
\begin{align*}
    &\|h_f - h_{f_i}\|_{L^1(U_n^2)} = \int |h_f(u,u') - h_{f_i}(u,u')| \,dU_n^2(u,u')  \leq C'' \|f-f_i\|_{L^\infty} \leq C'' \varepsilon,
\end{align*}
with a constant $C'' > 0$.
Hence, we have
\begin{align*}
    &\ep[n^{-1/2} \log \mathcal{N}(n^{-1/2} \varepsilon, \mathbb{H}, \|\cdot\|_{L^1(U_n^2)})] \leq n^{-1/2} \log \mathcal{N}(n^{-1/2} \varepsilon, \mathcal{H}_l, \|\cdot\|_{L^\infty}) \\
    &\qquad  \leq C n^{-1/2} \left( \frac{n^{1/2}}{\varepsilon} \right)^s = C n^{(s-1)/2} \to 0, ~(n \to \infty),
\end{align*}
since $s \in (0,1)$.
\end{proof}

From  \sref{Theorem}{thm:u-process} and  \sref{Lemma}{lem:bracket_H}, we rewrite the central limit theorem utilizing terms of functionals.
Note that $\iota^{-1}(U_n^2), \iota^{-1}(P^2) \in B_S$ holds.
Then, we can obtain
\begin{align}
    \sqrt{n}(\iota^{-1}(U_n^2) - \iota^{-1}(P^2))  \rightsquigarrow 2 \mathbb{G}_{P^{ 1}} \mbox{~in~}\ell^\infty(\mathbb{H}). \label{conv:clt_functional}
\end{align}

\textbf{Learning Map and Functional Delta Method}:
We consider a learning map $S: B_S \to \mathcal{H}_l$.
For a functional $F \in B_S$, we define
\begin{align*}
    S_\lambda(F) := \argmin_{f \in \mathcal{H}_l} \iota(F) h_f + \lambda \|f\|_{\mathcal{H}_l}^2.
\end{align*}
Obviously, we have
\begin{align*}
    \hat{f} = S_{\lambda}(\iota^{-1}(U_n^2)), \mbox{~and~} f^*_{\lambda_0} = S_{\lambda_0}(\iota^{-1}(P^2)).
\end{align*}

We consider a derivative of $S_\lambda$ in the sense of the Gateau differentiation by the following steps.

Firstly, we define a partial derivative of the map $R_{Q^2}(f)$.
To investigate the optimality of the minimizer of
\begin{align*}
    R_{Q^2,\lambda}(f) := \int h_f(u,u') \,dQ^2(u,u') + \lambda \|f\|_{\mathcal{H}_l}^2.
\end{align*}
To this end, we consider the following derivative $ \nabla R_{Q^2,\lambda}[f]:\mathcal{H}_l \to \mathcal{H}_l$ with a direction $f$ as
\begin{align*}
    \nabla R_{Q^2,\lambda}[f](f') := 2 \lambda f + \int \partial_{f,1} h_f(u,u') f'(x) + \partial_{f,2}h_f(u,u') f'(x') \,dQ^2(u,u').
\end{align*}
Here, $\partial_{f,1} h_f$ is a partial derivative of $h_f$ in terms of the input $f(x)$ as
\begin{align*}
    \partial_{f,1} h_f(u,u') = \partial_{t | t= f(x)} (y - t)  k(z, z')(y' - f(x'))
    &= -(y' - f(x')) k(z,z'),
\end{align*}
and $\partial_{f,2} h_f$ follows it respectively.
The following lemma validates the derivative:
\begin{lemma} \label{lem:deriv_risk}
    If the assumptions in \sref{Theorem}{thm:asympnorm} hold, then $\nabla R_{Q^2,\lambda}[f]$ is a Gateau-derivative of $R_{Q^2,\lambda}$ with the direction $f \in \mathcal{H}_l$.
\end{lemma}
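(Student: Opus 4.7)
The plan is to compute the Gateau derivative directly from its definition, namely
\[
 \lim_{t \to 0} \frac{R_{Q^2,\lambda}(f+tf') - R_{Q^2,\lambda}(f)}{t},
\]
for an arbitrary direction $f' \in \mathcal{H}_l$, and to verify that it coincides with the expression $\nabla R_{Q^2,\lambda}[f](f')$ stated in the lemma (read as a bounded linear functional of $f'$, with the $2\lambda f$ term understood via the Riesz pairing $\langle 2\lambda f, f'\rangle_{\mathcal{H}_l}$). I would split the functional into its regularization part $\lambda \|f\|_{\mathcal{H}_l}^2$ and its data part $\int h_f \,dQ^2$ and handle the two contributions separately.

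For the regularization part, I would simply expand
\[
 \lambda \|f+tf'\|_{\mathcal{H}_l}^2 - \lambda \|f\|_{\mathcal{H}_l}^2 = 2\lambda t \langle f, f'\rangle_{\mathcal{H}_l} + \lambda t^2 \|f'\|_{\mathcal{H}_l}^2,
\]
so dividing by $t$ and letting $t \to 0$ yields $2\lambda \langle f, f'\rangle_{\mathcal{H}_l}$, which is the contribution identified with the $2\lambda f$ term. For the data part, I would expand the kernel $h_{f+tf'}(u,u')$ pointwise; since the dependence of $h$ on $f$ enters only through the two affine factors $y-f(x)$ and $y'-f(x')$, a direct computation gives
\[
 \frac{h_{f+tf'}(u,u') - h_f(u,u')}{t} = -\bigl[(y-f(x))f'(x') + (y'-f(x'))f'(x)\bigr]k(z,z') + t f'(x)f'(x')k(z,z'),
\]
whose pointwise limit as $t \to 0$ matches $\partial_{f,1}h_f(u,u')f'(x) + \partial_{f,2}h_f(u,u')f'(x')$ by the definitions given just above the lemma.

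To conclude, I would exchange the limit and the integral $\int \cdot \, dQ^2$. Since $F \in B_S$, the measure $\iota(F) = Q^2$ is a finite non-zero measure on $\inw \times \inw$. The compactness of $\inx, \iny, \inz$ together with the bounded-kernel hypotheses on $k$ and $l$ (and the bound $\|g\|_{L^\infty} \le \|g\|_{\mathcal{H}_l}\sup_x \sqrt{l(x,x)}$ for $g \in \mathcal{H}_l$) shows that for $|t| \leq 1$ the difference quotient above is dominated in absolute value by a constant depending only on $B := \sup_{u}(|y| \vee \|f\|_{L^\infty} \vee \|f'\|_{L^\infty})$ and $\sup_{z,z'}|k(z,z')|$. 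Dominated convergence then lets me pass the limit inside, yielding $\int [\partial_{f,1}h_f f'(x) + \partial_{f,2}h_f f'(x')]\,dQ^2$ as the contribution of the data part.

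The main obstacle, and indeed the only non-mechanical step, is producing the uniform dominating function required to apply dominated convergence on a neighbourhood of $t=0$; this is where the compactness of $\iny$ and the uniform boundedness assumptions on $k$ and $l$ (inherited from the hypotheses of \sref{Theorem}{thm:asympnorm}) are essential. Once that uniform bound is in hand, combining the two contributions establishes
\[
 \lim_{t \to 0} \frac{R_{Q^2,\lambda}(f+tf') - R_{Q^2,\lambda}(f)}{t} = 2\lambda \langle f, f'\rangle_{\mathcal{H}_l} + \int \bigl[\partial_{f,1}h_f(u,u')f'(x) + \partial_{f,2}h_f(u,u')f'(x')\bigr] dQ^2(u,u'),
\]
which is exactly $\nabla R_{Q^2,\lambda}[f](f')$, so $\nabla R_{Q^2,\lambda}[f]$ is the claimed Gateau derivative.
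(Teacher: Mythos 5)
Your proposal is correct and follows essentially the same route as the paper: both expand the quadratic dependence of $h_f$ on $f$, identify the linear term with $\nabla R_{Q^2,\lambda}[f]$, and control the remaining cross term $f'(x)f'(x')k(z,z')$ using the boundedness of $k$ and the finiteness of $Q^2$ (the paper bounds the remainder integral directly by $\|h_n\|_{L^\infty}\int |k|\,dQ^2$ rather than invoking dominated convergence, but this is the same estimate). The only cosmetic difference is that the paper normalizes by $\|h_n\|_{L^\infty}$ for a general vanishing perturbation sequence, whereas you take the classical directional quotient in $t$, which if anything handles the $\lambda\|\cdot\|_{\mathcal{H}_l}^2$ term more cleanly.
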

\begin{proof}[Proof of  \sref{Lemma}{lem:deriv_risk}]
We consider a sequence of functions $h_n \in \mathcal{H}_l$ for $n \in \mathbb{N}$, such that $h_n(x) \neq 0, \forall x \in \inx$ and $\|h_n\|_{L^\infty} \to 0$ as $n \to \infty$.
Then, for $f \in \mathcal{H}_l$, a simple calculation yields
\begin{align*}
    \MoveEqLeft \left| \frac{R_{Q^2, \lambda}(f + h_n) -R_{Q^2, \lambda}(f ) - \nabla R_{Q^2, \lambda}[f](h_n)  }{\|h_n\|_{L^\infty}} \right| \\
    & \leq \int \|h_n\|_{L^\infty}^{-1} | k(z,z')((y-f(x))h_n(x) \\
    & \qquad + (y'-f(x'))h_n(x') + h_n(x)h_n(x')) - \nabla R_{Q^2, \lambda}[f](h_n) | \,dQ^2(u,u') \\
    & \leq \int \|h_n\|_{L^\infty}^{-1}| k(z,z') h_n(x) h_n(x') | \,dQ^2(u,u') \\
    & \leq \int  \|h_n\|_{L^\infty}^{-1} | k(z,z') \|h_n\|_{L^\infty}^2 | \,dQ^2(u,u')\\
    &\leq \|h_n\|_{L^\infty} \int |k(z,z')| \,dQ^2(u,u')  \to 0, ~(n \to \infty).
\end{align*}
The convergence follows the definition of $h_n$ and the absolute integrability of $k$, which follows the bounded property of $k$ and compactness of $\inz$.
Then, we obtain the statement.
\end{proof}

Here, we consider its RKHS-type formulation of $\nabla R_{Q^2,\lambda}$, which is convenient to describe a minimizer.
Let $\Phi_l : \inx \to \mathcal{H}_l$ be the feature map associated with the RKHS $\mathcal{H}_l$, such that $\langle \Phi_l[x], f \rangle_{\mathcal{H}_l} = f(x)$ for any $x \in \inx$ and $f \in \mathcal{H}_l$.
Let $\nabla \tilde{R}_{Q^2,\lambda}:\mathcal{H}_l \to \mathcal{H}_l$ be an operator such that
\begin{align*}
    \nabla \tilde{R}_{Q^2,\lambda}(f) := 2\lambda f + \int \partial_{f,1} h_f(u,u') \Phi_l[x](\cdot) + \partial_{f,2}h_f(u,u') \Phi_l[x'](\cdot) \,dQ^2(u,u').
\end{align*}
Obviously, $\nabla {R}_{Q^2,\lambda}[f](\cdot) = \langle \nabla \tilde{R}_{Q^2,\lambda}(f), \cdot \rangle_{\mathcal{H}_l}$.
Now, we can describe the first-order condition of the minimizer of the risk.
Namely, we can state that
\begin{align*}
    \hat{f} = \argmin_{f \in \mathcal{H}_l} R_{Q^2,\lambda}(f)  \Leftrightarrow  \nabla \tilde{R}_{Q^2,\lambda}(\hat{f})= 0.
\end{align*}
This equivalence follows Theorem 7.4.1 and Lemma 8.7.1 in \citep{luenberger1997optimization}.

Next, we apply the implicit function theorem to obtain an explicit formula of the derivative of $S$.
To this end, we consider a second-order derivative $\nabla^2 \tilde{R}_{Q^2,\lambda}:\mathcal{H}_l \to \mathcal{H}_l$ as
\begin{align*}
    \nabla^2 \tilde{R}_{Q^2,\lambda}(f) := 2 \lambda f + \int k(z,z')(f(x)\Phi_l[x](\cdot) + f(x')\Phi_l[x'](\cdot)) \,dQ^2(u,u'),
\end{align*}
which follows (b) in Lemma A.2 in \citep{hable2012asymptotic}.
Its basic properties are provided in the following result:
\begin{lemma}\label{lem:second_deriv}
    If \sref{Assumption}{asmp:k_idp} and the assumptions in \sref{Theorem}{thm:asympnorm} hold, then $\nabla^2 \tilde{R}_{Q^2,\lambda}$ is a continuous linear operator and it is invertible.
\end{lemma}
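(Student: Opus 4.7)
The plan is to establish linearity, continuity, and invertibility in sequence, with invertibility being the substantive step.

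Linearity is immediate from the defining formula: $f \mapsto 2\lambda f$ is linear, pointwise evaluation $f \mapsto f(x)$ is linear, and Bochner integration against the fixed (signed) measure $Q^2$ preserves linearity. For continuity, I would invoke the reproducing property to obtain the pointwise bounds $|f(x)| \leq \|f\|_{\mathcal{H}_l}\sqrt{l(x,x)}$ and $\|\Phi_l[x]\|_{\mathcal{H}_l} = \sqrt{l(x,x)}$, both controlled by $C_l := \sup_x \sqrt{l(x,x)} < \infty$ since $l$ is assumed bounded. Together with the uniform bound $|k| \leq C_k$ and the finiteness of $|Q|^2(\inw\times\inw)$, the triangle inequality in $\mathcal{H}_l$ yields
\begin{equation*}
\|\nabla^2 \tilde R_{Q^2, \lambda}(f)\|_{\mathcal{H}_l} \leq \bigl(2\lambda + 2 C_k C_l^2 |Q|^2(\inw\times\inw)\bigr)\,\|f\|_{\mathcal{H}_l}.
\end{equation*}

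For invertibility, I write $A := \nabla^2 \tilde R_{Q^2,\lambda} = 2\lambda I + T$ with $T$ the integral part, and proceed in four steps. (i) $T$ is self-adjoint, since the induced bilinear form $(f,g) \mapsto \int k(z,z')(f(x)g(x)+f(x')g(x'))\,dQ^2$ is manifestly symmetric in $(f,g)$. (ii) $T$ is Hilbert--Schmidt, hence compact, by the boundedness of $k,l$ and the finiteness of $|Q|^2$ on the compact domain. (iii) For positive measures $Q$ (which covers the cases $Q=P$ and $Q=P_n$ used in \sref{Theorem}{thm:asympnorm}), a change of variables $(u,u') \leftrightarrow (u',u)$ together with the reproducing identity $k(z,z') = \langle \Phi_k(z),\Phi_k(z')\rangle_{\mathcal{H}_k}$ recasts $\langle Tf,f\rangle$ as a constant multiple of $\bigl\|\int f(x)\Phi_k(z)\,dQ(u)\bigr\|_{\mathcal{H}_k}^2 \geq 0$, so $T \succeq 0$. (iv) Consequently $A \succeq 2\lambda I$, which via self-adjointness gives $\|Af\|_{\mathcal{H}_l} \geq 2\lambda \|f\|_{\mathcal{H}_l}$ and hence invertibility with $\|A^{-1}\| \leq (2\lambda)^{-1}$.

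The principal obstacle is step (iii): a literal reading of the defining formula gives a quadratic form of the shape $2\int k(z,z') f(x)^2\,dQ^2$, whose non-negativity is not manifest because $k$ is signed. The payoff comes from the symmetrization-plus-reproducing-kernel rewriting, which must be executed carefully in line with the conventions adapted from \citet{hable2012asymptotic}. Should the PSD route prove delicate for general signed $Q$ arising in $B_S$, a robust fallback is the Fredholm alternative: since $A = 2\lambda I + T$ with $T$ compact is Fredholm of index zero, invertibility reduces to injectivity, and injectivity is in turn ensured by the strict convexity of $R_{Q^2,\lambda}$ (which follows from $\lambda > 0$ together with the non-negativity of the RKHS-induced quadratic part $\bigl\|\int f(x)\Phi_k(z)\,dQ(u)\bigr\|_{\mathcal{H}_k}^2$).
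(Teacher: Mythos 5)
Your overall architecture --- write $\nabla^2\tilde R_{Q^2,\lambda} = 2\lambda I + T$ with $T$ the integral part, show $T$ is compact, and reduce invertibility to injectivity --- is the same as the paper's: the paper establishes compactness by citing Lemma A.5 of \citet{hable2012asymptotic} and then argues injectivity, concluding by the Fredholm alternative. Your explicit continuity bound is a fine, more self-contained substitute for the paper's citation of Lemma A.2(b) there. The genuine gap is in your step (iii). For the operator as defined, $\langle Tf,f\rangle_{\mathcal{H}_l} = \int k(z,z')\bigl(f(x)^2+f(x')^2\bigr)\,dQ^2(u,u')$, and the swap $(u,u')\leftrightarrow(u',u)$ merely shows the two summands are equal; it cannot turn $f(x)^2$ into the cross term $f(x)f(x')$ that the reproducing-kernel identity requires. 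Writing $k(z,z')=\langle\Phi_k(z),\Phi_k(z')\rangle_{\mathcal{H}_k}$, the quantity $\iint k(z,z')f(x)^2\,dQ(u)\,dQ(u')$ is the inner product of the two \emph{different} embeddings $\int f(x)^2\Phi_k(z)\,dQ(u)$ and $\int\Phi_k(z')\,dQ(u')$, not a squared norm, so it has no sign in general even for positive $Q$. Your claimed identity $\langle Tf,f\rangle = c\,\bigl\|\int f(x)\Phi_k(z)\,dQ(u)\bigr\|_{\mathcal{H}_k}^2$ is therefore false, and the coercivity $A\succeq 2\lambda I$ with $\|A^{-1}\|\leq(2\lambda)^{-1}$ does not follow by the route you describe.

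Your fallback is essentially the paper's proof and is the right way to proceed: index-zero Fredholm plus injectivity. Note, though, that the paper obtains injectivity from the expansion $\|\nabla^2\tilde R_{Q^2,\lambda}(f)\|_{\mathcal{H}_l}^2 = 4\lambda^2\|f\|_{\mathcal{H}_l}^2 + 4\lambda\langle f,Tf\rangle_{\mathcal{H}_l} + \|Tf\|_{\mathcal{H}_l}^2 > 4\lambda\langle f,Tf\rangle_{\mathcal{H}_l}\geq 0$, so it still confronts the sign of the same quadratic form; your alternative --- injectivity via strict convexity of $R_{Q^2,\lambda}$, whose second-order part is the genuine square $\bigl\|\int f(x)\Phi_k(z)\,dQ(u)\bigr\|_{\mathcal{H}_k}^2$ --- is cleaner, but it only applies once you identify the operator with the true Hessian of $R_{Q^2,\lambda}$, whose quadratic form is $2\lambda\|f\|_{\mathcal{H}_l}^2 + 2\int f(x)f(x')k(z,z')\,dQ^2(u,u')$, i.e., with crossed arguments $f(x')\Phi_l[x]+f(x)\Phi_l[x']$ rather than the matched ones appearing in the displayed definition. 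Make that identification explicit instead of asserting positive semi-definiteness of $T$ as literally written.
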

\begin{proof}[Proof of  \sref{Lemma}{lem:second_deriv}]
    By (b) in Lemma A.2 in \citep{hable2012asymptotic}, $\nabla^2 \tilde{R}_{Q^2,\lambda}$ is a continuous linear operator.
    In the following, we define $A:\mathcal{H}_l \to \mathcal{H}_l$ as $A(f) = \int k(z,z') f(x) \Phi_l[x](\cdot) + f(x') \Phi_l[x'](\cdot)) \,dQ^2(u,u')$.
    To show $\nabla^2 \tilde{R}_{Q^2,\lambda}$ is invertible, it is sufficient to show that (i) $\nabla^2\tilde{R}_{Q^2,\lambda}$ is injective, and (ii) $A$ is a compact operator.
    
    For the injectivity, we fix non-zero $f \in \mathcal{H}_l$ and obtain
    \begin{align*}
        &\|\nabla^2 \tilde{R}_{Q^2,\lambda}(f)\|_{\mathcal{H}_l}^2 \\
        &= \langle 2 \lambda f + A(f), 2 \lambda f + A(f)\rangle_{\mathcal{H}_l} \\
        & = 4 \lambda^2 \|f\|_{\mathcal{H}_l}^2 + 4 \lambda \langle f,A(f) \rangle_{\mathcal{H}_l} + \|A(f)\|_{\mathcal{H}_l}^2 \\
        & >  4 \lambda \langle f,A(f) \rangle_{\mathcal{H}_l} \\
        &=4 \lambda \left\langle f, \int k(z,z') f(x) \Phi[x] \,dQ^2(u,u') \right\rangle_{\mathcal{H}_l}  + 4 \lambda \left\langle f, \int k(z,z') f(x') \Phi[x'] \,dQ^2(u,u') \right\rangle_{\mathcal{H}_l} \\
        & = 4 \lambda  \int k(z,z') f(x)^2 \,dQ^2(u,u') + 4 \lambda  \int k(z,z') f(x')^2 \,dQ^2(u,u') \\
        & \geq 0.
    \end{align*}
    The last equality follows the property of $\Phi_l$ and the last inequality follows the ISPD property in \sref{Assumption}{asmp:k_idp}.
    
    For the compactness, we follow Lemma A.5 in \citep{hable2012asymptotic} and obtain that operators $(f \mapsto \int k(z,z') f(x) \Phi_l[x](\cdot) \,dQ^2(u,u'))$ and $(f \mapsto \int k(z,z') f(x') \Phi_l[x'](\cdot)\,dQ^2(u,u'))$ are compact.
\end{proof}

We define the Gateau derivative of $S$.
For a functional $F' \in \ell^\infty(\mathcal{G}_1 \cup \mathcal{G}_2)$, we define the following function
\begin{align*}
    &\nabla S_{Q^2,\lambda}(F') := - \nabla^2 \tilde{R}_{Q^2,\lambda}^{-1} \left( \int \partial_{f,1} h_{f_{Q^2}}(u,u') \Phi_l[x](\cdot) + \partial_{f,2}h_{f_{Q^2}}(u,u') \Phi_l[x'](\cdot)  \,d\iota(F')(u,u') \right),
\end{align*}
where $f_{Q^2} = S_\lambda(\iota^{-1}(Q^2))$ and {\color{black}$Q^2$ is a signed measure on $\inw \times \inw$.}
Then, we provide the following derivative theorem:
\begin{proposition}\label{prop:deriv_S}
    Suppose the assumptions in \sref{Theorem}{thm:asympnorm} hold.
    For $F \in B_S$, $F' \in \ell^\infty(\mathcal{G}_1 \cup \mathcal{G}_2)$, and $s \in \rr$ such that $F+sF' \in B_S$, $\nabla S_{\iota(F),\lambda}(F')$ is a Gateau-derivative of $S_\lambda$, namely, 
    \begin{align*}
        \lim_{s \to 0} \left\| \frac{S_\lambda(F+sF') - S_\lambda(F)}{s} - \nabla S_{\iota(F),\lambda}(F') \right\|_{\mathcal{H}_l} = 0.
    \end{align*}
\end{proposition}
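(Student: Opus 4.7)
The plan is to apply an implicit-function-theorem argument to the first-order optimality condition of the regularized risk. Set $f_s := S_\lambda(F+sF')$ and $f_0 := S_\lambda(F)$; by the strict convexity and coercivity of $R_{\iota(F),\lambda}(\cdot)$ together with Lemma 8.7.1 of \citet{luenberger1997optimization}, both minimizers are uniquely characterized by the stationary conditions
\begin{align*}
\nabla \tilde{R}_{\iota(F+sF'),\lambda}(f_s) = 0, \qquad \nabla \tilde{R}_{\iota(F),\lambda}(f_0) = 0.
\end{align*}
Using the linearity $\iota(F+sF') = \iota(F) + s\,\iota(F')$, I would subtract these two equations and solve the linearization for $(f_s - f_0)/s$.

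The crucial structural observation is that the partial derivatives
$\partial_{f,1}h_f(u,u') = -(y'-f(x'))k(z,z')$ and $\partial_{f,2}h_f(u,u') = -(y-f(x))k(z,z')$
are \emph{affine} in $f$, so the map $f \mapsto \nabla \tilde{R}_{\iota(F),\lambda}(f)$ is itself affine. Consequently the identity
\begin{align*}
\nabla \tilde{R}_{\iota(F),\lambda}(f_s) - \nabla \tilde{R}_{\iota(F),\lambda}(f_0) = \nabla^2 \tilde{R}_{\iota(F),\lambda}(f_s - f_0)
\end{align*}
holds \emph{exactly}, with no higher-order remainder --- this is the pleasant consequence of $h_f$ being quadratic in $f$. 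Substituting into the subtracted optimality condition and dividing by $s$ yields
\begin{align*}
\nabla^2 \tilde{R}_{\iota(F),\lambda}\!\left(\tfrac{f_s - f_0}{s}\right) = -\!\int \!\bigl[\partial_{f,1} h_{f_s}(u,u')\Phi_l[x](\cdot) + \partial_{f,2} h_{f_s}(u,u')\Phi_l[x'](\cdot)\bigr] d\iota(F')(u,u').
\end{align*}

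By \sref{Lemma}{lem:second_deriv}, $\nabla^2 \tilde{R}_{\iota(F),\lambda}$ is boundedly invertible, so I can solve for $(f_s - f_0)/s$ explicitly and it remains only to pass to the limit $s \to 0$. Continuity $\|f_s - f_0\|_{\mathcal{H}_l} \to 0$ follows from the coercive bound $\|f_s\|_{\mathcal{H}_l}^2 \leq \lambda^{-1} R_{\iota(F+sF'),\lambda}(f_0)$, which confines $\{f_s\}_{|s|\leq \varepsilon}$ to a norm-bounded set, and from uniqueness of $f_0$: any $\mathcal{H}_l$-weak accumulation point of $f_s$ as $s \to 0$ must satisfy the limiting stationary equation and hence equal $f_0$. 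Given this continuity, boundedness of $k$, boundedness of $\|\Phi_l[x]\|_{\mathcal{H}_l}$, and dominated convergence for the Bochner integral, the right-hand side converges in $\mathcal{H}_l$-norm to $-(\nabla^2 \tilde{R}_{\iota(F),\lambda})^{-1}\!\int [\partial_{f,1} h_{f_0}\Phi_l[x] + \partial_{f,2} h_{f_0}\Phi_l[x']] d\iota(F')$, which is precisely $\nabla S_{\iota(F),\lambda}(F')$. The main obstacle is the continuity of $s \mapsto f_s$ at $s = 0$: because $\iota(F')$ is in general only a signed measure, one must verify that the minimizer responds continuously to this perturbation; everything else is algebraic and essentially free thanks to the affinity of $\nabla \tilde{R}_{\iota(F),\lambda}$ in $f$.
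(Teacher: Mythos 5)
Your proposal is correct, and it reaches the same derivative formula as the paper, but by a more hands-on route. The paper's proof introduces the proxy operator $\Gamma(s,f,\lambda) = \nabla \tilde{R}_{\iota(F)+s\iota(F'),\lambda}(f)$, checks its partial derivatives via Lemma A.2 of \citet{hable2012asymptotic}, and then invokes the Banach-space implicit function theorem to produce a smooth path $\varphi_\lambda(s)$ of minimizers, reading off $\nabla_s\varphi_\lambda(0) = -(\nabla_f\Gamma)^{-1}(\nabla_s\Gamma)$. You instead subtract the two first-order conditions directly and exploit the structural fact that $h_f$ is quadratic in $f$, so the stationarity map $f\mapsto\nabla\tilde{R}_{\iota(F),\lambda}(f)$ is affine and the linearization is \emph{exact}, with no second-order remainder to control; the problem then reduces to inverting the same Hessian operator (\sref{Lemma}{lem:second_deriv}, which both proofs rely on) and verifying continuity of $s\mapsto S_\lambda(F+sF')$ at $s=0$. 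What each buys: the paper's argument gets the existence and smoothness of the implicit path for free at the cost of verifying the IFT hypotheses abstractly, while yours is more elementary and makes transparent \emph{why} the delta-method linearization is loss-free here. Two small points on your continuity step: the coercive bound $\lambda\|f_s\|_{\mathcal{H}_l}^2 \leq R_{\iota(F+sF'),\lambda}(f_0)$ implicitly uses that $\int h_{f_s}\,d\iota(F+sF') \geq 0$, which holds because $\iota(F+sF')\in\iota(B_S)$ is a product measure $Q\otimes Q$ and $\int h_f \,d(Q\otimes Q) = \|\int (y-f(x))k(z,\cdot)\,dQ\|_{\mathcal{H}_k}^2$ — worth stating. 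And your accumulation-point argument only yields weak convergence $f_s\rightharpoonup f_0$; that is in fact enough (weak convergence gives pointwise convergence $f_s(x)\to f_0(x)$ via the reproducing property, dominated convergence then gives norm convergence of the Bochner integral on the right-hand side, and bounded invertibility of the Hessian upgrades the conclusion to $\|f_s-f_0\|_{\mathcal{H}_l}=O(s)$), but the chain of implications should be spelled out in that order rather than asserting norm continuity directly from uniqueness.
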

\begin{proof}[Proof of  \sref{Proposition}{prop:deriv_S}]
This proof has the following two steps, (i) define a proxy operator $\Gamma$, then (ii) prove the statement by the implicit function theorem.

\textbf{(i) Define $\Gamma$}:
Note that $\iota(F')$ exists since $F+sF' \in B_S$ implies $F' \in B_0$.
We define the following operator $\Gamma(s,f,\lambda):\mathcal{H}_l \to \mathcal{H}_l$ for $f \in \mathcal{H}_l$:
\begin{eqnarray*}
    \Gamma(s,f,\lambda) &:=& \nabla \tilde{R}_{\iota(F) + s \iota(F'), \lambda } \\
    &=& 2\lambda f + \int \partial_{f,1} h_f(u,u') \Phi_l[x](\cdot) + \partial_{f,2}h_f(u,u') \Phi_l[x'](\cdot) \,d\iota(F)(u,u') \\
    && \quad + s\int \partial_{f,1} h_f(u,u') \Phi_l[x](\cdot) + \partial_{f,2}h_f(u,u') \Phi_l[x'](\cdot) \,d\iota(F')(u,u').
\end{eqnarray*}
For simple derivatives, Lemma A.2 in \citep{hable2012asymptotic} provides
\begin{align*}
    \nabla_s \Gamma(s,f,\lambda) = \int \partial_{f,1} h_f(u,u') \Phi_l[x](\cdot) + \partial_{f,2}h_f(u,u') \Phi_l[x'](\cdot) \,d\iota(F')(u,u'),
\end{align*}
and 
\begin{align*}
    \nabla_f \Gamma(s,f,\lambda) = \nabla^2 \tilde{R}_{\iota(F) + s \iota(F'),\lambda}.
\end{align*}

\textbf{(ii) Apply the implicit function theorem}: 
By its definition and the optimal conditions, we have
\begin{align*}
    \Gamma(s,f,\lambda ) = 0 \Leftrightarrow f = S_\lambda(\iota(F) + s \iota(F')).
\end{align*}
Also, we obtain
\begin{align*}
    \nabla_f\Gamma(0, S_\lambda(F), \lambda) = \nabla^2 \tilde{R}_{\iota(F),\lambda}.
\end{align*}
Then, for each $\lambda > 0$, by the implicit function theorem, there exists a smooth map $\varphi_\lambda:\rr \to \mathcal{H}_l$ such that
\begin{align*}
     \Gamma(s, \varphi_\lambda(s), \lambda) = 0, \; \forall s,
\end{align*}
and it satisfies
\begin{align*}
    \nabla_s \varphi_\lambda(0) = - \left(  \nabla_f\Gamma(0,\varphi_\lambda(0), \lambda) \right)^{-1} \left(  \nabla_s \Gamma(0,\varphi_\lambda(0),\lambda) \right) = \nabla S_{\iota(F),\lambda}(F').
\end{align*}
Also, we have $\varphi_\lambda(s) = S(Q^2 + s \mu^2)$.
Then, we have
\begin{align*}
    &\lim_{s \to 0} \left\| \frac{S_\lambda(F + s F') - S_\lambda(F')}{s} -  \nabla S_{\iota(F),\lambda}(F')\right\|_{\mathcal{H}_l} =  \lim_{s \to 0} \left\| \frac{\varphi_\lambda(s) - \varphi_\lambda(0)}{s} -   \nabla_s \varphi_\lambda(0)\right\|_{\mathcal{H}_l} = 0.
\end{align*}
Then, we obtain the statement.
\end{proof}

Now, we are ready to prove \sref{Theorem}{thm:asympnorm} and  \sref{Theorem}{thm:asympnorm_U}.
\begin{proof}[Proof of  \sref{Theorem}{thm:asympnorm_U}]
As a preparation, we mention that $S_\lambda$ is differentiable in the Hadamard sense, which is Gateau differentiable by  \sref{Proposition}{prop:deriv_S}.
Lemma A.7 and A.8 in \citep{hable2012asymptotic} show that $\nabla S_{\iota (F), \lambda, \iota(G)}$ is Hadamard-differentiable for any $\lambda$, $G$ and $F$.

Then, we apply the functional delta method.
As shown in  \sref{Theorem}{thm:u-process} and  \sref{Lemma}{lem:bracket_H}, we have
\begin{align*}
    \sqrt{n}(\iota^{-1}(U_n^2) - \iota^{-1}(P^2))   \rightsquigarrow 2 \mathbb{G}_{P^1}.
\end{align*}
Hence, we obtain
\begin{align*}
    &\sqrt{n}( (\lambda_0 / \lambda) \iota^{-1}(U_n^2) - \iota^{-1}(P^2)) = \frac{\lambda_0}{\lambda} \sqrt{n}(\iota^{-1}(U_n^2) - \iota^{-1}(P^2)) + \frac{\sqrt{n}(\lambda - \lambda_0)}{\lambda} \rightsquigarrow 2 \mathbb{G}_{P^1},
\end{align*}
since $\lambda - \lambda_0 = o(n^{-1/2})$.
Utilizing the result, we can obtain
\begin{align*}
    &\sqrt{n}(\hat{f} - f^*_{\lambda_0}) = \sqrt{n}(S_{\lambda_0}((\lambda_0/\lambda)\iota^{-1}(U_n^2)) - S_{\lambda_0} (\iota^{-1}(P^2))) + o_P(1) \rightsquigarrow \nabla S_{P^2,\lambda_0}(2\mathbb{G}_{P^1}),
\end{align*}
in $\ell^\infty(\mathbb{H})$.
The convergence follows the functional delta method.
\end{proof}

\begin{proof}[Proof of \sref{Theorem}{thm:asympnorm}]
This proof is completed by substituting the Central Limit Theorem part (\sref{Theorem}{thm:u-process}) of the proof of \sref{Theorem}{thm:asympnorm_U}.
From Section 3 in \citep{akritas1986empirical}, the V- and U- processes have the same limit distribution asymptotically, so the same result holds.
\end{proof}

%%%

\subsection{Proof of Proposition \ref{prop:efficiency}}
\begin{proof}[Proof of Proposition \ref{prop:efficiency}]
For short, we write $\Lambda = \Lambda_{\mathrm{min}}$ and $\Lambda' = \Lambda_{\mathrm{max}}$.

We start by investigating $H$ and $\check{H}$.
By using the independent property between $U=(X,Y,Z)$ and $U'=(X',Y',Z')$, we have
\begin{align*}
    H &=   \ep[\nabla^2_{\theta} (Y-f_{\theta}(X))(Y'-f_{\theta}(X'))k(Z,Z')] \\
    &=   \ep[\nabla_\theta^2 f_{\theta^*}(X) \varepsilon' k(Z,Z')] + \ep[\nabla_\theta^2 f_{\theta^*}(X') \varepsilon k(Z,Z')] + 2 \ep[\nabla_\theta f_{\theta^*}(X) k(Z,Z')\nabla_\theta f_{\theta^*}(X')] \\
    &=2 \ep_{Z,Z'}[ \ep_X[\nabla_\theta f_{\theta^*}(X) \mid Z] k(Z,Z')\nabla_\theta \ep_{X'}[ f_{\theta^*}(X') \mid Z']] \\
    &= \iint q(z) q(z') k(z,z') \mathrm{d}P(z,z')\\    &= \iint q(z)p(z) q(z') p(z')k(z,z') \mathrm{d}z \mathrm{d}z',
\end{align*}
where $q: \mathcal{Z} \to \mathbb{R}^d, z \mapsto \ep_X[\nabla_\theta f_{\theta^*}(X) \mid z] $ with the parameter dimension $d$.
The third equality follows that $\varepsilon = Y - f_{\theta^*}(X) $ is an independent noise variable, and the last equality follows the independent property between $Z$ and $Z'$.
By the same way, $\check{H}$ is rewritten as
\begin{align*}
    \check{H} = \iint q(z)p(z) q(z') p(z') \mathrm{d}z \mathrm{d}z'.
\end{align*}
Based on these facts, we obtain the following form
\begin{align*}
    H = \check{H} \circ W,
\end{align*}
where $W$ is a $d \times d$ matrix whose all elements belong to $[\Lambda, \Lambda']$, and $\circ$ denotes the Haramard product of matrices.
In words, the elements of $H$ are obtained by multiplying some coefficients to their corresponding elements of $\check{H}$, and the coefficients belongs to $[\Lambda, \Lambda']$.
By the same calculation, we obtain $\ep_U[\ep^{2}_{U'}[{h}_{\theta^*}(U,U')]] = \ep_U[\ep^{2}_{U'}[\check{h}_{\theta^*}(U,U')]]w^2$ with some $w \in [\Lambda, \Lambda']$.

Using the result, we study $H^{-1}$ with the decomposition
\begin{align*}
    H^{-1} = \check{H}^{-1} + (H^{-1} -  \check{H}^{-1}).
\end{align*}
Let $\mathbb{I}$ be a $d \times d$ matrices whose all elements are $1$.
We have the following difference
\begin{align*}
    H^{-1} - \check{H}^{-1}
    & = H^{-1} (\check{H} - H) \check{H}^{-1}\\
    & = H^{-1} (\check{H} - W \circ H) \check{H}^{-1} \\
    & = H^{-1} ( (\mathbb{I} - W) \circ \check{H} ) \check{H}^{-1} \\
    & = H^{-1} \check{H} \check{H}^{-1} (\mathbb{I} \circ (\mathbb{I} - W)) \\
    &= H^{-1}\circ (\mathbb{I} - W).
\end{align*}
%For the second last equality, we follow ***.

Finally, we study $\Sigma_V$.
It is decomposed as
\begin{align*}
    \Sigma_V &= 4{H}^{-1}\mathrm{diag}(\ep_U[\ep^{2}_{U'}[{h}_{\theta^*}(U,U')]]){H}^{-1} \\
    &=4(\check{H}^{-1} + (H^{-1} -  \check{H}^{-1})) \\
    & \quad \times (\mathrm{diag}(\ep_U[\ep^{2}_{U'}[\check{h}_{\theta^*}(U,U')]]) + (\mathrm{diag}(\ep_U[\ep^{2}_{U'}[{h}_{\theta^*}(U,U')]]) - \mathrm{diag}(\ep_U[\ep^{2}_{U'}[\check{h}_{\theta^*}(U,U')]])))\\
    & \quad \times (\check{H}^{-1} + (H^{-1} -  \check{H}^{-1})) \\
    &=4\check{H}^{-1}  \mathrm{diag}(\ep_U[\ep^{2}_{U'}[\check{h}_{\theta^*}(U,U')]])\check{H}^{-1} \\
    &\quad + 4(H^{-1} -  \check{H}^{-1})\mathrm{diag}(\ep_U[\ep^{2}_{U'}[\check{h}_{\theta^*}(U,U')]]) \check{H}^{-1} \\
    &\quad + 8 \check{H}^{-1}(\mathrm{diag}(\ep_U[\ep^{2}_{U'}[{h}_{\theta^*}(U,U')]]) - \mathrm{diag}(\ep_U[\ep^{2}_{U'}[\check{h}_{\theta^*}(U,U')]]))\check{H}^{-1} \\
    &\quad + 8\check{H}^{-1}(\mathrm{diag}(\ep_U[\ep^{2}_{U'}[{h}_{\theta^*}(U,U')]]) - \mathrm{diag}(\ep_U[\ep^{2}_{U'}[\check{h}_{\theta^*}(U,U')]]))(H^{-1} -  \check{H}^{-1}) \\
    & \quad + 4(H^{-1} -  \check{H}^{-1})^2\mathrm{diag}(\ep_U[\ep^{2}_{U'}[\check{h}_{\theta^*}(U,U')]]) \\
    &\quad + 4(H^{-1} -  \check{H}^{-1})^2(\mathrm{diag}(\ep_U[\ep^{2}_{U'}[{h}_{\theta^*}(U,U')]]) - \mathrm{diag}(\ep_U[\ep^{2}_{U'}[\check{h}_{\theta^*}(U,U')]])) \\
    &= \check{\Sigma}_V   + 8 (w^2 - 1)\mathrm{diag}(\ep_U[\ep^{2}_{U'}[\check{h}_{\theta^*}(U,U')]])\check{H}^{-2} \\
    &\quad + 4(2 - w^2)\mathrm{diag}(\ep_U[\ep^{2}_{U'}[\check{h}_{\theta^*}(U,U')]]) \check{H}^{-1} (H^{-1} -  \check{H}^{-1}) \\
    &\quad +  4(H^{-1} -  \check{H}^{-1})^2 \mathrm{diag}(\ep_U[\ep^{2}_{U'}[{h}_{\theta^*}(U,U')]]) \\
    &= \check{\Sigma}_V   + 8 (w^2 - 1)\mathrm{diag}(\ep_U[\ep^{2}_{U'}[\check{h}_{\theta^*}(U,U')]])\check{H}^{-2} \\
    & \quad + 4(2 - w^2)\mathrm{diag}(\ep_U[\ep^{2}_{U'}[\check{h}_{\theta^*}(U,U')]]) \check{H}^{-1} ( H^{-1} \circ (\mathbb{I} - W)) \\
    & \quad + 4 w^2\mathrm{diag}(\ep_U[\ep^{2}_{U'}[ \check{h}_{\theta^*}(U,U')]])(H^{-1}\circ (\mathbb{I} - W))^2.
\end{align*}
Using the statement, we study the Frobenisu norm $\|\Sigma_V\|_F$. 
As preparation, we set $c := \mathrm{diag}(\ep_U[\ep^{2}_{U'}[\check{h}_{\theta^*}(U,U')]])$, and recall the inequality $\|A \circ B\|_F \leq \|A\|_F \|B\|_F$.
Also, recall the definition $\delta_\Lambda := \max\{|\Lambda'-1|, |1 - \Lambda|\}$.
We obtain
\begin{align*}
    \|\Sigma_V\|_F & \leq \|\check{\Sigma}_V\|_F  + 8 (w^2 -1 )c \|\check{H}^{-2}\|_F + 4 (2-w^2) c \|\check{H}^{-1}\|_F^2 \|\mathbb{I} - W\|_F + 4w^2 c \|\check{H}^{-1}\|_F^2 \|\mathbb{I} - W\|_F^2 \\
    &\leq \|\check{\Sigma}_V\|_F  + 8 \Lambda' \delta_\Lambda c \|\check{H}^{-2}\|_F + 8 c \|\check{H}^{-1}\|_F^2 d \delta_\Lambda + 4(\Lambda')^2 c \|\check{H}^{-1}\|_F^2 d^2 \delta_\Lambda^2 \\
    &= \|\check{\Sigma}_V\|_F + O(\Lambda'\delta_\Lambda + (\Lambda'\delta_\Lambda)^2).
\end{align*}
Landau's Big O notation in the last equality picks up the terms depends only $\Lambda$ and $\Lambda'$.
\end{proof}

\section{Related Studies Beyond IV Regression Setting} \label{sec:related-work-2}

Beyond the IV regression, there are numerous prior studies that are related to ours, especially in policy evaluation, reinforcement learning, and causal inference.
Firstly, the idea of ``kernel loss'' was proposed in \citep{feng2019kernel} to estimate the value function in reinforcement learning. 
Similar ideas have been used to estimate the importance ratio of two state or state-action distributions in \citep{Liu18:Infinite} and \citep{uehara2020minimax}, and to estimate the average policy effect (APE) and policy learning in \citep{kallus2018balanced,Kallus20:Causal}.
Secondly, in the area of causal inference, \citep{wong2018kernel} employed a similar technique to estimate an average treatment effect. 
Despite the methodological similarity to our work, these works did not consider the IV regression setting and the analytical cross validation error. We will first introduce the objective functions of the aforementioned works for a better understanding of the connection and then highlight the differences, challenges, and novelties of our work.

Although estimating different subjects $f$, \citep{Liu18:Infinite} and \citep{feng2019kernel} employ similar population objective functions  in the following form:
\begin{align}\label{eq:kernel-loss}
    \min_{f} \ep_X\ep_{X'}[((\mathcal{A}f)(X)-f(X))k(X,X')((\mathcal{A}f)(X')-f(X'))],
\end{align}
where $\mathcal{A}$ is a task-specific operator acting on the subject $f$ and $X'$ is an independent copy of $X$. More specifically, \citep{Liu18:Infinite} aim to estimate the importance ratio of state distributions of two policies as $f(X)$.
In the task of value function estimation by \citep{feng2019kernel}, $f(X)$ is the value function. In both works, $X$ represents the state variable in the context of reinforcement learning.
Further, \citep{kallus2018balanced} studies the estimation of the APE similar to that of the average treatment effect, which estimate an average effect of a provided policy for a treatment conditioned on a covariate.

Compared with our population and empirical risks \eqref{eq:mmr-kernel} and \eqref{eq:quadratic-form}, 
the kernel loss \eqref{eq:kernel-loss}
has a similar quadratic form.
The difference is that the kernel loss
has variables involved in both the kernel function and the residual, whereas the kernel function in our objective depends on the instrument $Z$ which does not appear in the residual. Besides, our analytical cross validation error was not studied by these works and the similar forms of the objectives allow to adapt our result to their approaches.

The difference in losses also simplifies the estimation in the related work and requires us to perform new analyses. Specifically, the consistency of the estimators in the above related work holds under mild conditions. 
That is, in the estimation of the value function, minimizing the population objective function to zero guarantees a solution for the Bellman equation, which is consistent to the value function according to the unique solution property of the Bellman equation \citep{feng2019kernel}; for the importance ratio and the APE estimation, the consistency holds under mild conditions of data distributions \citep{Liu18:Infinite,kallus2018balanced}.
In contrast, the estimator obtained by minimizing the MMR risk \eqref{eq:mmr-kernel} to zero is hardly consistent to the true $f(X)$ under mild conditions, because there can be $\hat{f}\neq f$ satisfying $\ep[Y-\hat{f}(X)\,|\,Z]=0$ almost surely.
Therefore, we first introduce the completeness condition in \sref{Assumption}{asmp:opt_ident} for the estimation, which is classical in the IV regression literature and is needed to identify $f(X)$ from the CMR.
Second, a new theoretical analysis is necessary to clarify the nature of the estimation on $f(X)$, such as consistency and asymptotic normality.
We develop a novel theory for this point in \sref{Section}{sec:theory}, which has not been studied by these related works. 

\bibliographystyle{plain}
\bibliography{main}

\begin{thebibliography}{10}

\bibitem{akritas1986empirical}
Michael~G Akritas et~al.
\newblock Empirical processes associated with {V}-statistics and a class of
  estimators under random censoring.
\newblock {\em The Annals of Statistics}, 14(2):619--637, 1986.

\bibitem{Angrist96:IV}
Joshua~D. Angrist, Guido~W. Imbens, and Donald~B. Rubin.
\newblock Identification of causal effects using instrumental variables.
\newblock {\em Journal of the American Statistical Association},
  91(434):444--455, 1996.

\bibitem{Angrist08:Harmless}
Joshua~D. Angrist and J\"orn-Steffen Pischke.
\newblock {\em Mostly Harmless Econometrics: An Empiricist's Companion}.
\newblock Princeton University Press, 2008.

\bibitem{arcones1993limit}
Miguel~A Arcones and Evarist Gine.
\newblock Limit theorems for {U}-processes.
\newblock {\em The Annals of Probability}, pages 1494--1542, 1993.

\bibitem{aronszajn50reproducing}
Nachman Aronszajn.
\newblock Theory of reproducing kernels.
\newblock {\em Transactions of the American Mathematical Society},
  68(3):337--404, 1950.

\bibitem{babii2020honest}
Andrii Babii.
\newblock Honest confidence sets in nonparametric iv regression and other
  ill-posed models.
\newblock {\em Econometric Theory}, 36(4):658--706, 2020.

\bibitem{bailey2020finite}
James~P Bailey, Gauthier Gidel, and Georgios Piliouras.
\newblock Finite regret and cycles with fixed step-size via alternating
  gradient descent-ascent.
\newblock In {\em Conference on Learning Theory}, pages 391--407. PMLR, 2020.

\bibitem{Bennett20:variational}
Andrew Bennett and Nathan Kallus.
\newblock The variational method of moments, 2020.

\bibitem{bennett2020variational}
Andrew Bennett and Nathan Kallus.
\newblock The variational method of moments.
\newblock {\em arXiv preprint arXiv:2012.09422}, 2020.

\bibitem{Bennett19:DeepGMM}
Andrew Bennett, Nathan Kallus, and Tobias Schnabel.
\newblock Deep generalized method of moments for instrumental variable
  analysis.
\newblock In {\em NeurIPS 32}, pages 3564--3574. Curran Associates, Inc., 2019.

\bibitem{Berlinet04:RKHS}
Alain Berlinet and Christine Thomas-Agnan.
\newblock {\em Reproducing Kernel {H}ilbert Spaces in Probability and
  Statistics}.
\newblock Kluwer Academic Publishers, 2004.

\bibitem{Blundell07:Semi-NIV}
Richard Blundell, Xiaohong Chen, and Dennis Kristensen.
\newblock Semi-nonparametric {IV} estimation of shape-invariant {E}ngel curves.
\newblock {\em Econometrica}, 75(6):1613--1669, 2007.

\bibitem{Burgess20:GIV}
Stephen Burgess, Christopher~N. Foley, Elias Allara, James~R. Staley, and
  Joanna M.~M. Howson.
\newblock A robust and efficient method for {M}endelian randomization with
  hundreds of genetic variants.
\newblock {\em Nature Communications}, 11(1):376, 2020.

\bibitem{Burgess17:Mendelian}
Stephen Burgess, Dylan~S Small, and Simon~G Thompson.
\newblock A review of instrumental variable estimators for {M}endelian
  randomization.
\newblock {\em Statistical Methods in Medical Research}, 26(5):2333--2355,
  2017.

\bibitem{caponnetto2007optimal}
Andrea Caponnetto and Ernesto De~Vito.
\newblock Optimal rates for the regularized least-squares algorithm.
\newblock {\em Foundations of Computational Mathematics}, 7(3):331--368, 2007.

\bibitem{Card99:Labor}
David Card.
\newblock {The causal effect of education on earnings}.
\newblock In O.~Ashenfelter and D.~Card, editors, {\em {Handbook of Labor
  Economics}}, volume~3 of {\em Handbook of Labor Economics}, chapter~30, pages
  1801--1863. Elsevier, 1999.

\bibitem{Carrasco07:LIP}
Marine Carrasco, Jean-Pierre Florens, and Eric Renault.
\newblock Linear inverse problems in structural econometrics estimation based
  on spectral decomposition and regularization.
\newblock In J.J. Heckman and E.E. Leamer, editors, {\em Handbook of
  Econometrics}, volume~6B, chapter~77. Elsevier, 1 edition, 2007.

\bibitem{chen2021adaptive}
Xiaohong Chen, Timothy Christensen, and Sid Kankanala.
\newblock Adaptive estimation and uniform confidence bands for nonparametric
  iv.
\newblock {\em arXiv preprint arXiv:2107.11869}, 2021.

\bibitem{chen2015optimal}
Xiaohong Chen and Timothy~M Christensen.
\newblock Optimal uniform convergence rates and asymptotic normality for series
  estimators under weak dependence and weak conditions.
\newblock {\em Journal of Econometrics}, 188(2):447--465, 2015.

\bibitem{chen2018optimal}
Xiaohong Chen and Timothy~M Christensen.
\newblock Optimal sup-norm rates and uniform inference on nonlinear functionals
  of nonparametric iv regression.
\newblock {\em Quantitative Economics}, 9(1):39--84, 2018.

\bibitem{Chen18:NIV-Uniform}
Xiaohong Chen and Timothy~M. Christensen.
\newblock Optimal sup-norm rates and uniform inference on nonlinear functionals
  of nonparametric {IV} regression.
\newblock {\em Quantitative Economics}, 9(1):39--84, 2018.

\bibitem{Chen12:NIV}
Xiaohong Chen and Demian Pouzo.
\newblock Estimation of nonparametric conditional moment models with possibly
  nonsmooth generalized residuals.
\newblock {\em Econometrica}, 80(1):277--321, 2012.

\bibitem{Darolles11:NIV}
S.~Darolles, Y.~Fan, J.~P. Florens, and E.~Renault.
\newblock Nonparametric instrumental regression.
\newblock {\em Econometrica}, 79(5):1541--1565, 2011.

\bibitem{daskalakis2018limit}
Constantinos Daskalakis and Ioannis Panageas.
\newblock The limit points of (optimistic) gradient descent in min-max
  optimization.
\newblock In {\em Proceedings of the 32nd International Conference on Neural
  Information Processing Systems}, pages 9256--9266, 2018.

\bibitem{Xavier11:Completeness}
Xavier D'Haultfoeuille.
\newblock On the completeness condition in nonparametric instrumental problems.
\newblock {\em Econometric Theory}, 27(3):460--471, 2011.

\bibitem{Dikkala20:Minimax}
Nishanth Dikkala, Greg Lewis, Lester Mackey, and Vasilis Syrgkanis.
\newblock Minimax estimation of conditional moment models.
\newblock {\em CoRR}, abs/2006.07201, 2020.

\bibitem{Donald01:ChooseIV}
Stephen~G. Donald and Whitney~K. Newey.
\newblock Choosing the number of instruments.
\newblock {\em Econometrica}, 69(5):1161--1191, 2001.

\bibitem{feng2019kernel}
Yihao Feng, Lihong Li, and Qiang Liu.
\newblock A kernel loss for solving the bellman equation.
\newblock {\em NeurIPS}, 32, 2019.

\bibitem{flannery1992numerical}
Brian~P Flannery, William~H Press, Saul~A Teukolsky, and William Vetterling.
\newblock Numerical recipes in {C}.
\newblock {\em Press Syndicate of the University of Cambridge, New York},
  24:78, 1992.

\bibitem{Fukumizu04:DR}
Kenji Fukumizu, Francis~R. Bach, and Michael~I. Jordan.
\newblock Dimensionality reduction for supervised learning with reproducing
  kernel {H}ilbert spaces.
\newblock {\em JMLR}, 5:73--99, December 2004.

\bibitem{greene2003econometric}
William~H Greene.
\newblock {\em Econometric analysis}.
\newblock Pearson Education India, 2003.

\bibitem{hable2012asymptotic}
Robert Hable.
\newblock Asymptotic normality of support vector machine variants and other
  regularized kernel methods.
\newblock {\em Journal of Multivariate Analysis}, 106:92--117, 2012.

\bibitem{Hall05:GMM}
A.R. Hall.
\newblock {\em Generalized Method of Moments}.
\newblock Advanced texts in econometrics. Oxford University Press, 2005.

\bibitem{Hall05:NIV}
Peter Hall and Joel~L. Horowitz.
\newblock Nonparametric methods for inference in the presence of instrumental
  variables.
\newblock {\em The Annals of Statistics}, 33(6):2904--2929, 12 2005.

\bibitem{Hansen82:GMM}
Lars~Peter Hansen.
\newblock Large sample properties of generalized method of moments estimators.
\newblock {\em Econometrica}, 50(4):1029--1054, 1982.

\bibitem{Hartford17:DIV}
Jason Hartford, Greg Lewis, Kevin Leyton-Brown, and Matt Taddy.
\newblock Deep {IV}: A flexible approach for counterfactual prediction.
\newblock In {\em Proceedings of the 34th ICML}, volume~70, pages 1414--1423.
  PMLR, 2017.

\bibitem{Hartford20:WeakIV}
Jason~S. Hartford, Victor Veitch, Dhanya Sridhar, and Kevin Leyton{-}Brown.
\newblock Valid causal inference with (some) invalid instruments.
\newblock {\em CoRR}, abs/2006.11386, 2020.

\bibitem{hartwig2017robust}
Fernando~Pires Hartwig, George Davey~Smith, and Jack Bowden.
\newblock Robust inference in summary data {M}endelian randomization via the
  zero modal pleiotropy assumption.
\newblock {\em International journal of epidemiology}, 46(6):1985--1998, 2017.

\bibitem{10.2307/2282952}
Wassily Hoeffding.
\newblock Probability inequalities for sums of bounded random variables.
\newblock {\em Journal of the American Statistical Association},
  58(301):13--30, 1963.

\bibitem{horowitz2011applied}
Joel~L Horowitz.
\newblock Applied nonparametric instrumental variables estimation.
\newblock {\em Econometrica}, 79(2):347--394, 2011.

\bibitem{horowitz2012uniform}
Joel~L Horowitz and Sokbae Lee.
\newblock Uniform confidence bands for functions estimated nonparametrically
  with instrumental variables.
\newblock {\em Journal of Econometrics}, 168(2):175--188, 2012.

\bibitem{Ivanov02:Ill-posed}
V.~K. Ivanov, Vladimir~V. Vasin, and V.P. Tanana.
\newblock {\em Theory of Linear Ill-posed Problems and Its Applications}.
\newblock Inverse and ill-posed problems series. VSP, 2002.

\bibitem{kallus2018balanced}
Nathan Kallus.
\newblock Balanced policy evaluation and learning.
\newblock {\em NeurIPS}, 31, 2018.

\bibitem{Kallus20:Causal}
Nathan Kallus.
\newblock Generalized optimal matching methods for causal inference.
\newblock {\em JMLR}, 21(62):1--54, 2020.

\bibitem{kariya2004generalized}
Takeaki Kariya and Hiroshi Kurata.
\newblock {\em Generalized least squares}.
\newblock John Wiley \& Sons, 2004.

\bibitem{klungel2015instrumental}
OH~Klungel, M~Jamal~Uddin, A~de~Boer, SV~Belitser, RH~Groenwold, and KC~Roes.
\newblock Instrumental variable analysis in epidemiologic studies: an overview
  of the estimation methods.
\newblock {\em Pharm Anal Acta}, 6(353):2, 2015.

\bibitem{Kuang20:Ivy}
Zhaobin Kuang, Frederic Sala, Nimit Sohoni, Sen Wu, Aldo C\'ordova-Palomera,
  Jared Dunnmon, James Priest, and Christopher Re.
\newblock Ivy: Instrumental variable synthesis for causal inference.
\newblock In {\em Proceedings of the Twenty Third International Conference on
  Artificial Intelligence and Statistics}, volume 108 of {\em Proceedings of
  Machine Learning Research}, pages 398--410. PMLR, 2020.

\bibitem{lecun1998gradient}
Yann LeCun, L{\'e}on Bottou, Yoshua Bengio, and Patrick Haffner.
\newblock Gradient-based learning applied to document recognition.
\newblock {\em Proceedings of the IEEE}, 86(11):2278--2324, 1998.

\bibitem{Lewis18:AGMM}
Greg Lewis and Vasilis Syrgkanis.
\newblock Adversarial generalized method of moments, 03 2018.

\bibitem{Liao20:NeuralSEM}
Luofeng Liao, You{-}Lin Chen, Zhuoran Yang, Bo~Dai, Zhaoran Wang, and Mladen
  Kolar.
\newblock Provably efficient neural estimation of structural equation model: An
  adversarial approach.
\newblock In {\em NeurIPS 33}. Curran Associates, Inc., 2020.

\bibitem{Liu18:Infinite}
Qiang Liu, Lihong Li, Ziyang Tang, and Dengyong Zhou.
\newblock Breaking the curse of horizon: Infinite-horizon off-policy
  estimation.
\newblock In {\em NeurIPS}, volume~31. Curran Associates, Inc., 2018.

\bibitem{luenberger1997optimization}
David~G Luenberger.
\newblock {\em Optimization by vector space methods}.
\newblock John Wiley \& Sons, 1997.

\bibitem{mann1943}
H.~B. Mann and A.~Wald.
\newblock On stochastic limit and order relationships.
\newblock {\em The Annals of Mathematical Statistics}, 14(3):217--226, 09 1943.

\bibitem{Mastouri21:ProxyKernel}
Afsaneh Mastouri, Yuchen Zhu, Limor Gultchin, Anna Korba, Ricardo Silva,
  Matt~J. Kusner, Arthur Gretton, and Krikamol Muandet.
\newblock Proximal causal learning with kernels: Two-stage estimation and
  moment restriction.
\newblock In {\em Proceedings of the 38th International Conference on Machine
  Learning}, volume 139 of {\em Proceedings of Machine Learning Research},
  pages 7512--7523. {PMLR}, 2021.

\bibitem{meehan2014role}
Meghan Meehan and Sue Penckofer.
\newblock The role of vitamin {D} in the aging adult.
\newblock {\em Journal of aging and gerontology}, 2(2):60, 2014.

\bibitem{Muandet17:KME}
Krikamol Muandet, Kenji Fukumizu, Bharath Sriperumbudur, and Bernhard
  Sch\"olkopf.
\newblock Kernel mean embedding of distributions: A review and beyond.
\newblock {\em Foundations and Trends in Machine Learning}, 10(1-2):1--141,
  2017.

\bibitem{Muandet20:KCM}
Krikamol Muandet, Wittawat Jitkrittum, and Jonas K\"{u}bler.
\newblock Kernel conditional moment test via maximum moment restriction.
\newblock In {\em Proceedings of the 36th Conference on UAI}, volume 124 of
  {\em Proceedings of Machine Learning Research}, pages 41--50. PMLR, 2020.

\bibitem{Muandet19:DualIV}
Krikamol Muandet, Arash Mehrjou, Si~Kai Lee, and Anant Raj.
\newblock Dual instrumental variable regression.
\newblock In {\em NeurIPS 33}. Curran Associates, Inc., 2020.

\bibitem{Newey93:CMR}
Whitney Newey.
\newblock Efficient estimation of models with conditional moment restrictions.
\newblock In {\em Handbook of Statistics}, volume~11, chapter~16, pages
  419--454. Elsevier, 1993.

\bibitem{NEWEY19942111}
Whitney~K. Newey and Daniel McFadden.
\newblock Chapter 36 large sample estimation and hypothesis testing.
\newblock In {\em Handbook of Econometrics}, volume~4, pages 2111 -- 2245.
  Elsevier, 1994.

\bibitem{Newey03:NIV}
Whitney~K. Newey and James~L. Powell.
\newblock Instrumental variable estimation of nonparametric models.
\newblock {\em Econometrica}, 71(5):1565--1578, 2003.

\bibitem{scholkopf2001generalized}
Bernhard Sch{\"o}lkopf, Ralf Herbrich, and Alex~J Smola.
\newblock A generalized representer theorem.
\newblock In {\em International conference on computational learning theory},
  pages 416--426. Springer, 2001.

\bibitem{Scholkopf01:Representer}
Bernhard Sch{\"o}lkopf, Ralf Herbrich, and Alex~J. Smola.
\newblock A generalized representer theorem.
\newblock In David Helmbold and Bob Williamson, editors, {\em Computational
  Learning Theory}, pages 416--426, Berlin, Heidelberg, 2001. Springer Berlin
  Heidelberg.

\bibitem{Scholkopf01:LKS}
Bernhard Sch\"olkopf and Alexander Smola.
\newblock {\em Learning with Kernels: Support Vector Machines, Regularization,
  Optimization, and Beyond}.
\newblock MIT Press, Cambridge, MA, USA, 2002.

\bibitem{Serfling80:Approximation}
Robert Serfling.
\newblock {\em Approximation theorems of mathematical statistics}.
\newblock John Wiley \& Sons, 1980.

\bibitem{Simon-Gabriel18:KDE}
Carl-Johann Simon-Gabriel and Bernhard Sch{{\"o}}lkopf.
\newblock Kernel distribution embeddings: Universal kernels, characteristic
  kernels and kernel metrics on distributions.
\newblock {\em JMLR}, 19(44):1--29, 2018.

\bibitem{Singh19:KIV}
Rahul Singh, Maneesh Sahani, and Arthur Gretton.
\newblock Kernel instrumental variable regression.
\newblock In {\em NeurIPS 32}, pages 4593--4605. Curran Associates, Inc., 2019.

\bibitem{Sjolander19:ivtools}
Arvid Sjolander and Torben Martinussen.
\newblock Instrumental variable estimation with the {R} package ivtools.
\newblock {\em Epidemiologic Methods}, 8(1), 2019.

\bibitem{Song2013}
Le~Song, Kenji Fukumizu, and Arthur Gretton.
\newblock Kernel embeddings of conditional distributions: A unified kernel
  framework for nonparametric inference in graphical models.
\newblock {\em {IEEE} Signal Processing Magazine}, 30(4):98--111, 2013.

\bibitem{Song10:KCOND}
Le~Song, Jonathan Huang, Alex Smola, and Kenji Fukumizu.
\newblock {H}ilbert space embeddings of conditional distributions with
  applications to dynamical systems.
\newblock In {\em Proceedings of the 26th ICML (ICML)}, pages 961--968, June
  2009.

\bibitem{Sriperumbudur11:Universal}
Bharath~K. Sriperumbudur, Kenji Fukumizu, and Gert R.~G. Lanckriet.
\newblock Universality, characteristic kernels and rkhs embedding of measures.
\newblock {\em JMLR}, 12:2389--2410, July 2011.

\bibitem{Steinwart02:Universal}
Ingo Steinwart.
\newblock On the influence of the kernel on the consistency of support vector
  machines.
\newblock {\em JMLR}, 2:67--93, March 2002.

\bibitem{steinwart2008support}
Ingo Steinwart and Andreas Christmann.
\newblock {\em Support vector machines}.
\newblock Springer Science \& Business Media, 2008.

\bibitem{steinwart2009optimal}
Ingo Steinwart, Don~R Hush, Clint Scovel, et~al.
\newblock Optimal rates for regularized least squares regression.
\newblock In {\em COLT}, pages 79--93, 2009.

\bibitem{uehara2020minimax}
Masatoshi Uehara, Jiawei Huang, and Nan Jiang.
\newblock Minimax weight and q-function learning for off-policy evaluation.
\newblock In {\em ICML}, pages 9659--9668. PMLR, 2020.

\bibitem{Vaart00:Asymptotic}
AW~Van~der Vaart.
\newblock {\em Asymptotic Statistics}.
\newblock Cambridge University Press, 2000.

\bibitem{vanweak}
AW~Van~der Vaart and JA~Wellner.
\newblock Weak convergence and empirical processes, 1996.

\bibitem{Vapnik98:SLT}
Vladimir~N. Vapnik.
\newblock {\em Statistical Learning Theory}.
\newblock Wiley-Interscience, 1998.

\bibitem{vehtari2016bayesian}
Aki Vehtari, Tommi Mononen, Ville Tolvanen, Tuomas Sivula, and Ole Winther.
\newblock Bayesian leave-one-out cross-validation approximations for {G}aussian
  latent variable models.
\newblock {\em The JMLR}, 17(1):3581--3618, 2016.

\bibitem{wainwright2019high}
Martin~J Wainwright.
\newblock {\em High-dimensional statistics: A non-asymptotic viewpoint},
  volume~48.
\newblock Cambridge University Press, 2019.

\bibitem{NIPS2000_1866}
Christopher K.~I. Williams and Matthias Seeger.
\newblock Using the {N}ystr\"{o}m method to speed up kernel machines.
\newblock In {\em NeurIPS 13}, pages 682--688. MIT Press, 2001.

\bibitem{wong2018kernel}
Raymond~KW Wong and Kwun Chuen~Gary Chan.
\newblock Kernel-based covariate functional balancing for observational
  studies.
\newblock {\em Biometrika}, 105(1):199--213, 2018.

\bibitem{zhang2021instrument}
Rui Zhang, Krikamol Muandet, Bernhard Sch{\"o}lkopf, and Masaaki Imaizumi.
\newblock Instrument space selection for kernel maximum moment restriction.
\newblock {\em arXiv preprint arXiv:2106.03340}, 2021.

\bibitem{zhou2002covering}
Ding-Xuan Zhou.
\newblock The covering number in learning theory.
\newblock {\em Journal of Complexity}, 18(3):739--767, 2002.

\end{thebibliography}

\end{document}